\RequirePackage{pgf,tikz}

\documentclass[10pt,a4paper]{article}
\usepackage[latin1]{inputenc}
\usepackage[english]{babel}
\usepackage[none]{hyphenat}
\sloppy  %% this actually gives better linebreaks, minimising overhanging text
\usepackage[T1]{fontenc}
\usepackage[labelfont=bf]{caption}  % bold labels for Fig. X etc
\usepackage[absolute]{textpos}

\usepackage[left=1.5cm,right=1.5cm,top=1.5cm,bottom=1.5cm]{geometry}

\usepackage[pdfborder={0 0 0},colorlinks=true,urlcolor=blue,linkcolor=blue,citecolor=blue,bookmarks=true]{hyperref}

\hypersetup{pdftitle={Solving Travelling Thief Problems using Coordination Based Methods}}
\hypersetup{pdfauthor={Majid Namazi, M.A. Hakim Newton, Conrad Sanderson, Abdul Sattar}}

\usepackage{newtxtext}     % replacement for times package
\usepackage{inconsolata}   % changes the default typewriter font

% disable ligatures such as "fi" being joined into one symbol
\usepackage{microtype}
\DisableLigatures[f]{encoding = *, family = * }

\usepackage{algorithm}
\usepackage{adjustbox}
\usepackage[noEnd]{algpseudocodex}
\usepackage{graphicx}
\usepackage{textcomp}
\usepackage{verbatim}
\usepackage{amsmath}
\usepackage{amssymb}
\usepackage{amsthm}
\usepackage{amsfonts}
\usepackage{url}
\usepackage{multirow}
\usepackage{multicol}
\usepackage{titlesec}

\usetikzlibrary{calc}

\theoremstyle{thmstylethree}%
\newtheorem{lemma}{Lemma}%
\newtheorem{definition}{Definition}%

\algnewcommand\LeftComment[1]{%
\textcolor{gray}{$\triangleright$~{#1}} \hfill %
}

%\raggedbottom
%%\unnumbered% uncomment this for unnumbered level heads

\newcommand{\ignore}[1]{}

% \normalbaroutside

\begin{document}

\begin{textblock}{13.4}(1.3,14.9)
\hrule
\vspace{1ex}
\noindent
\scriptsize
\textbf{{$^\ast$}~Published in:} Journal of Heuristics, 2023. DOI:~\href{https://doi.org/10.1007/s10732-023-09518-7}{\tt 10.1007/s10732-023-09518-7}
\end{textblock}

\renewcommand{\baselinestretch}{1.1}\small\normalsize

\pagestyle{empty}

\begin{center}

\scalebox{1.6}{\bf Solving Travelling Thief Problems using Coordination Based Methods}

\vspace{2ex}

Majid Namazi{\tiny~}{$^{1,2}$},
M.A. Hakim Newton{\tiny~}{$^{2,3}$}, 
Conrad Sanderson{\tiny~}{$^{1,2}$},
Abdul Sattar{\tiny~}{$^{2}$}

\vspace{2ex}

\begin{small}
\begin{tabular}{l}
%% see note above
$^{1}$~Data61~/~CSIRO, Australia\\
$^{2}$~Griffith University, Australia\\
$^{3}$~University of Newcastle, Australia\\
\end{tabular}
\end{small}

\end{center}

\hrule
\vspace{-2ex}
\section*{Abstract}
\vspace{-1ex}

%\begin{small}
A travelling thief problem (TTP) is a proxy to real-life problems such as postal collection. TTP 
comprises an entanglement of a travelling salesman problem (TSP) and a knapsack problem (KP) since 
items of KP are scattered over cities of TSP, and a thief has to visit cities to collect items.
In~TTP, city selection and item selection decisions need close coordination since the thief's 
travelling speed depends on the knapsack's weight and the order of visiting cities affects the order 
of item collection. Existing TTP solvers deal with city selection and item selection separately, 
keeping decisions for one type unchanged while dealing with the other type. This separation 
essentially means very poor coordination between two types of decision. In this paper, we first show 
that a simple local search based coordination approach does not work in TTP. Then, to address the 
aforementioned problems, we propose a human designed coordination heuristic that makes changes to 
collection plans during exploration of cyclic tours. We further propose another human designed 
coordination heuristic that explicitly exploits the cyclic tours in item selections during 
collection plan exploration. Lastly, we propose a machine learning based coordination heuristic that 
captures characteristics of the two human designed coordination heuristics. Our proposed 
coordination based approaches help our TTP solver significantly outperform existing state-of-the-art 
TTP solvers on a set of benchmark problems. Our solver is named Cooperation Coordination (CoCo) and 
its source code is available from \href{https://github.com/majid75/CoCo}{https://github.com/majid75/CoCo}
% \end{small}

% \keywords{Travelling Thief Problem, Travelling Salesman Problem, Knapsack Problem, Combinatorial Optimisation, Multi-Component Optimisation, Interdependent Components.}

\vspace{2ex}
\hrule

\section{Introduction}
\label{sec:introduction}

Travelling salesman problem (TSP) and knapsack problem (KP) are two well-known NP-Hard combinatorial optimisation problems. In TSP~\cite{gutin2006traveling}, a salesman performs a {\em cyclic tour} through a set of cities with a goal of minimising the length (or hence the travelling time) of the cyclic tour. In KP~\cite{inbookKP}, using a {\em collection plan}, a knapsack with a given capacity is filled in with a subset of given profitable items with a goal of maximising the total profit. 

TSP and KP are classical problems. However, real-world applications such as postal or waste collection problems \cite{mei2014improving,polyakovskiy2017packing,hannan2020waste} need more complex problem models in which both TSP and KP characteristics are intrinsically and interdependently present simultaneously. Problems in which KP items are scattered over TSP cities are modelled in various ways such as cumulative capacitated routing problem \cite{ngueveu2010effective}, orienteering problem \cite{vansteenwegen2011orienteering}, and selective/prize collecting travelling salesman problem \cite{balas2007prize,laporte1990selective}. In some of these models, instead of a single tour, multiple tours are involved  while in some other models, a city might not be visited if no item is collected from the city.

In this paper, we study a particular problem model named {\em travelling thief problem} \cite{bonyadi2013travelling} that essentially encompasses both TSP and KP characteristics in an entangled fashion. In an example problem having this kind of model, a postal truck mandatorily visits each city to collect letters. Moreover, the postal truck makes profits as it optionally collects heavy parcels from the cities. However, the gradual change in the truck load as the truck picks heavy parcels affects its travelling speed between cities, and hence affect the travelling time, fuel consumption, air pollution, and travelling cost. A solution for such a problem is a mandatory cyclic tour of cities to be visited successively with a sequence of optional services to be given at the cities such that the total profit made minus the total cost incurred is maximised.

In a travelling thief problem (TTP), a thief ({\it i}) rents a knapsack having a certain capacity at a given renting cost per unit of time, ({\it ii}) performs a cyclic tour through a set of cities, and ({\it iii}) collects a subset of profitable items in the knapsack with the objective of maximising the net profit, which equals total profit minus total cost. The entanglement of TSP and KP in TTP comes from two factors: ({\it i}) As the thief collects more items, the knapsack gets heavier, the thief gets slower, the tour takes longer time, and the knapsack rent goes up; and ({\it ii}) the order of cities in the tour affects the order of items that could be collected without exceeding the knapsack capacity. TTP is a multi-component problem since it has both TSP and KP as components. Solving such multi-component problems is more challenging because finding an optimal overall solution to a multi-component problem cannot be guaranteed by simply finding an optimal solution to each underlying component~\cite{michalewicz2012quo,mei2016investigation,bonyadi2019evolutionary}.

\newpage
TTP solving methods have obtained some progress over the years but further improvement is needed. Here we summarise five types of TTP methods in the context of the paper but a detailed exploration is presented later. Constructive methods \cite{polyakovskiy2014comprehensive,bonyadi2014socially,mei2014improving,faulkner2015approximate} use Chained Lin-Kernighan heuristic~\cite{applegate2003chained} to get a cyclic tour and then use various heuristics to construct a collection plan. Fixed tour methods \cite{maity2020efficient,polyakovskiy2014comprehensive,faulkner2015approximate,wu2017exact,polyakovskiy2017packing} generate cyclic tours like constructive methods and then use exact or approximate methods to find collection plans. Cooperative methods \cite{bonyadi2014socially,el2016population,wagner2017case,el2018efficiently,namazi2020surrogate,zhang2021solving} iteratively alternate between a search for a cyclic tour and another search of a collection plan, keeping one of the two unchanged while searching for the other until no further improvement. Full-encoding methods \cite{mei2014improving,wagner2016stealing,el2016population,el2017local,wuijts2019investigation} deals with the entire TTP problem at a time using cyclic tour and collection plan changing operators within the same search framework. Hyper-heuristic methods \cite{ali2020hyper,mei2015heuristic,martins2017hseda,el2018hyperheuristic} generate or select low level heuristics as neighbourhood operators for cyclic tours or collection plans and use them in search.

Given the TTP literature summarised above, one aspect common among all approaches is that the search for one component's solution (cyclic tour or collection plan) takes only the other component's unchanged current solution (collection plan or cyclic tour) into account. Moreover, some approaches adopts an iterative strategy to alternate between the aforementioned neighbourhood operators for the two components. However, even these approaches might not really help get the overall search direction best for solving the entire multi-component problem. The reason is for the best solution of the entire problem, solutions for the two components, at the same time, should mutually best correspond to each other. Note that the coordination issue has been partially addressed by exact evaluation of the collection plans or city selections in related problems other than TTP and such problems include generalized traveling salesman problem \cite{bontoux2010memetic}, cumulative capacitated vehicle routing problem \cite{ngueveu2010effective}, and vehicle routing problems with profits \cite{vidal2016large}. However, such exact evaluation via dynamic programming or labelling algorithms is usually costly and do not scale well for large problems. Ideally, a heuristic based cheaper but strong coordination method is needed between solving methods for the TSP and KP components in TTP meaning any considerable change on one component's solution should take into account the other component's all possible future solutions subject to the one component's same solution.

In this paper, we first show that even a simple local search based coordination approach, let alone an exact evaluation based approach, is not effective in addressing the poor coordination issue in existing TTP methods. Then, we propose a human designed coordination heuristic that makes changes to collection plans during exploration of cyclic tours. We also propose another human designed coordination heuristic that explicitly exploits the cyclic tours in item selections during collection plan exploration. We further propose a machine learning based coordination heuristic that captures characteristics of the two human designed coordination heuristics. Our proposed coordination heuristics  explore potentially better TTP solutions than the approaches exhibiting poor coordination. We empirically evaluate the effectiveness of our proposed approaches. On a set of benchmark problems, our proposed approaches help our coordination based TTP solver significantly outperform existing state-of-the-art TTP solvers. Our TTP solver is named Cooperation Coordination (CoCo) and it is available from \url{https://github.com/majid75/CoCo}.

We note that this paper thoroughly extends our previous preliminary work \cite{namazi2019pgch} that has presented an early version of our human designed cyclic tour exploration coordination heuristic. In this paper, we have considerably revised the human designed heuristic. Furthermore, we have designed two cyclic tour exploration coordination heuristics: ({\it i}) a local search based heuristic and ({\it ii}) a machine learning based heuristic. Next, we have designed a coordination based collection plan heuristic. Moreover, we have described our proposed approaches more formally and in greater details.

We continue the paper as follows.
Section~\ref{sec:prelim} covers preliminaries.
Section~\ref{sec:related} explores related work.
Section~\ref{sec:framework} describes the search framework used.
Section~\ref{sec:proposed} describes the proposed coordination heuristics.
Section~\ref{sec:experiments} presents the experimental results.
Section~\ref{sec:conclusion} presents the conclusions.

\section{Preliminaries}
\label{sec:prelim}

We formally define TSP, KP, and TTP. We describe the neighbourhood operators \textsf{2OPT} for TSP and \textsf{BitFlip} for KP. We also define the prefix minimum and suffix maximum functions to help describe TTP coordination heuristics.

\subsection{Travelling Salesman Problem}\label{sec:TSP}

Assume a set {$C = \{1,\ldots,n\}$} of $n>1$ {\em cities}. The {\em distance} between each two cities {$c\ne c'$} is {$d(c,c') = d(c',c)$}. In TSP, a salesman starts travelling from city 1 and visits each other city exactly once and returns back to city 1. The salesman thus completes a non-overlapping cyclic tour through all cities. For a given set $C$ of cities, assume $t = \langle t_0, t_1, \ldots, t_n\rangle$ is a {\em cyclic tour} with $t_0 = t_n = 1$, and $t_k=c$ iff $t(c)=k$, where $c \in C \setminus \{1\}$ is a city and $k\in[1,n-1]$ is a {\em position}. No city in $C \setminus \{1\}$ can be visited more than once in a cyclic tour $t$. So we have $t_k \ne t_{k'}$ for any $k \ne k'$ where $k,k'\in[1,n-1]$. Given a cyclic tour $t$, the {\em total distance} travelled by the salesman is $D(t) = \sum_{k= 0}^{k<n}d(t_{k},t_{k+1})$.

\begin{definition}[TSP]
Given a set $C$ of cities, distance $d(c,c')$ between each pair of cities $c\ne c'$, find a cyclic tour $t$ for a salesman such that the objective total distance $D(t)$ is minimised. Note that the objective could be the total travelling time if the travelling speed is constant between each two cities.
\end{definition}

\newpage
Given a cylic tour $t$, a {\em tour segment} $t[b,e] = \langle t_{b}, \ldots, t_{e}\rangle$ of {\em length} $|t[b,e]| = e-b+1$ with $0 < b < e< n$ comprises cities in $t$ between positions $b$ and $e$ both inclusive. In TSP, a tour segment reversal operator {2OPT}~\cite{croes1958method} is often used in generating a neighbouring tour from a given tour.

\begin{definition}[{2OPT}]
Given a cyclic tour $t$ and positions $b$ and $e$ such that $0 < b < e < n$, a {2OPT}$(t,b,e)$ operator reverses the tour segment $t[b,e]$ of length $|e-b+1|$. So {2OPT} essentially reverses the order of cities between positions {$b$} and {$e$} to produce a new tour~{$t'$}. Thus, {$t'_{b+k} = t_{e-k}$} is obtained for {$0 \leq k \leq e - b$} taking $\mathcal{O}(e-b)$ time. Any other city at position $k \not\in [b,e]$ remains at the same position, i.e. $t'_k = t_k$.
\end{definition}

\begin{lemma}[{2OPT} for TSP]
Given a cyclic tour $t$ in TSP, a {2OPT}$(t,b,e)$ operator produces a new cyclic tour $t'$ for which computing $D(t') = D(t) + d(t_{b-1},t_{e})+ d(t_{b}, t_{e + 1}) -d(t_{b-1},t_{b}) - d(t_{e},t_{e+1})$ takes $\mathcal{O}(1)$ time, when $D(t)$ is already known.
\end{lemma}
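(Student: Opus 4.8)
The plan is to decompose $D(t)$ according to which tour edges the reversal actually touches, observe that all but a constant number of them are literally unchanged, and then read off the closed form by direct substitution.

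First I would recall $D(t) = \sum_{k=0}^{n-1} d(t_k,t_{k+1})$ and identify the block of consecutive edges affected by $\textsf{2OPT}(t,b,e)$, namely those of the form $(t_k,t_{k+1})$ for $k \in \{b-1,b,\dots,e\}$; these indices are legitimate because $0<b<e<n$ forces $b-1\ge 0$ and $e+1\le n$. Every edge with $k\le b-2$ or $k\ge e+1$ has both endpoints at positions outside $[b,e]$, so by the definition of $\textsf{2OPT}$ (which fixes $t'_k=t_k$ for $k\notin[b,e]$) such edges are identical in $t$ and $t'$ and cancel when forming $D(t')-D(t)$. It remains only to compare the affected block in the two tours.

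Next I would evaluate that block in $t'$ using $t'_{b+j}=t_{e-j}$ for $0\le j\le e-b$, together with $t'_{b-1}=t_{b-1}$ and $t'_{e+1}=t_{e+1}$. The two boundary edges become $d(t'_{b-1},t'_b)=d(t_{b-1},t_e)$ and $d(t'_e,t'_{e+1})=d(t_b,t_{e+1})$, while for the interior edges a reindexing ($m=e-j-1$) combined with the symmetry $d(c,c')=d(c',c)$ shows $\sum_{k=b}^{e-1} d(t'_k,t'_{k+1}) = \sum_{k=b}^{e-1} d(t_k,t_{k+1})$, i.e. the interior contribution is unchanged. Subtracting the original block sum then gives exactly $D(t') = D(t) + d(t_{b-1},t_e) + d(t_b,t_{e+1}) - d(t_{b-1},t_b) - d(t_e,t_{e+1})$. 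Finally, since $t'$ is obtained from $t$ by a segment reversal it remains a permutation of the cities with $t'_0=t'_n=1$, hence a valid cyclic tour, and the right-hand side requires only four distance look-ups plus a bounded number of additions once $D(t)$ is known, so the update is $\mathcal{O}(1)$.

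There is no deep difficulty here; the only place demanding care is the bookkeeping in the interior sum — pairing each reversed edge with the correct original edge via the symmetry of $d$, and confirming that the affected-edge index range $\{b-1,\dots,e\}$ counts the two boundary edges exactly once. Everything else follows directly from the definitions of $D$ and of $\textsf{2OPT}$.
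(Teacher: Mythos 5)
Your proof is correct and follows essentially the same route as the paper's: it observes that only the two boundary edges $(t_{b-1},t_b)$ and $(t_e,t_{e+1})$ are replaced by $(t_{b-1},t_e)$ and $(t_b,t_{e+1})$, while the interior edges of the reversed segment are unchanged (using the symmetry of $d$), yielding the stated formula in constant time. Your version just spells out the reindexing bookkeeping more explicitly than the paper does.
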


\ignore{ %ignored after minor revision review

\begin{proof}
The distance between cities at positions $b - 1$ and $b$ in $t$ is replaced by the distance between cities at positions $b-1$ and $e$ in $t$. Similarly, the distance between cities at positions $e$ and $e + 1$ in $t$ is replaced by the distance between cities at positions $b$ and $e+1$ in $t$. The distance of any other city at position $k : b < k < e$ from its succeeding or preceding city remains the same.
\end{proof}
}%ignore

\subsection{Knapsack Problems}\label{sec:KP}

Assume a set {$I = \{1,\ldots,m\}$} of {$m>0$} {\em items}. Each item has {\em weight} {$w_i> 0$} and {\em profit} {$\pi_i > 0$}. Assume $p = \langle p_1, p_2, \ldots, p_m \rangle \equiv \{ i:p_i = 1\}$ is a {\em collection plan} with $p_i \in \{0,1\}$ for each item $i$, where $p_i = 1$ means $i$ is a {\em collected item} and $p_i = 0$ means $i$ is an {\em uncollected item}. Assume the knapsack has {\em weight capacity} $W>0$. For a given collection plan $p$, the {\em total weight} of the knapsack is $W(p) = \sum^{i=m}_{i=1}w_ip_i$, the {\em knapsack constraint} is $K(p)\equiv W(p) \leq W$, and the {\em total profit} of the collected items is $P(p) = \sum_{i=1}^{i=m}\pi_ip_i$.

\begin{definition}[KP]
Given a set $I$ of items with weight $w_i$ and profit $\pi_i$ for each item $i$ and also the knapsack capacity $W$, find a collection plan $p$ such that the objective total profit $P(p)$ is maximised subject to the knapsack constraint $K(p) \equiv W(p) \leq W$.
\end{definition}

In KP, an item selection operator {BitFlip}~\cite{polyakovskiy2014comprehensive} is often used in generating a neihbouring collection plan from a given one.

\begin{definition}[{BitFlip}]
Given a collection plan $p$ and an item $i$, a {BitFlip}$(p,i)$ operator flips $p_i$ from $0$ to $1$ or vice versa to produce a new collection plan $p'$ taking $\mathcal{O}(1)$ time.
\end{definition}

\begin{lemma}[{BitFlip} for KP]
Given a collection plan $p$ in KP, a {BitFlip}$(p,i)$ operator produces a new collection plan $p'$ with $P(p') = P(p) + \pi_i\times(p'_i - p_i)$. Here, computation of $P(p')$ takes $\mathcal{O}(1)$ time when $P(p)$ is already known.
\end{lemma}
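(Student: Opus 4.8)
The plan is to prove the claimed formula $P(p') = P(p) + \pi_i \times (p'_i - p_i)$ directly from the definition of the total profit function, and then observe that everything on the right-hand side is already available in $\mathcal{O}(1)$ time. First I would write out $P(p') = \sum_{j=1}^{m} \pi_j p'_j$ and $P(p) = \sum_{j=1}^{m} \pi_j p_j$, and subtract: $P(p') - P(p) = \sum_{j=1}^{m} \pi_j (p'_j - p_j)$. Since a \textsf{BitFlip}$(p,i)$ operator by definition changes only the $i$-th coordinate, we have $p'_j = p_j$ for all $j \ne i$, so every term in the sum vanishes except the $j = i$ term, leaving $P(p') - P(p) = \pi_i (p'_i - p_i)$. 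Rearranging gives the stated identity.

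For the complexity claim, I would note that if $P(p)$ is already known, then computing $P(p')$ requires only: a lookup of the profit $\pi_i$, a lookup of the two bits $p_i$ and $p'_i$ (equivalently, since $p'_i = 1 - p_i$, the correction term is simply $+\pi_i$ if the bit went from $0$ to $1$ and $-\pi_i$ if it went from $1$ to $0$), one subtraction, one multiplication, and one addition. Each of these is a constant-time operation, so the total is $\mathcal{O}(1)$. This mirrors exactly the structure of the already-established \textsf{2OPT} for TSP lemma, where a local move is shown to permit an incremental $\mathcal{O}(1)$ objective update given the old objective value.

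Honestly, there is no real obstacle here — the statement is essentially a one-line telescoping/cancellation argument, directly analogous to the (currently commented-out) proof of the \textsf{2OPT} lemma. The only thing worth being slightly careful about is making the quantifier explicit: one should state clearly that the definition of \textsf{BitFlip} guarantees $p'_j = p_j$ for $j \ne i$, since that is precisely what collapses the sum. If one wanted to be fully pedantic, one could also remark that $p'_i - p_i \in \{-1, +1\}$ because $p_i, p'_i \in \{0,1\}$ and $p'_i \ne p_i$, which makes the $\mathcal{O}(1)$ update concrete as an add-or-subtract of $\pi_i$, but this refinement is not needed for the lemma as stated.
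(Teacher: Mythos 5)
Your proof is correct: the telescoping argument via $p'_j = p_j$ for $j \ne i$ is exactly the standard one, and the paper itself omits any proof of this lemma, treating it as immediate in just this way (mirroring the commented-out \textsf{2OPT} argument). Nothing is missing.
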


For convenience of exposition and for the sake of formality, below we define pick and unpick operations for collection plans in KP.

\begin{definition}[Pick and Unpick]
Given a collection plan $p$ in KP, {\em picking} an item $i$ is when $p_i = 0$ and {BitFlip}$(p,i)$ is applied. Moreover, {\em unpicking} an item $i$ is when $p_i = 1$ and {BitFlip}$(p,i)$ is applied on a collection $p$.
\end{definition}

\subsection{Travelling Thief Problems}\label{sec:TTP}

We start with all the notations and terminologies used for TSP and KP in Sections~\ref{sec:TSP} and \ref{sec:KP} respectively. However, the salesman in TSP is viewed as the thief in TTP, the items in KP are scattered over the cities in TTP, and the thief travels around to collect the items. Moreover, the travelling speed in TTP gets slower as the theif collects items and the knapsack gets heavier.

 Assume, in TTP, each item {$i$} is collected from a city $l_i$ and a city $c$ has a set $I(c) = \{i : l_i = c\}$ of items. However, the designated city 1, where the cyclic tour $t$ of the thief starts from and ends at, arguably does not have any item since such an item could be collected without any travelling. So, $l_i > 1$ for any item $i$ and $I(1) = \{\}$. A TTP {\em solution} $\langle t, p\rangle$ comprises a cyclic tour $t$ and a collection plan $p$. An item $i$ in a city $l_i$ has a position $t(l_i)$ in a cyclic tour $t$.

Assume the thief in TTP rents a knapsack of {\em weight capacity} $W>0$ at a {\em renting rate} of $R>0$ per unit of time. For a given collection plan $p$, also assume the total weight of the items collected from city {$c$} is $w_p(c) = \sum_{i \in I(c)}w_ip_i$. Further, assume that $w_{t, p}(k) = \sum^{k'=k}_{k'=0}w_p(t_{k'})$ denotes the weight of the knapsack after collecting items from cities up to position $k$ in the tour $t$ using a collection plan $p$ for a TTP solution $\langle t, p\rangle$. Assume a {\em speed function} $s(w) = s_{\max} - \frac{w}{W} \times (s_{\max} - s_{\min})$ for the current knapsack weight $w \leq W$, where the given maximum and minimum speed limits of the thief are $s_{\max}$ and $s_{\min}$ respectively with $s_{\max}\geq s_{\min}$. So for a TTP solution $\langle t, p\rangle$, the thief travels from city $t_k$ to $t_{k+1}$ with the knapsack weight $w_{t,p}(k)$ and with a travelling speed $s_{t,p}(k) = s(w_{t,p}(k))$. Moreover, the travelling time up to the position $k$ in the cyclic tour $t$ is $\tau_{t,p}(k) = \sum_{k'= 0}^{k'<k}d(t_{k'},t_{k'+1})/s_{t,p}(k')$ and the {\em total travelling time} is $T(t,p) = \tau_{t,p}(n) = \sum_{k = 0}^{k<n}d(t_{k},t_{k+1})/s_{t,p}(k)$. Hence, the {\em total renting cost} of the knapsack is $R(t,p) = R\times T(t,p)$, and so the {\em net profit} is $N(t,p) = P(p) - R(t,p)$. In TTP, we have to maximise the objective $N(t,p)$ over all possible cyclic tours $t$ and all possible collection plans $p$ subject to the knapsack constraint $K(p)\equiv W(p) \leq W$.

\begin{definition}[TTP]
Given a set $C$ of cities, a set $I$ of items, distance $d(c,c')$ between each pair of cities $c\ne c'$, weight $w_i$ and profit $\pi_i$ for each item $i$ available in city $l_i$, the knapsack capacity $W$, the knapsack renting rate $R$, a speed function $s(w)$ with $s_{\max}$ and $s_{\min}$ as the maximum and the minimum speeds respectively, find a solution $\langle t, p\rangle$ comprising a cyclic tour $t$ and a collection plan $p$ such that the objective $N(t,p)$ is maximised subject to the knapsack constraint $K(p)$.
\end{definition}

\figurename~\ref{fig:TTP} shows a TTP example, a solution, and the objective computation.

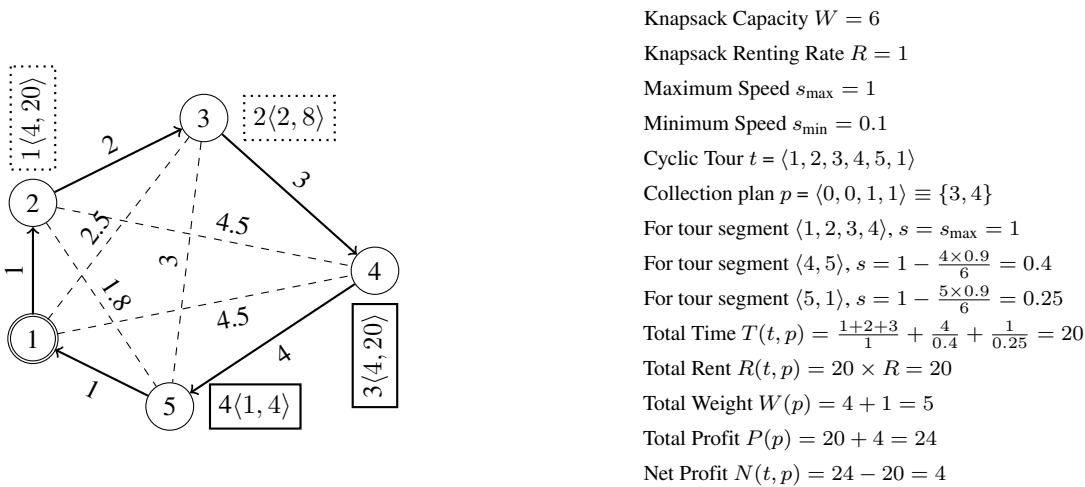
\begin{figure}[!b]
\begin{tabular}{lr}
\begin{minipage}{0.45\textwidth}
\begin{tikzpicture}[scale=0.9]
    \node[draw,circle,double] (1) at (0,0) {1};
    \node[draw,circle] (2) at (0,2) {2};
    \node[draw,circle] (3) at (2.5,3.25) {3};
    \node[draw,circle] (4) at (5,1) {4};
    \node[draw,circle] (5) at (2,-1) {5};

    \node[draw,thick,dotted,rotate=90] at ($(2) + (0,1.25)$) {$1\langle 4, 20\rangle$};
    \node[draw,thick,dotted] at ($(3) + (1.25,0)$) {$2\langle 2, 8\rangle$};
    \node[draw,thick,rotate=90] at ($(4) + (0,-1.25)$) {3$\langle 4, 20\rangle$};
    \node[draw,thick] at ($(5) + (1.25,0)$) {4$\langle 1, 4\rangle$};

    \draw[->,thick] (1) -- node [above,sloped] {1} (2);
    \draw[->,thick] (2) -- node [above,sloped] {2} (3);
    \draw[->,thick] (3) -- node [above,sloped] {3} (4);
    \draw[->,thick] (4) -- node [below,sloped] {4} (5);
    \draw[->,thick] (5) -- node [below,sloped] {1} (1);

    \draw[dashed] (1) -- node [above right,sloped,pos=0.33] {2.5} (3);
    \draw[dashed] (1) -- node [below right,sloped] {4.5} (4);
    \draw[dashed] (2) -- node [above right,sloped] {4.5} (4);
    \draw[dashed] (2) -- node [above,sloped] {1.8} (5);
    \draw[dashed] (3) -- node [above,sloped] {3} (5);
\end{tikzpicture}
\end{minipage}
&
\begin{minipage}{0.55\textwidth}
\begin{footnotesize}
Knapsack Capacity $W = 6$\\
Knapsack Renting Rate $R = 1$\\
Maximum Speed $s_\textrm{max} = 1$\\
Minimum Speed $s_\textrm{min} = 0.1$\\
Cyclic Tour $t$ = $\langle 1,2,3,4,5,1 \rangle$\\
Collection plan $p$ = $\langle 0, 0, 1, 1\rangle \equiv \{3, 4\}$\\
For tour segment $\langle 1, 2, 3, 4\rangle$, $s = s_\textrm{max} = 1$\\
For tour segment $\langle 4,5\rangle$, $s = 1 - \frac{4\times 0.9}{6} = 0.4$\\
For tour segment $\langle 5,1\rangle$, $s = 1 - \frac{5\times 0.9}{6} = 0.25$\\
Total Time $T(t,p) = \frac{1 + 2 + 3}{1} + \frac{4}{0.4} + \frac{1}{0.25} = 20$\\
Total Rent $R(t,p) = 20 \times R = 20$\\
Total Weight $W(p) = 4 + 1 = 5$\\
Total Profit $P(p) = 20 + 4 = 24$\\
Net Profit $N(t,p) = 24 - 20 = 4$
\end{footnotesize}
\end{minipage}
\end{tabular}
\caption{Left: a TTP instance having 5 cities (circles) and 4 items (rectangles) and a TTP solution with the travelled cyclic path (solid lines) and collected items (solid rectangles). Eacy city has a city index. City 1 (double circle) is the designated city to start from and end to. Each item has an item index and a tuple for weight and profit. Lines have distances as labels. Dashed lines are not in the travelled path and dotted rectangles are for items not collected. Right: other required parameters of the TTP instances along with the calculation  of the net profit for the TTP solution.\label{fig:TTP}}

\end{figure}

Although we do not claim any contribution, we prove relations of TTP with TSP and KP and show that TTP is NP-Hard. For this, we show how TSP and KP could be reduced to TTP. Note that there are many ways to get the reductions, but we just show one example for each case.

\begin{lemma}[TSP Reduction]\label{lem:TSPTTP}
Solving a TSP is equivalent to solving a TTP when for the speed function, $s_{\max} = s_{\min}$ and the knapsack weight capacity $W\geq \sum_{i=1}^{i=m}w_i$, i.e. the knapsack is sufficiently large to hold all items.
\end{lemma}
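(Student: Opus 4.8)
The plan is to exhibit TTP as a strict generalisation of TSP by choosing the TTP parameters so that the knapsack and speed components become inert, and then argue that the net-profit objective, restricted to this parameter regime, is an order-preserving transformation of the TSP objective. Concretely, given an arbitrary TSP instance $(C, d)$, I would build a TTP instance on the same city set $C$ and the same distances $d$, with an arbitrary nonempty item set $I$ placed in cities $C\setminus\{1\}$ (or even the empty item set, if the definition allows $m=0$; to be safe I would just take any one item in city $2$), set $s_{\max}=s_{\min}$ to some constant $s>0$, and set $W \geq \sum_{i=1}^{i=m} w_i$ so the knapsack constraint $K(p)$ is satisfied by \emph{every} collection plan $p$, in particular by the full plan.

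The key steps, in order: First, observe that with $s_{\max}=s_{\min}=s$ the speed function is the constant $s(w)=s$ for all $w\leq W$, so $s_{t,p}(k)=s$ for every position $k$ and every $\langle t,p\rangle$, and hence $T(t,p)=\sum_{k=0}^{k<n} d(t_k,t_{k+1})/s = D(t)/s$, independent of $p$. Second, conclude $R(t,p)=R\,D(t)/s$ and therefore $N(t,p)=P(p) - R\,D(t)/s$. Third, note that since $W\geq\sum_i w_i$, the knapsack constraint is vacuous, so the collection plan $p$ can be chosen freely and independently of $t$; maximising $N(t,p)$ over feasible $\langle t,p\rangle$ therefore decouples into maximising $P(p)$ over all $p$ (achieved by the full plan, giving the constant $\sum_i \pi_i$) and, separately, \emph{minimising} $D(t)$ over cyclic tours $t$, because of the minus sign in front of $R\,D(t)/s$ and the positivity $R,s>0$. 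Fourth, that inner minimisation is exactly the TSP objective from Definition~(TSP). Hence an optimal TTP solution $\langle t^\ast,p^\ast\rangle$ has $t^\ast$ an optimal TSP tour (and $p^\ast$ the full plan), and conversely any optimal TSP tour $t^\ast$ paired with the full plan is an optimal TTP solution; moreover the argmax in $t$ is unchanged and any algorithm solving one yields a solution to the other in linear overhead, which is the sense of ``equivalent'' intended here.

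I would also remark, as the lemma's framing suggests (``we prove relations of TTP with TSP'' and ``TTP is NP-Hard''), that this reduction is polynomial-time and maps TSP instances to TTP instances, so NP-hardness of TSP transfers to TTP; I'd state this either as part of this proof or defer it to the companion KP-reduction lemma. The main obstacle is really just a matter of care rather than depth: being precise about what ``equivalent'' means (same optimal tour / mutual polynomial-time reducibility, not literally the same objective \emph{value}), and handling the mild edge cases in the definitions --- ensuring the constructed instance is legal (at least one item is required if $m>0$ is mandatory, and items must sit in cities with index $>1$), and noting that the decoupling relies crucially on both $s_{\max}=s_{\min}$ (to kill the $p$-dependence of travel time) and $W\geq\sum_i w_i$ (to kill the $t$--$p$ coupling through the capacity constraint). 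Once those are pinned down, the argument is a short chain of substitutions into the definitions of $T$, $R$, and $N$ with no real computation.
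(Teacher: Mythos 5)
Your proof is correct and follows essentially the same route as the paper's: with $s_{\max}=s_{\min}$ the travelling time reduces to $D(t)/s$ independently of $p$, and with $W\geq\sum_i w_i$ the full collection plan is optimal, so maximising $N(t,p)$ decouples into taking all items and minimising $D(t)$, i.e.\ solving the TSP. Your version merely spells out the substitutions and edge cases that the paper's three-line argument leaves implicit.
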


\begin{proof}
With $s_{\max} = s_{\min}$, the travelling speed becomes constant. With $W \geq \sum_{i=1}^{i=m}w_i$, all items must be collected for the maximum profit. So the collection plan $p$ has no impact on the cyclic tour $t$.
\end{proof}

\begin{lemma}[KP Reduction]\label{lem:KPTTP}
Solving a KP is equivalent to solving a TTP when distance $d(c,c')$ is the same for any two cities $c \ne c'$ and for the speed function, $s_{\max} = s_{\min}$ resulting into a constant speed.
\end{lemma}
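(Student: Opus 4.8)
The plan is to mirror the proof of Lemma~\ref{lem:TSPTTP}: show that under the two stated restrictions the knapsack renting cost collapses to a constant, so that the TTP objective reduces to the KP objective up to an additive constant. First I would observe that $s_{\max}=s_{\min}$ makes the speed function $s(w)$ equal to a single constant $s$ for every admissible knapsack weight $w\leq W$, hence $s_{t,p}(k)=s$ independently of the tour $t$, the collection plan $p$, and the position $k$. Next, writing $\delta$ for the common inter-city distance, every cyclic tour satisfies $D(t)=n\delta$, so $T(t,p)=\sum_{k=0}^{k<n} d(t_k,t_{k+1})/s_{t,p}(k)=n\delta/s$ for \emph{every} solution $\langle t,p\rangle$ --- a quantity depending on neither the tour nor the collection plan.

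Consequently $R(t,p)=R\,n\delta/s$ is a fixed constant, call it $\kappa$, and therefore $N(t,p)=P(p)-\kappa$ for all feasible $\langle t,p\rangle$. Maximising $N(t,p)$ over all cyclic tours $t$ and all collection plans $p$ with $K(p)\equiv W(p)\leq W$ is thus equivalent to maximising $P(p)$ subject to $K(p)$, with the tour left entirely unconstrained; this is exactly the KP. To make the reduction direction explicit, given a KP instance with items $I=\{1,\ldots,m\}$, weights $w_i$, profits $\pi_i$ and capacity $W$, I would build the TTP instance on cities $C=\{1,\ldots,m+1\}$ with item $i$ placed at city $l_i=i+1$ (so $l_i>1$ and $I(1)=\{\}$ as required), all pairwise distances equal to $1$, renting rate $R=1$, and $s_{\max}=s_{\min}=1$; by the argument above the collection-plan component of any optimal TTP solution is an optimal KP solution and conversely, so solving the TTP solves the KP.

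The argument requires essentially no computation. The only point needing a little care is the bookkeeping between the KP's ``items form a set'' view and the TTP's ``items scattered over cities'' view --- but since the cyclic tour in TTP mandatorily visits every city, every item is reachable for collection, and the knapsack constraint $W(p)\leq W$ is literally identical in the two problems; so I do not expect any real obstacle here.
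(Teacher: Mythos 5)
Your proposal is correct and follows essentially the same route as the paper: under equal inter-city distances and $s_{\max}=s_{\min}$ the total travelling time, and hence the renting cost, is a constant, so the TTP objective reduces to $P(p)$ minus a constant and the cyclic tour has no impact on the collection plan. Your explicit instance construction is additional detail the paper omits, but it does not change the underlying argument.
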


\begin{proof}
When the distance $d(c,c')$ is the same for any two cities $c \ne c'$, and the speed is always a constant during the tour, the total travelling time is always the same. So the cyclic tour $t$ has no impact on the collection plan $p$.
\end{proof}

As per Lemmas~\ref{lem:TSPTTP} and \ref{lem:KPTTP}, TSP and KP are both special cases of TTP. So we have the following lemma, which is also mentioned in~\cite{mei2014improving}.

\begin{lemma}[TTP Complexity]
TTP is NP-Hard since TSP and KP are NP-Hard.
\end{lemma}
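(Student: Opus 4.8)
The statement to prove is the TTP Complexity lemma: TTP is NP-Hard since TSP and KP are NP-Hard.

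The plan is straightforward. We have just established Lemmas \ref{lem:TSPTTP} and \ref{lem:KPTTP}, which exhibit polynomial-time reductions from TSP (respectively KP) to TTP: given a TSP instance, we construct a TTP instance by setting $s_{\max} = s_{\min}$ and choosing the knapsack capacity $W \geq \sum_{i=1}^{i=m} w_i$ (with, say, a single dummy item or no items at all), and the optimal TTP solution's cyclic tour is an optimal TSP tour. Symmetrically, given a KP instance, we build a TTP instance with equal pairwise distances and $s_{\max} = s_{\min}$, so that the optimal TTP collection plan solves the KP. The plan is to invoke these two reductions directly.

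The key steps, in order, are: (i) recall that both TSP and KP are known NP-Hard problems (cited as \cite{gutin2006traveling} and \cite{inbookKP}); (ii) observe that the constructions in Lemmas \ref{lem:TSPTTP} and \ref{lem:KPTTP} are computable in polynomial time in the size of the input instance — they merely copy the data and fix a few parameters — and that they are valid many-one (Karp) reductions, i.e. the TTP optimum yields the TSP (or KP) optimum; (iii) conclude that since an NP-Hard problem reduces in polynomial time to TTP, TTP is itself NP-Hard. Either reduction alone suffices; citing both simply strengthens the point that TTP inherits hardness from each of its two components.

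I do not anticipate any real obstacle here, since the heavy lifting was already done in the two preceding lemmas. The only point that warrants a sentence of care is making explicit that the reductions are polynomial-time and that the decision-problem versions (``is there a solution of net profit at least $N$?'', ``is there a tour of length at most $D$?'') correspond appropriately, so that NP-Hardness in the formal sense transfers. This is routine, so the proof is essentially a one-line appeal to Lemmas \ref{lem:TSPTTP} and \ref{lem:KPTTP} together with the known hardness of TSP and KP.
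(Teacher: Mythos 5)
Your proposal is correct and follows essentially the same route as the paper, which likewise derives the lemma directly from Lemmas~\ref{lem:TSPTTP} and \ref{lem:KPTTP} showing TSP and KP to be special cases of TTP. The extra remarks on polynomial-time computability and decision versions are fine but add nothing beyond the paper's one-line argument.
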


\ignore{ %ignored after minor revision review
\begin{proof}
As per Lemmas~\ref{lem:TSPTTP} and \ref{lem:KPTTP}, TSP and KP are both special cases of TTP.
\end{proof}

}%ignore after minor revision review

We define TSP and KP components of a TTP when respectively the KP and the TSP components are left unchanged.

\begin{definition}[TSPC]\label{def:TSPC}
Given a TTP and a particular collection plan $p$, find a cyclic tour $t$ so that the TTP objective is maximised. This is the TSP component of TTP or in short TSPC.
\end{definition}

\begin{definition}[KPC]\label{def:KPC}
Given a TTP and a particular cyclic tour $t$, find a collection plan $p$ so that the TTP objective is maximised. This is the KP component of TTP or in short KPC.
\end{definition}

We also show that solving the TSP or KP component of a TTP is not equivalent to solving a standalone TSP or KP respectively.

\begin{lemma}[TSPC] 
TSPC is NP-Hard and is not equivalent to TSP.
\end{lemma}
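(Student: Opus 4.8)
The plan is to prove two things: first, that TSPC is NP-Hard, and second, that TSPC is not equivalent to TSP (in the sense that an optimal tour for the standalone TSP need not be an optimal tour for TSPC, and vice versa).

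For the NP-Hardness part, I would observe that Lemma~\ref{lem:TSPTTP} already exhibits a reduction from TSP to TTP in which the knapsack capacity is large enough to hold all items and $s_{\max}=s_{\min}$. In that regime the collection plan is irrelevant and can be fixed arbitrarily (say, collect everything, or collect nothing), so the TTP instance is in fact a TSPC instance for that fixed collection plan. Thus solving TSPC for this family of instances is exactly solving TSP, which is NP-Hard; hence TSPC is NP-Hard. I would state this carefully: TSPC takes as input a TTP instance together with a fixed collection plan $p$, and asks for a cyclic tour $t$ maximising $N(t,p)$. Since minimising $D(t)$ is, under constant speed, the same as minimising $T(t,p)$ and hence maximising $N(t,p)=P(p)-R\cdot T(t,p)$ (with $P(p)$ constant), the TSP optimum and the TSPC optimum coincide for those instances, giving the hardness.

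For the non-equivalence part, I would construct a small explicit TTP instance with a fixed collection plan $p$ in which $s_{\max}>s_{\min}$ and some collected items are placed so that the weight profile along a tour matters. The point is that in TSPC one is minimising the \emph{time} $T(t,p)=\sum_k d(t_k,t_{k+1})/s_{t,p}(k)$, not the \emph{distance} $D(t)$; because the speed factor $1/s_{t,p}(k)$ depends on how much weight has been accumulated by position $k$, a longer edge traversed while the knapsack is light can be preferable to a shorter edge traversed while it is heavy. So I would pick an instance where the distance-optimal tour forces heavy items to be picked up early (incurring a slow, expensive remaining traversal), while a slightly longer tour defers the heavy pickups and yields a smaller total time and hence larger net profit. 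A 4- or 5-city example in the style of Figure~\ref{fig:TTP} suffices; I would exhibit two tours $t_1,t_2$ with $D(t_1)<D(t_2)$ but $T(t_1,p)>T(t_2,p)$, so the TSP-optimal tour is not TSPC-optimal. This shows TSPC is genuinely different from TSP as an optimisation problem.

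I expect the main obstacle to be the non-equivalence example: it must be small enough to verify by hand yet genuinely force the speed-dependence to reverse the ordering of the two tours, and I must be careful that the collection plan $p$ is feasible ($W(p)\le W$) and fixed throughout, so that the comparison is purely about tours. One has to make sure the heavy item's placement differs in position between $t_1$ and $t_2$ — i.e. that the two tours visit the heavy-item city at genuinely different points — and that the arithmetic of $T(t_1,p)$ versus $T(t_2,p)$ comes out the intended way; this is just a routine but slightly delicate calculation with the speed function $s(w)=s_{\max}-\frac{w}{W}(s_{\max}-s_{\min})$. The NP-Hardness half is essentially immediate from the already-proved TSP reduction, so no real difficulty there.
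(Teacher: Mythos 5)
Your proposal is correct, and its core ideas coincide with the paper's proof: NP-hardness is obtained by reducing TSP to TSPC via a constant speed function ($s_{\max}=s_{\min}$), and non-equivalence rests on the observation that with weight-dependent speed the objective is travelling \emph{time}, which depends on where along the tour weight is accumulated, so the distance-optimal tour need not be TSPC-optimal. Two small differences in execution are worth noting. First, for the hardness part you import the large-knapsack condition from Lemma~\ref{lem:TSPTTP}; this is harmless but unnecessary --- with constant speed, \emph{any} fixed feasible collection plan $p$ makes $T(t,p)$ proportional to $D(t)$ and $P(p)$ constant, which is all the reduction needs. Second, for non-equivalence the paper is content with the qualitative argument (reordering cities changes the item collection order, hence the speed and time on the same edges change, so the fixed collection plan still influences tour exploration), whereas you propose to exhibit an explicit instance with two tours $t_1,t_2$ satisfying $D(t_1)<D(t_2)$ but $T(t_1,p)>T(t_2,p)$. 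That would be a stronger and more concrete demonstration, but you stop at a sketch; note that the example in \figurename~\ref{fig:coordination} does not serve as-is (there both distance and time worsen), so the arithmetic you flag as ``routine but delicate'' genuinely has to be done, e.g.\ by placing a heavy collected item so that the shorter tour picks it up early while a slightly longer tour defers it. Since such instances do exist and the paper itself proves no more than the qualitative claim, this omission does not undermine the argument.
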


\begin{proof}
For the first part: TSP easily reduces to TSPC with a constant speed function having $s_{\max} = s_{\min}$. For the second part: assume the speed depends on the knapsack weight. Although the collection plan is unchanged, reordering cities may also change the item collection order. This implies the travelling speed and the travelling time even between the same pair of cities might change. This means the collection plan could still affect exploration of cyclic tours in TSPC.
\end{proof}

The lemma below is provided in~\cite{polyakovskiy2017packing}.

\begin{lemma}[KPC] \label{lem:KPCComplexity}
KPC is NP-Hard and is not equivalent to KP.
\end{lemma}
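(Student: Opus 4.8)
The plan is to mirror the two--part structure used in the proof of the TSPC lemma just above. For the \textbf{NP-hardness} part, I would reduce a standalone KP to KPC. Given an arbitrary KP instance with item set $I$ of size $m$, weights $w_i$, profits $\pi_i$ and capacity $W$, I build a TTP instance on $m+1$ cities by distributing the $m$ items arbitrarily among cities $2,\ldots,m+1$ (city $1$ holds no items, as required), setting the distance $d(c,c')$ equal to a common value $d>0$ for every pair $c\ne c'$, and taking $s_{\max}=s_{\min}$ so that the speed is a positive constant. I then fix an arbitrary cyclic tour $t$ as the KPC input. By Lemma~\ref{lem:KPTTP}, under these conditions the total travelling time $T(t,p)$, and hence the renting cost $R(t,p)$, is the same for every collection plan $p$; so maximising the TTP objective $N(t,p)=P(p)-R(t,p)$ subject to $K(p)$ is exactly maximising $P(p)$ subject to $W(p)\le W$, i.e.\ solving the original KP. Thus any KPC algorithm solves KP, and since KP is NP-hard, so is KPC.

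For the \textbf{non-equivalence} part, I would argue that even with the tour $t$ frozen, KPC carries strictly more structure than a plain KP. Take the speed function to be genuinely weight-dependent, $s_{\max}>s_{\min}$. Then the net profit of a plan $p$ is $N(t,p)=P(p)-R\sum_{k=0}^{k<n} d(t_k,t_{k+1})/s\bigl(w_{t,p}(k)\bigr)$, and since $w\mapsto 1/s(w)$ is a strictly convex, strictly increasing function of the knapsack weight, the marginal renting cost charged for collecting an item $i$ depends both on $i$'s position $t(l_i)$ in the fixed tour and on \emph{which other items are collected at earlier positions}. Hence the objective is not an additive linear function of the vector $p$, and the ``value'' of an item is neither a constant nor separable from the rest of the plan, unlike in KP where each item contributes a fixed profit $\pi_i$ independently of the others. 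I would make this concrete with a minimal instance in the style of Figure~\ref{fig:TTP}: two items of equal weight and equal profit sitting at different positions of the fixed tour yield different net profits when chosen in isolation (the earlier-positioned item slows the thief over more segments), so no reassignment of profits to the item set $I$ reproduces the KPC objective as a KP objective. Therefore KPC is not equivalent to KP.

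The main obstacle is the second part: ``not equivalent'' must be pinned down rather than left as intuition. The cleanest route is the linearity argument above, namely that a KP objective is always an additive linear function of the selection variables whereas the KPC objective provably is not whenever the speed varies with weight, backed by the explicit small instance exhibiting two same-weight, same-profit, non-exchangeable items. Everything else, including the reduction and the routine $\mathcal{O}(\cdot)$-free bookkeeping of times and costs, follows directly from Lemma~\ref{lem:KPTTP}.
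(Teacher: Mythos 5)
Your proposal is correct and follows essentially the same route as the paper: the paper simply cites prior work (and its omitted proof sketch, mirroring the TSPC lemma) argues exactly your two points — KP reduces to KPC when all inter-city distances are equal and $s_{\max}=s_{\min}$, and non-equivalence follows because with weight-dependent speed the renting cost depends on which items are collected and where, so the fixed tour still influences collection-plan exploration. Your added details (convexity of $1/s(w)$, the explicit two-item example) only elaborate the same argument.
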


\ignore{ %ignored after minor revision reviews
\begin{proof}
For the first part: KP easily reduces to KPC when the distance and the speed between any two cities are the same. For the second part: assume the speed depends on the knapsack weight. Although the cyclic tour is unchanged, changing the collection plan might change the travelling speed and the travelling time even between the same pair of cities resulting into changes in the total renting cost. This means the cyclic tour could still affect exploration of collection plans in KPC.
\end{proof}

}%ignore after minor revision reviews

Above two lemmas show that just using standalone TSP and KP solvers to solve TSPC and KPC will not work. The reason is still the mutual interdependence of TSPC and KPC within TTP.

We adapt {2OPT} and {BitFlip} operators to TSPC and KPC respectively.

\begin{lemma}[{2OPT} for TSPC]
Given a TTP solution $\langle t, p\rangle$ with $w_{t,p}(k)$ and $\tau_{t,p}(k)$ for all positions, a {2OPT}$(t,b,e)$ operator produces a new cyclic tour $t'$ for which computing $N(t',p)$ needs $\mathcal{O}(e - b)$ time.
\end{lemma}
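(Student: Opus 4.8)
The plan is to observe that \textsf{2OPT}$(t,b,e)$ leaves the collection plan $p$ untouched, so $P(p)$ is independent of $t$ and already known; hence it suffices to show that $T(t',p)$ can be computed in $\mathcal{O}(e-b)$ time, after which $R(t',p)=R\cdot T(t',p)$ and $N(t',p)=P(p)-R(t',p)$ follow in $\mathcal{O}(1)$ more. The input we are allowed to use is the stored arrays $w_{t,p}(\cdot)$ and $\tau_{t,p}(\cdot)$ (in particular $T(t,p)=\tau_{t,p}(n)$ and the boundary values $\tau_{t,p}(b-1),\tau_{t,p}(e+1)$).

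First I would pin down which per-position quantities actually change. Reversing $t[b,e]$ only permutes the cities in positions $b,\dots,e$ among themselves, so the \emph{set} of cities contained in every prefix $t'[0,k]$ with $k\le b-1$ or $k\ge e$ equals that of $t[0,k]$; therefore $w_{t',p}(k)=w_{t,p}(k)$, and hence $s_{t',p}(k)=s\bigl(w_{t',p}(k)\bigr)=s_{t,p}(k)$, for all $k\notin[b,e-1]$. For $k\in[b,e-1]$ the city now at position $k$ is $t_{b+e-k}$, and the prefix collects exactly the ``top part'' of the reversed segment, giving
\[
w_{t',p}(k)=w_{t,p}(b-1)+\bigl(w_{t,p}(e)-w_{t,p}(b+e-k-1)\bigr),
\]
an $\mathcal{O}(1)$ evaluation from the stored prefix weights; doing this for the $e-b$ affected positions and applying $s(\cdot)$ costs $\mathcal{O}(e-b)$ in total.

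Next I would split the total time into three blocks of edges. The edges strictly before the segment (from position $k$ to $k+1$ for $0\le k\le b-2$) have both endpoints and speeds unchanged, contributing exactly $\tau_{t,p}(b-1)$. The edges strictly after the segment ($e+1\le k\le n-1$) likewise have endpoints outside $[b,e]$ and, by the paragraph above, unchanged speeds, so they contribute $T(t,p)-\tau_{t,p}(e+1)$. Only the middle block of edges $k=b-1,\dots,e$ genuinely changes: the boundary edges become $d(t_{b-1},t_e)$ and $d(t_b,t_{e+1})$, while each interior edge $b\le k\le e-1$ becomes $d(t'_k,t'_{k+1})=d(t_{b+e-k},t_{b+e-k-1})$, which equals $d(t_{b+e-k-1},t_{b+e-k})$ by symmetry of $d$. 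Each of these $e-b+2$ edge times is a single lookup-and-divide $d(t'_k,t'_{k+1})/s_{t',p}(k)$ using the speeds computed above, so the middle block costs $\mathcal{O}(e-b)$, and summing the three blocks yields $T(t',p)$ within the claimed bound. The boundary cases $b=1$ and $e=n-1$ are absorbed by the conventions $\tau_{t,p}(0)=0$, $w_p(1)=0$, and $t_0=t_n=1$.

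The main obstacle — really the only subtlety — is resisting the temptation to declare the reversed segment's contribution unchanged. Although the reversal merely permutes the interior distances (a preserved multiset), each of those distances is now divided by a \emph{different} speed, because the accumulated knapsack weight at every interior position has shifted; the interior edge times must therefore be recomputed one by one, which is exactly what forces $\mathcal{O}(e-b)$ rather than $\mathcal{O}(1)$. The remaining care is index bookkeeping: getting the three-block cut right at positions $b-1$, $e$, $e+1$, and handling the reflection $k\mapsto b+e-k$ both for the new cities and for the prefix-weight closed form above.
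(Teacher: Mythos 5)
Your proof is correct and follows essentially the same route as the paper: recompute the knapsack weights, speeds, and edge times only over the reversed region in $\mathcal{O}(e-b)$ time, and reuse the stored $\tau_{t,p}$ values for the unchanged prefix and suffix (your three-block assembly $\tau_{t,p}(b-1)+\text{middle}+T(t,p)-\tau_{t,p}(e+1)$ is algebraically the paper's $T(t',p)=T(t,p)+\tau_{t',p}(e+1)-\tau_{t,p}(e+1)$). The only cosmetic difference is your closed-form reflection formula for $w_{t',p}(k)$, where the paper simply recomputes the weights incrementally across the segment; both cost $\mathcal{O}(1)$ per position.
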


\begin{proof}
Given $w_{t,p}(k)$ and $\tau_{t,p}(k)$ for all positions in $t$, for each position $k \in [b,e+1]$ in $t'$ first we have to compute the knapsack weight $w_{t',p}(k)$, the travelling speed $s_{t',p}(k-1)$, and the travelling time up to each position $\tau_{t',p}(k)$. Then, the new total travelling time and the new objective values are computed as $T(t',p)=T(t,p)+\tau_{t',p}(e+1)-\tau_{t,p}(e+1)$ and $N(t',p) =P(p)-R \times T(t',p)$, respectively.
\end{proof}

\begin{lemma}[{BitFlip} for KPC]
Given a TTP solution $\langle t, p\rangle$ with $w_{t,p}(k)$ and $\tau_{t,p}(k)$ for all positions, {BitFlip}$(p,i)$ produces a new collection plan $p'$ for which computing $N(t,p')$ needs $\mathcal{O}(n - t(l_i))$ time.
\end{lemma}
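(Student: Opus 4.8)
The plan is to mirror the proof of the ``\textsf{2OPT} for TSPC'' lemma: pin down exactly which stored quantities a single \textsf{BitFlip} leaves untouched, reuse them as a prefix, and recompute only the affected suffix of the tour. First I would observe that flipping $p_i$ alters only the per-city weight $w_p(l_i)$, and by exactly $w_i\,(p'_i - p_i)$. Since $w_{t,p}(k) = \sum_{k'=0}^{k} w_p(t_{k'})$ and the city $l_i$ occurs in $t$ only at position $t(l_i)$ (with $1 \le t(l_i) \le n-1$, as $l_i > 1$), the knapsack weights satisfy $w_{t,p'}(k) = w_{t,p}(k)$ for every $k < t(l_i)$ and $w_{t,p'}(k) = w_{t,p}(k) + w_i\,(p'_i - p_i)$ for every $k \ge t(l_i)$. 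Consequently the speeds $s_{t,p'}(k) = s(w_{t,p'}(k))$ agree with $s_{t,p}(k)$ for all $k < t(l_i)$, and need to be recomputed only at the $n - t(l_i)$ positions $k \in \{t(l_i), \ldots, n-1\}$.

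Next I would push this through to the travelling time. Because $\tau_{t,p}(k)$ depends only on the speeds at positions $0, \ldots, k-1$, all of which are unchanged for $k \le t(l_i)$, we get $\tau_{t,p'}(k) = \tau_{t,p}(k)$ for every $k \le t(l_i)$; in particular the value $\tau_{t,p'}(t(l_i)) = \tau_{t,p}(t(l_i))$ is available from the stored data at no cost. For the remaining positions I would recompute incrementally via $\tau_{t,p'}(k) = \tau_{t,p'}(k-1) + d(t_{k-1},t_k)/s_{t,p'}(k-1)$ for $k = t(l_i)+1, \ldots, n$, each step costing $\mathcal{O}(1)$ once the new weight and hence speed at position $k-1$ are in hand; the whole suffix therefore costs $\mathcal{O}(n - t(l_i))$. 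This gives $T(t,p') = \tau_{t,p'}(n)$ and hence $R(t,p') = R \times T(t,p')$.

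Finally I would invoke the ``\textsf{BitFlip} for KP'' lemma to get $P(p') = P(p) + \pi_i\,(p'_i - p_i)$ in $\mathcal{O}(1)$, whence $N(t,p') = P(p') - R(t,p')$ is also $\mathcal{O}(1)$. Summing the $\mathcal{O}(n - t(l_i))$ suffix recomputation with the $\mathcal{O}(1)$ prefix lookup and closing arithmetic yields the claimed $\mathcal{O}(n - t(l_i))$ bound. I do not anticipate a real obstacle; the only point needing a little care is the boundary index --- making sure it is position $t(l_i)$, where item $i$ is picked up and from which both the knapsack weight \emph{and} the outgoing speed $s_{t,p'}(t(l_i))$ change, rather than $t(l_i)+1$ --- so that the recomputation range is exactly $\{t(l_i), \ldots, n-1\}$ and no earlier stored value is silently invalidated.
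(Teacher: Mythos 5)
Your proposal is correct and follows essentially the same route as the paper's proof: recompute the knapsack weights, speeds, and travelling times only for positions $k \in [t(l_i), n-1]$, obtain $P(p')$ in $\mathcal{O}(1)$ via the \textsf{BitFlip} for KP lemma, and combine to get $N(t,p')$ in $\mathcal{O}(n - t(l_i))$ time. Your version simply spells out more explicitly why the prefix values before position $t(l_i)$ remain valid.
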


\begin{proof}
In addition to computing $P(p')$ in $\mathcal{O}(1)$ time, for all positions $k\in[t(l_i),n-1]$ in $t$, we have to compute the knapsack weight $w_{t,p'}(k)$, the travelling speed $s_{t,p'}(k)$, and the travelling time up to each position $\tau_{t,p'}(k+1)$. Then, the new total travelling time and the new objective values are computed as $T(t,p')=\tau_{t,p'}(n)$ and $N(t,p') = P(p')-R \times T(t,p')$, respectively.
\end{proof}

\section{Related Work}
\label{sec:related}

TTP was introduced in~\cite{bonyadi2013travelling} and later many benchmark instances were given in~\cite{polyakovskiy2014comprehensive}. Depending on whether cities and items are dynamically made available for visiting or collection, TTP is of two types: dynamic and static. For dynamic TTP, we refer to a recent article in~\cite{sachdeva2020dynamic}. In this paper, we mainly deal with static TTP solving:  all cities must be visited and all items are available all the time. The thief decides whether particular cities are to be visited first or particular items are to be collected.
%, and the solver decides whether particular cities are to be visited or particular items are to be collected.
Existing TTP solvers can be grouped into 5~main categories: ({\it i})~constructive methods, ({\it ii})~fixed-tour methods, ({\it iii})~cooperative methods,
({\it iv})~full encoding methods, and ({\it v})~hyper-heuristic methods.
We give an overview of each category below. For further details, we also refer the reader to a recent review article \cite{wagner2017case}.

\subsection{Constructive Methods}

In constructive methods, an initial cyclic tour is generated (TSP) using the
Chained Lin-Kernighan heuristic~\cite{applegate2003chained}. The cyclic tour is then kept unchanged while collection plans are generated (KPC) using item scores based on their weight, profit, and position
%the position of the corresponding city
in the cyclic tour. This category includes greedy approaches such as Simple Heuristic~\cite{polyakovskiy2014comprehensive}, Density-Based Heuristic~\cite{bonyadi2014socially}, Insertion~\cite{mei2014improving}
and PackIterative~\cite{faulkner2015approximate}. These approaches have been used in restart-based algorithms such as S5~\cite{faulkner2015approximate}
and in the initialisation phase of other methods.

\subsection{Fixed-Tour Methods}

In fixed-tour methods, after generating an initial cyclic tour (TSP) using constructive methods, an exact or an approximate method is used to find a collection plan (KPC). Exact methods \cite{wu2017exact,polyakovskiy2017packing} using dynamic programming or mixed integer programming approaches can find the best collection plan for every given cyclic tour. However, these methods can not solve large instances in a reasonable time. Approximate methods \cite{polyakovskiy2014comprehensive,faulkner2015approximate,maity2020efficient} iteratively improve the collection plan by using the {BitFlip} operator on one or more items in each iteration. Approximate methods can solve large instances in a reasonable time although they do not guarantee to find the best collection plan for a given cyclic tour.

\subsection{Cooperative Methods}

Cooperative methods are iterative approaches based on the cooperational coevolution approach \cite{potter1994cooperative}. After generating an initial TTP solution using a constructive or a fixed-tour method, the cyclic tours and the collection plans are explored by two separate search modules for TSPC and KPC. These two search modules are executed by a meta-optimiser in an interleaving fashion so that their interdependent nature is somewhat considered \cite{wagner2017case}. Some well-known cooperative methods are Cooperative Solver
(CoSolver)~\cite{bonyadi2014socially},
CoSolver with 2OPT and Simulated Annealing (CS2SA)~\cite{el2016population},
CS2SA with offline instance-based parameter tuning (CS2SA*)~\cite{el2018efficiently} and CoSolver with reverse order item selection (RWS)~\cite{zhang2021solving}.
Moreover, a surrogate assisted cooperative solver \cite{namazi2020surrogate} approximates the final TTP objective value for any given initial TSP tour without finding the final solution; based on the approximation,  non-promising initial solutions are discarded and thus more solutions are considered within a given time budget.

\subsection{Full-Encoding Methods}

Full-encoding methods consider the problem as a whole. Well-known full-encoding methods include a Memetic Algorithm with Two-stage Local Search (MATLS)~\cite{mei2014improving}, a Memetic algorithm with Edge-Assembly and 2-Points crossover operators (MEA2P)~\cite{wuijts2019investigation}, a swarm intelligence algorithm~\cite{wagner2016stealing} based on max-min ant system~\cite{stutzle2000max}, a memetic Algorithm with {2OPT} and {BitFlip} search~\cite{el2016population}, another memetic algorithm with joint {2OPT} and {BitFlip}~\cite{el2017local} such that 
 {BitFlip} is used on just one item each time a {2OPT} operator is used on cyclic tours, and an evolutionary algorithm using typical TSP and KP operators but maintaining quality solutions over epochs \cite{nikfarjam2022use}.

Overall full-encoding methods do not perform well beyond a few hundred cities and a few thousand items due to search space explosion.

\subsection{Hyper-Heuristic Methods}

In hyper-heuristic based methods, genetic programming (GP) is usually used to generate or select low level heuristics for cyclic tour or collection plan exploration. One GP method \cite{mei2015heuristic} generates two packing heuristics for collection plans. An individual in each generation is a tree with internal nodes being simple arithmetic operators and leaf nodes are numerical parameters of a given TTP. Other GP methods \cite{martins2017hseda,el2018hyperheuristic} learn how to select a sequence of low level heuristics for TSPC or KPC. Another GP method \cite{martins2017hseda} uses Baysian networks with low level heuristics as networks of individuals in each generation. Yet another GP method \cite{el2018hyperheuristic} has trees as individuals in each generation with internal nodes as functions and low level heuristics as leaf nodes. A recent random and reward based hyperheuristic method \cite{ali2020hyper} uses 23 operators and 4 ways to choose from 23 operators but evaluates the method only on 9 problem instances. Overall hyper-heuristic methods do not perform well beyond a few hundred cities and a few thousand items since the search space becomes very large.

\section{TTP Search Framework}
\label{sec:framework}

As we see from the TTP literature, existing TTP methods have very little to no explicit coordination between selection decisions made for cyclic tours and collection plans. In this paper, we propose coordination based methods for TTP. Our proposed approaches for TSPC select moves that explore cyclic tours and collection plans in a coordinated fashion and explicitly based on their potential mutual effects. Also, our proposed approach for KPC selects marginally profitable items to explore collection plans with respect to the cyclic tour selected earlier. We embed our coordination based approaches within {2OPT} and along with {BitFlip} operators to be used in exploring cyclic tours and collection plans. Our proposed coordination based approaches thus improve the effectiveness of the search for TTP solutions.

Note that our proposed approaches could also be viewed as cooperative approaches since our algorithm also does move between cyclic tour exploration and collection plan exploration in an interleaving fashion. Moreover, our proposed approaches for TSPC are also like full-encoding methods since they make changes to both cyclic tours and collection plans at the same time.

\begin{algorithm}[!b]
\caption{Main Function in the Proposed TTP Search Framework}
\label{algo:MainSearchFramework}
\begin{algorithmic}
\Function{{TTPS}}{$C,I,d, W, R, s,s_{\max},s_{\min}$}
\State {\bf parameters:}
\State \qquad $C$: a set of $n$ cities
\State \qquad $I$: a set of $m$ items
\State \qquad $d(c,c')$: distance between cities $c \ne c'$
%\State \qquad $w_i$: weight for each item $i$
%\State \qquad $\pi_i$: profit for each item $i$
\State \qquad $W$: knapsack weight capacity
\State \qquad $R$: knapsack renting rate
\State \qquad $s(w)$: speed function for weight $w$
\State \qquad $s_{\max}$: maximum speed
\State \qquad $s_{\min}$: minimum speed
\State {\bf returns:} $\langle t,p\rangle$: cyclic tour, collection plan
\State
\State $\langle t_\ast,p_*\rangle \gets \emptyset$ \LeftComment{best solution}
\While{not timeout} \LeftComment{restart in each lap}
    \State $t\gets \Call{{ChainedLinKernTour}}{~}$
    \State $p\gets \Call{{InitCollectionPlan}}{t}$
    \While{not timeout}
        \State $N_\textsf{BS}\gets N(t,p)$ \LeftComment{before search, in first iteration}
        \State $N_\textsf{BS}\gets N_\textsf{KP}$ \LeftComment{before search, in next iterations}
        \State $N_\textsf{BS}\gets N(t,p)$ \LeftComment{before search}      
        \State $\langle t,p\rangle \gets \Call{TSPS}{t,p}$
        \State $N_\textsf{TSP}\gets N(t,p)$ \LeftComment{after TSP search}
		\State $p \gets \Call{KPS}{t,p,1,n-1}$
        \State $N_\textsf{KP}\gets N(t,p)$ \LeftComment{after KP search}
        \If{$N_\textsf{BS}=N_\textsf{KP}$}
            \State {\bf break}
        \EndIf
    \EndWhile
    \If{$N(t,p) > N(t_*,p_*)$}
    \State $\langle t_*,p_*\rangle \gets \langle t,p\rangle$ \LeftComment{new best solution}
    \EndIf
\EndWhile
\State \Return {$\langle t_*, p_*\rangle$}
\EndFunction
\end{algorithmic}%
\end{algorithm}

Algorithms~\ref{algo:MainSearchFramework} and \ref{algo:AuxSearchFramework} describe the TTP search framework that we use in evaluating our proposed coordination based approaches. The search framework is similar to the cooperational coevolution approach~\cite{potter1994cooperative,el2018efficiently}. It has three main functions: {TTPS}, {TSPS}, and {KPS}. The framework allows customisation of its various parts to facilitate development of TTP search methods with or without coordination.

% \newpage
% \noindent
Below we list the abbreviations used in the proposed search framework.

~

\begin{tabular}{rl}
\textbf{TTPS  } & The main TTP search function in Algorithm~\ref{algo:MainSearchFramework} \\
\textbf{TSPS  } & The TSP component search function in Algorithm~\ref{algo:AuxSearchFramework} \\
\textbf{KPS   } & The KP component search function in Algorithm~\ref{algo:AuxSearchFramework} \\
\textbf{NOCH  } & No coordination heuristic in Section~\ref{subsec:baseline} \\
\textbf{SGCH  } & Search guided coordination heuristic in Section~\ref{subsec:sgch} \\
\textbf{PGCH  } & Profit guided coordination heuristic in Section~\ref{subsec:pgch} \\
\textbf{CISH  } & Coordinated item selection heuristic in Section~\ref{subsec:cish} \\
\textbf{IPR   } & Item profitability ratio defined in Section~\ref{subsec:character} \\
\textbf{LCIPR } & The lowest collected item profitability ratio defined in Section~\ref{subsec:character} \\
\textbf{HUIPR } & The highest uncollected item profitability ratio defined in Section~\ref{subsec:character} \\
\textbf{SBFS } & Standard bit-flip search in Section~\ref{subsec:baseline} \\
\textbf{MBFS } & Marginal bit-flip search in Section~\ref{subsec:cish} \\
\textbf{NLBC } & Non-linear binary classifer in Section~\ref{subsec:lgch} \\ 
\textbf{LGCH } & Learning guided coordination heuristic in Section~\ref{subsec:lgch}
\end{tabular}

\subsection{Function {TTPS}}

Function~{TTPS} in Algorithm~\ref{algo:MainSearchFramework} has two loops, one inside another. The outer loop runs for a given timeout limit. Each iteration of the outer loop is a restarting of the search from scratch. Inside the outer loop, first an initial cyclic tour $t$ and an initial collection plan $p$ for $t$ are generated. Function~{ChainedLinKernTour} generates the initial cyclic tour using Chained Lin-Kernighan heuristic~\cite{applegate2003chained}. Then, Function~{InitCollectionPlan} generates the initial collection plan taking the best of the solutions returned by PackIterative~\cite{faulkner2015approximate} and Insertion~\cite{mei2014improving} methods. Once a complete TTP solution $\langle t, p\rangle$ is thus obtained, the inner loop then refines that solution in an iterative fashion. In each iteration of the inner loop, Functions {TSPS} and {KPS} are invoked in an interleaving fashion to improve the cyclic tour and the collection plan. The inner loop terminates when the objective value does not change between two successive iterations.

\begin{algorithm}[!tb]
\caption{Other Functions in the Proposed TTP Search Framework}
\label{algo:AuxSearchFramework}
\begin{algorithmic}
\Function{{TSPS}}{$t,p$}
\State $\langle t_\diamond,p_\diamond\rangle \gets \langle t,p\rangle$ \LeftComment{best solution}
\Repeat{} \LeftComment{main loop}
    \State $N' \gets N(t,p)$
    \For{$b \gets 1$ {\bf to} $n-2$}
        \For{{\bf each} $t_{e} \in \textsf{DelaTriNeighb}[t_{b}]$}
            \If{$b < e < n$}
                \State $t' \gets \Call{2OPT}{t,b,e}$
                \State$p' \gets\!$ \Call{{CoordHeu}}{$t,p,t',b,e$}
                \If{$N(t',p')>N(t_\diamond,p_\diamond)$}
                    \State $\langle t_\diamond,p_\diamond\rangle \gets \langle t',p'\rangle$
                \EndIf
            \EndIf
        \EndFor
    \EndFor
    \State $\langle t,p\rangle \gets \langle t_\diamond,p_\diamond\rangle$
\Until{$N(t,p) < N'\cdot(1 + \alpha)$} \LeftComment{$\alpha = 0.01\%$}
\State \Return $\langle t, p\rangle$
\EndFunction
\State
\Function{{KPS}}{$t,p,b,e$}
    \State $I' \gets$ \Call{{SelectItemsSubset}}{$t, p,b,e$}    \State \Call{{MarkAllItemsUnchecked}}{$I'$}
    \While{$\lnot \Call{{AllItemsChecked}}{I'}$}
        \State $i \gets \Call{{RandomUncheckedItem}}{I'}$
        \State \Call{{MarkItemChecked}}{$i$}
        \State $p' \gets \Call{{BitFlip}}{p,i}$ {\bf when} $K(p')$
        \If{$N(t,p') > N(t,p)$}
            \State $ p \gets p'$
            \State $I' \gets$ \Call{{SelectItemsSubset}}{$t, p,b, e$}
            \State \Call{{MarkAllItemsUnchecked}}{$I'$}
        \EndIf
    \EndWhile
    \State \Return $p$
\EndFunction
\end{algorithmic}%
\end{algorithm}

\subsection{Function {TSPS}}

Function~{TSPS} in Algorithm~\ref{algo:AuxSearchFramework} is a steepest ascent hill-climbing method. Inside the main loop, from the current solution $\langle t, p\rangle$, a new solution $\langle t', p'\rangle$ is generated using the  neighbourhood operator {2OPT} and the coordination function {CoordHeu} for each tour segment $t[b,e]$, where $b \in [1,n-2]$ and $t_e$ is in the precomputed Delaunay triangulation~\cite{delaunay1934sphare} neighbourhood \textsf{DelaTriNeighb} array of $t_{b}$. The best solution among the newly generated solutions that are better than the current solution is accepted as the current solution for the next iteration of the main loop. Note that the main loop of each invocation of the function continues as long as the improvement in the objective value is at least $\alpha\%$ with respect to the objective value computed at the starting of the loop \cite{dueck1993new}. Here, $\alpha$ essentially controls when to switch from the TSP component to the KP component. After initial experiments, we set $\alpha = 0.01$.

Notice that in Function {TSPS}, after calling the operator {2OPT}, there is a calling of the coordination function {CoordHeu}. We know Operator {2OPT} makes changes only to the cyclic tour. When no change in the collection plan is sought after Function~{2OPT}, Function~{CoordHeu} is defined to be returning just $p$ as $p'$. However, in this paper, considering coordination between TSP and KP components, we design alternative coordination functions to be used as Function~{CoordHeu}. We later describe the alternative functions.

\subsection{Function {KPS}}

Function~{KPS} in Algorithm~\ref{algo:AuxSearchFramework} starts with an initial subset $I'$ of items selected by Function~{SelectItemsSubset} based on a given tour segment $t[b,e]$ in a solution $\langle t, p\rangle$. The loop in Function~{KPS} runs until for all of the items in $I'$, {BitFlip} has been applied without any improvement in the objective, since the latest change in the collection plan. In each iteration of the loop, one previously unchecked item $i$ from $I'$ is randomly checked and $p_i$ is flipped using {BitFlip}($p,i$). The change in $p_i$ is accepted if it improves the objective. Note that every time a change in $p$ is thus accepted, $I'$ is computed again by Function~{SelectItemsSubset} and all items in the new $I'$ are marked unchecked. This in essence restarts the KP search within the same loop. Functions {MarkAllItemsUnchecked}, {AllItemsChecked}, {RandomUncheckedItem}, {MarkItemChecked} are respectively for marking all items in $I'$ unchecked, testing whether all items in $I'$ are checked already, selecting an unchecked item $i$ from $I'$ randomly, and marking a selected item $i$ as checked.

For Function~{SelectItemsSubset}, we could typically use all items in the given tour segment $t[b,e]$, or just some of them. Considering coordination between the TSP and the KP component, in this paper, we later propose strategies to select a subset of items from a tour segment.

\subsection{Baseline Solver Version}
\label{subsec:baseline}

Our baseline TTP solver has no explicit coordination between TSP and KP components. As shown in Algorithm~\ref{algo:Baseline}, for Function {CoordHeu},  we use Function \Call{NoCoordHeu}{$t,p,t',b,e$} that just returns $p$ making no change at all, and for Function $\Call{SelectItemsSubset}{t,p,b,e}$, we use Function $\Call{SelectTourSegmentItems}{t,p,b,e}$ that just returns $I(t[b,e])$, i.e. the set of all items available in the tour segment $t[b,e]$.
For convenience, in discussing the experimental results, we denote the approach using Function~{NoCoordHeu} by NOCH. Note that Function~{TSPS} with Function~{NoCoordHeu} is almost the same as the method used for solving the TSP component in~\cite{el2018efficiently}. Also, note that Function~{KPS} with {SelectTourSegmentItems} is called the {\em standard bit-flip search} (SBFS) \cite{polyakovskiy2014comprehensive,faulkner2015approximate} algorithm for solving the KP component in TTP.

\begin{algorithm}[!tb]
\caption{Implementing Baseline Solver on the Search Framework}
\label{algo:Baseline}
\begin{algorithmic}
    \Function{NoCoordHeu}{$t,p,t',b,e$}
        \State\LeftComment{defines \Call{CoordHeu}{$t,p,t',b,e$}}
        \State \Return $p$s
    \EndFunction
    \Function{SelectTourSegmentItems}{$t,p,b,e$}
        \State\LeftComment{defines \Call{SelectItemsSubset}{$t,p,b,e$}}
        \State \Return $I(t[b,e])$
    \EndFunction
\end{algorithmic}
\end{algorithm}

\section{Proposed Coordination Approaches}
\label{sec:proposed}

We give a motivating example to show how coordination helps evaluate a cyclic tour better in TTP. We also characterise Operator~{2OPT} to find the reasons behind its poor coordination behaviour. We develop our coordination based heuristics for TTP on top of the search framework in Algorithms~\ref{algo:MainSearchFramework} and \ref{algo:AuxSearchFramework}. We develop three alternative approaches to be used within Function~{TSPS} and one alternative approach to be used within Function~{KPS}. The three coordination approaches to be used to define Function {CoordHeu} within Function~{TSPS} are local search based, human designed intuitive, and machine learning models. The other coordination approach to be used within Function~{KPS} is a strategy to select items by Function {SelectItemsSubset}.

\subsection{Observing Coordination Effect after {2OPT}}

In Function~{TTPS} in Algorithm~\ref{algo:MainSearchFramework}, Function~{TSPS} and Function~{KPS} are invoked in an interleaving fashion. In the baseline algorithm in Section~\ref{subsec:baseline}, after Operator~{2OPT} is called in Function~{TSPS}, Function~{NoCoordHeu} is used for Function~{CoordHeu}. This means no change in collection plan is made after changing the cyclic tour. The example below shows such an approach results in incorrect or misleading evaluations of the TTP solutions by Function {TSPS}.

\begin{figure}[!b]
\begin{tabular}{lr}
\begin{minipage}{0.47\textwidth}
\begin{tikzpicture}[scale=1]
    \node[draw,circle,double] (1) at (0,0) {1};
    \node[draw,circle] (2) at (0,2) {2};
    \node[draw,circle] (3) at (2.5,3.25) {3};
    \node[draw,circle] (4) at (5,1) {4};
    \node[draw,circle] (5) at (2,-1) {5};

    \node[draw,thick,dotted,rotate=90] at ($(2) + (0,1.00)$) {\footnotesize$1\langle 4, 20\rangle$};
    \node[draw,thick,dotted] at ($(3) + (1,0)$) {\footnotesize $2\langle 2, 8\rangle$};
    \node[draw,thick,rotate=90] at ($(4) + (0,-1.00)$) {\footnotesize $3\langle 4, 20\rangle$};
    \node[draw,thick] at ($(5) + (1,0)$) {\footnotesize $4\langle 1, 4\rangle$};

    \draw[dashed] (1) -- node [above,sloped] {\footnotesize 1} (2);
    \draw[<-,thick] (2) -- node [above,sloped] {\footnotesize 2} (3);
    \draw[<-,thick] (3) -- node [above,sloped] {\footnotesize 3} (4);
    \draw[dashed] (4) -- node [below,sloped] {\footnotesize 4} (5);
    \draw[->,thick] (5) -- node [below,sloped] {\footnotesize 1} (1);

    \draw[dashed] (1) -- node [above right,sloped,pos=0.33] {\footnotesize 2.5} (3);
    \draw[->,thick] (1) -- node [below right,sloped] {\footnotesize 4.5} (4);
    \draw[dashed] (2) -- node [above right,sloped] {\footnotesize 4.5} (4);
    \draw[->,thick] (2) -- node [above,sloped] {\footnotesize 1.8} (5);
    \draw[dashed] (3) -- node [above,sloped] {\footnotesize 3} (5);
\end{tikzpicture}
\end{minipage}
&
\begin{minipage}{0.47\textwidth}
\begin{tikzpicture}[scale=1]
    \node[draw,circle,double] (1) at (0,0) {1};
    \node[draw,circle] (2) at (0,2) {2};
    \node[draw,circle] (3) at (2.5,3.25) {3};
    \node[draw,circle] (4) at (5,1) {4};
    \node[draw,circle] (5) at (2,-1) {5};

    \node[draw,thick,rotate=90] at ($(2) + (0,1.00)$) {\footnotesize$1\langle 4, 20\rangle$};
    \node[draw,thick,dotted] at ($(3) + (1,0)$) {\footnotesize $2\langle 2, 8\rangle$};
    \node[draw,thick,dotted,rotate=90] at ($(4) + (0,-1.00)$) {\footnotesize $3\langle 4, 20\rangle$};
    \node[draw,thick] at ($(5) + (1,0)$) {\footnotesize $4\langle 1, 4\rangle$};

    \draw[dashed] (1) -- node [above,sloped] {\footnotesize 1} (2);
    \draw[<-,thick] (2) -- node [above,sloped] {\footnotesize 2} (3);
    \draw[<-,thick] (3) -- node [above,sloped] {\footnotesize 3} (4);
    \draw[dashed] (4) -- node [below,sloped] {\footnotesize 4} (5);
    \draw[->,thick] (5) -- node [below,sloped] {\footnotesize 1} (1);

    \draw[dashed] (1) -- node [above right,sloped,pos=0.33] {\footnotesize 2.5} (3);
    \draw[->,thick] (1) -- node [below right,sloped] {\footnotesize 4.5} (4);
    \draw[dashed] (2) -- node [above right,sloped] {\footnotesize 4.5} (4);
    \draw[->,thick] (2) -- node [above,sloped] {\footnotesize 1.8} (5);
    \draw[dashed] (3) -- node [above,sloped] {\footnotesize 3} (5);
\end{tikzpicture}
\end{minipage}
\\
\\
\begin{minipage}{0.47\textwidth}
\begin{footnotesize}
Without Coordination\\
$t'$ = $\langle 1,4,3,2,5,1 \rangle$, $p$ = $\langle 0, 0, 1, 1\rangle$ = $\{3, 4\}$\\
For tour segment $\langle 1, 4\rangle$, $s = s_\textrm{max} = 1$\\
For tour seg $\langle 4,3,2,5\rangle$, $s = 1 - \frac{4\times 0.9}{6} = 0.4$\\
For tour seg $\langle 5,1\rangle$, $s = 1 - \frac{5\times 0.9}{6} = 0.25$\\
$T(t',p) = \frac{4.5}{1} + \frac{3 + 2 + 1.8}{0.4} + \frac{1}{0.25} =25.5$\\
$N(t',p) = 24 - 25.5 = -1.5$
\end{footnotesize}
\end{minipage}
&
\begin{minipage}{0.47\textwidth}
\begin{footnotesize}
With Coordination\\
$t'$ = $\langle 1,4,3,2,5,1 \rangle$, $p'$ = $\langle 1, 0, 0, 1\rangle$ = $\{1, 4\}$\\
For tour segment $\langle 1, 4, 3, 2\rangle$, $s = s_\textrm{max} = 1$\\
For tour segment $\langle 2,5\rangle$, $s = 1 - \frac{4\times 0.9}{6} = 0.4$\\
For tour segment $\langle 5,1\rangle$, $s = 1 - \frac{5\times 0.9}{6} = 0.25$\\
$T(t',p') = \frac{4.5 + 3 + 2}{1} + \frac{1.8}{0.4} + \frac{1}{0.25} =18$\\
$N(t',p') = 24 - 18 = 6$
\end{footnotesize}
\end{minipage}
\end{tabular}
\caption{From the scenario in \figurename~\ref{fig:TTP} with $t = \langle 1,2,3,4,5,1\rangle$, $p = \langle 0,0,1,1\rangle$, and $N(t,p)= 4$, (left) only {2OPT} is applied on $t$ to get $t' = \langle 1,4,3,2,5,1\rangle$ and (right) after {2OPT} is applied, $p$ is also changed to $p'=\langle 1,0,0,1\rangle$.  \label{fig:coordination}}

\end{figure}

Consider the TTP example in \figurename~\ref{fig:TTP} and the solution comprising cyclic tour $t = \langle 1, 2, 3, 4, 5, 1\rangle$ and collection plan $p = \langle 0,0,1,1\rangle$ having the objective value $N(t,p) = 4$. When Operator $\Call{2OPT}{t,1,3}$ is applied on the cyclic tour $t$ to reverse the tour segment $\langle 2,3,4\rangle$ to $\langle 4,3,2\rangle$, the resultant cyclic tour is $t' = \langle 1,4,3,2,5,1\rangle$. \figurename~\ref{fig:coordination} (left) shows that if the collection plan $p$ is not changed when $t$ changes to $t'$, the objective value $N(t',p) = -1.5$ is used to evaluate $t'$. In this case, there is no explicit coordination between the cyclic tour and the collection plan. Then, \figurename~\ref{fig:coordination} (right) shows that if the collection plan $p$ is also changed to $p' = \langle 1,0,0,1\rangle$ after $t$ is changed to $t'$, the objective value $N(t',p') = 6$ is used to evaluate $t'$. There is coordination here between the cyclic tour and the collection plan.
This example clearly shows that the potential of a cyclic tour is better reflected when the collection plan is also adjusted with the cyclic tour and so coordination is needed. To have a clearer perspective, with $N(t',p) = -1.5$, the resultant tour $t'$ could be easily rejected during search while with $N(t',p') = 6$, the same resultant tour $t'$ could be easily accepted. With an interleaving approach of invoking Functions~{TSPS} and {KPS}, existing TTP methods thus do not properly evaluate generated TTP solutions and thus suffer from not having a proper search direction. In this paper, we argue that for better coordination between two TTP components, the quality of each cyclic tour or the collection plan should be evaluated along with the best possible corresponding collection plan or the cyclic tour and not against only the current collection plan or cyclic tour. Our arguments above are equally applicable to both TTP components. However, in this paper, we mainly evaluate cyclic tours against the best possible collection plans. This is because the 2OPT operator used in Function~{TSPS} can make changes to many cities in large tour segments while {BitFlip} operator used in Function~{KPS} makes changes to only one item in the collection plan, and we put more emphasis on the large changes. For an operator making huge change in Function~{KPS}, one could also evaluate  collection plans against the best possible cyclic tours.

Below we define the quality of a cyclic tour and prove its time complexity.

\begin{definition}[Cyclic Tour Quality] The quality $Q(t) = \max_pN(t,p)$ of a cyclic tour $t$ in TTP is the maximum objective value $N(t,p)$ over all possible collection plans $p$.
\end{definition}

\begin{lemma}[Cyclic Tour Quality]
Computing quality $Q(t)$ for a cyclic tour $t$ in TTP is NP-Hard.
\end{lemma}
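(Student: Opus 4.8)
The plan is to establish NP-hardness of computing $Q(t)$ by a reduction from a known NP-hard problem, using the fact (already in the excerpt) that KPC is NP-hard and is not equivalent to KP. The cleanest route is to reduce KPC to the problem of computing $Q(t)$. Given a TTP instance together with a fixed cyclic tour $t$, note that $Q(t) = \max_p N(t,p)$ is exactly the optimal objective value of the KPC instance induced by that same tour $t$ (Definition~\ref{def:KPC}): both ask for the collection plan $p$ maximising $N(t,p)$ subject to $K(p)$. So an oracle that computes $Q(t)$ in polynomial time would in particular compute the optimal KPC value in polynomial time. Since Lemma~\ref{lem:KPCComplexity} states KPC is NP-hard, this gives NP-hardness of the quality computation.

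The key steps, in order, are: first, recall the definition of $Q(t)$ and observe that the maximisation is taken over exactly the same feasible set (collection plans $p$ with $W(p) \le W$) and exactly the same objective $N(t,p)$ as in the KPC problem with the tour $t$ held fixed. Second, conclude that the decision version ``is $Q(t) \ge B$?'' is identical to the decision version of KPC on the instance $(C, I, d, W, R, s, s_{\max}, s_{\min}, t)$, so the two problems are literally the same problem and the reduction is the identity map, which is trivially polynomial-time. Third, invoke Lemma~\ref{lem:KPCComplexity} to transfer NP-hardness. Alternatively — and perhaps more self-contained — one can reduce directly from KP via Lemma~\ref{lem:KPTTP}: take any KP instance, build the TTP instance in which all pairwise distances are equal and $s_{\max} = s_{\min}$, and pick any cyclic tour $t$; then the renting cost $R(t,p) = R \cdot T(t,p)$ is a constant independent of $p$, so maximising $N(t,p) = P(p) - R(t,p)$ over feasible $p$ is the same as maximising $P(p)$, i.e. $Q(t)$ differs from the KP optimum by an additive constant. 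Either way the argument is short.

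I do not expect a genuine obstacle here, since the statement is essentially a bookkeeping corollary of the already-established Lemma~\ref{lem:KPCComplexity} (or of Lemma~\ref{lem:KPTTP}). The only thing to be careful about is to state the reduction at the level of \emph{computing} an optimal value versus \emph{deciding} a threshold: NP-hardness is a property of decision problems, so one should phrase it as ``the decision problem associated with computing $Q(t)$ is NP-hard,'' or note that a polynomial-time algorithm for the function $Q(t)$ would yield one for the KPC (or KP) decision problem by a single comparison. A second minor point: when reducing from plain KP one must confirm that $I(1) = \{\}$ (no items at the start city) does not lose generality — it does not, since one can always add one extra dummy city carrying the items originally ``at'' city $1$, or simply relabel. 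With these caveats handled, the proof is a two-sentence argument.
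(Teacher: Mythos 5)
Your main argument is correct and is exactly the paper's proof: computing $Q(t)$ is by definition solving the KPC instance with the tour $t$ fixed, so NP-hardness follows directly from Lemma~\ref{lem:KPCComplexity}. The extra care about decision versus optimisation and the alternative reduction from plain KP are fine but unnecessary embellishments of the same one-line argument.
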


\begin{proof}
Computing $Q(t)$ is essentially solving KPC and so is NP-Hard as per Lemma~\ref{lem:KPCComplexity}.
\end{proof}

From the above lemma, it is clear that invoking a {\em complete search} for collection plans for each and every tour segment reversal for a given cyclic tour is not feasible within a given timeout limit. So we need an incomplete search or a heuristic.

\subsection{Local Search Based Coordination}\label{subsec:sgch}

Since computing $Q(t)$ for $t$ is very hard, as shown above, we want to obtain an estimate of $Q(t)$ for a given $t$. For this, in this paper, we propose to invoke a local search based incomplete approach. We name our proposed approach as Search Guided Coordination Heuristic (SGCH) for TTP.

\paragraph{SGCH Implementation} Algorithm~\ref{algo:SGCH} shows the implementation of our proposed SGCH approach on top of the search framework. For Function~{SelectItemsSubset} in Function~{KPS}, we define Function~{SelectTourSegmentItems} to be returning $I(t[b,e])$ and
for Function~{CoordHeu} in Function~{TSPS}, we define Function~{SearchGuidedCoordHeu} to be returning the collection plan produced by Function~{KPS} by exploring items $I(t'[b,e])$. Notice that Function~{KPS} is called twice for SGCH with the same definition of Function~{SelectTourSegmentItems}: once in Function~{TTPS} for $t[1,n-1]$, i.e. for the entire tour, and again in Function~{SearchGuidedCoordHeu} called from Function~{TSPS} for each reversed tour segment $t'[b,e]$.

\begin{algorithm}[!tb]
\caption{Implementing SGCH on the Search Framework}
\label{algo:SGCH}
\begin{algorithmic}
    \Function{SearchGuidedCoordHeu}{$t,p,t',b,e$}
        \State\LeftComment{defines \Call{CoordHeu}{$t,p,t',b,e$}}
        \State \Return \Call{KPS}{$t',p,b,e$}
    \EndFunction
    \Function{SelectTourSegmentItems}{$t,p,b,e$}
        \State\LeftComment{defines \Call{SelectItemsSubset}{$t,p,b,e$}}        \State \Return $I(t[b,e])$
    \EndFunction
\end{algorithmic}
\end{algorithm}

\subsection{Characterising {2OPT} Coordination Behaviour}
\label{subsec:character}

We characterise the coordination behaviour of Operator~{2OPT} using item profitability ratio (IPR).  Below we formally define IPR.

\begin{definition}[Item Profitability Ratio] For an item $i$, the {\em item profitability ratio} (IPR) {$r_i = \pi_i/w_i$}.
An item {$i$} is {\em more profitable} than item {$i'$}, if {$r_i > r_{i'}$}, or if {$\pi_i > \pi_{i'}$} when {$r_i = r_{i'}$}.
\end{definition}

Greedy constructive KP heuristics typically collect items in non-increasing order of IPR. Constructive TTP heuristics also exhibit similar trends. To describe these trends, we need two functions. For a given sequence of numbers and a given position, one of the functions return the smallest number from the beginning up to the given position while the other returns the largest number from the ending down to the given position.

\begin{definition}[Prefix Minimum]\label{def:prefixmin}
Given a sequence $s=\langle s_1, s_2, \ldots, s_n\rangle$ of $n$ numbers $s_k$ with $k\in [1,n]$, the {\em prefix minimum} function $\Pi$ is defined by $\Pi(s, k) = \min(\Pi(s, k-1), s_k)$ when $1<k\leq n$ and $\Pi(s,1) = s_1$. Using the definition, we also get the {\em prefix minimum sequence} $s'=\Pi(s) = \langle \Pi(s,1), \Pi(s,2), \ldots, \Pi(s,n) \rangle$ for a given sequence $s$ in $O(n)$ time. For example, the prefix minimum sequence $s'$ is $\Pi(s)=\langle 9, 6, 6, 4, 4, 4\rangle$ when the given sequence $s$ is $\langle 9, 6, 8, 4, 5, 7\rangle$.
\end{definition}

\begin{definition}[Suffix Maximum]\label{def:suffixmax}
Given a sequence $s=\langle s_1, s_2, \ldots, s_n\rangle$ of $n$ numbers $s_k$ with $k\in [1,n]$, the {\em suffix maximum} function $\Omega$ is defined by $\Omega(s, k) = \max(s_k,\Omega(s, k+1))$ when $1\leq k < n$ and $\Omega(s,n) = s_n$. Using the definition, we also get the {\em suffix maximum sequence} $s''=\Omega(s) = \langle \Omega(s,1), \Omega(s,2), \ldots, \Omega(s,n) \rangle$ for a given sequence $s$ in $O(n)$ time. For example, the suffix maximum sequence $s''$ is $\Omega(s)=\langle 9, 8, 8, 7, 7, 7\rangle$ when the given sequence $s$ is $\langle 9, 6, 8, 4, 5, 7\rangle$.
\end{definition}

\paragraph{IPR Trends in TTP} As is already mentioned above, constructive KP heuristics exhibit a non-increasing trend in IPR. In TTP, items are scattered over cities and item collection order is restricted by city visiting order in the cyclic tour. Therefore, constructive greedy TTP methods such as PackIterative~\cite{faulkner2015approximate} and Insertion~\cite{mei2014improving}
use IPRs along with distances of the respective cities from the end of the cyclic tour in constructing the collection plan. So a monotonous non-increasing trend in IPRs of collected items is not expected in TTP solutions. {\em However, given a cyclic tour, within each city, we can reasonably expect items are collected in non-increasing order of IPR, unless there is not enough space for a highly profitable but heavy item}. This could be a key guideline to get collection plans in TTP. Below we define the lowest collected IPR (LCIPR) and the highest uncollected IPR (HUIPR) for each city in a TTP solution.

\begin{definition}[Lowest Collected IPR] Given a TTP solution $\langle t, p\rangle$, for a city $t_k$ at position $k$ in the cyclic tour $t$, the lowest collected IPR is $L(t,p,k) = \min_{i \in I(t_k) \land p_i = 1} r_i$. We then define a series of LCIPRs as $L(t,p) = \langle L_1, L_2,\ldots,L_{n-1}\rangle$ where $L_k = L(t,p,k)$. Using Definition~\ref{def:prefixmin}, we further define a prefix minimum function $\Pi(L(t,p), k)$ and a prefix minimum sequence $\Pi(L(t,p))$.
\end{definition}

\begin{definition}[Highest Uncollected IPR] Given a TTP solution $\langle t, p\rangle$, for a city $t_k$ at position $k$ in the cyclic tour $t$, the highest uncollected IPR is $H(t,p,k) = \max_{i \in I(t_k) \land p_i = 0} r_i$. We then define a series of HUIPRs as $H(t,p) = \langle H_1, H_2,\ldots,H_{n-1}\rangle$ where $H_k = H(t,p,k)$. Using Definition~\ref{def:suffixmax}, we further define a suffix maximum function $\Omega(H(t,p),k)$ and a suffix maximum sequence $\Omega(H(t,p))$.
\end{definition}

\paragraph{LCIPR and HUIPR Trends in TTP} \figurename~\ref{Eil76_1} (left) shows the LCIPR sequence $L(t,p)$ and the HUIPR sequence $H(t,p)$ for a TTP solution returned by PackIterative~\cite{faulkner2015approximate} for a benchmark instance {\it eil76\_n750\_uncorr\_10.ttp}. Clearly, $L(t,p)$ and $H(t,p)$ do not exhibit any monotonous trends. However, $L(t,p)$ does exhibit an overall decreasing trend from city positions low to high and $H(t,p)$ does exhibit an overall increasing trend from city positions high to low. Notice that the prefix minimum sequence $\Pi(L(t,p))$ and the suffix maximum sequence $\Omega(H(t,p))$ in effect capture the two overall trends respectively. Moreover, both of these two trend lines are monotonous, although one is in the forward direction and the other is in the backward direction.

\begin{figure*}[!b]
  \centering
  \begin{minipage}{0.49\textwidth}
    \centering
    \includegraphics[width=\textwidth]{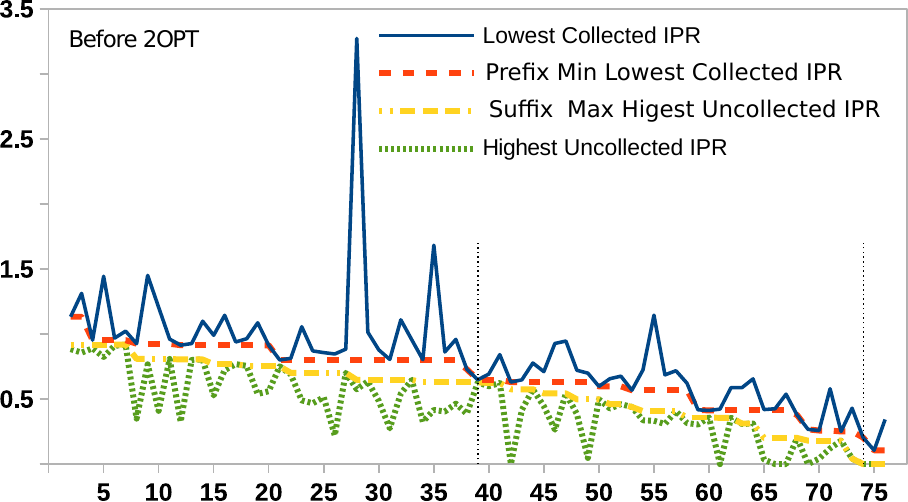}
  \end{minipage}
  \hfill
  \begin{minipage}{0.49\textwidth}
    \centering
    \includegraphics[width=\textwidth]{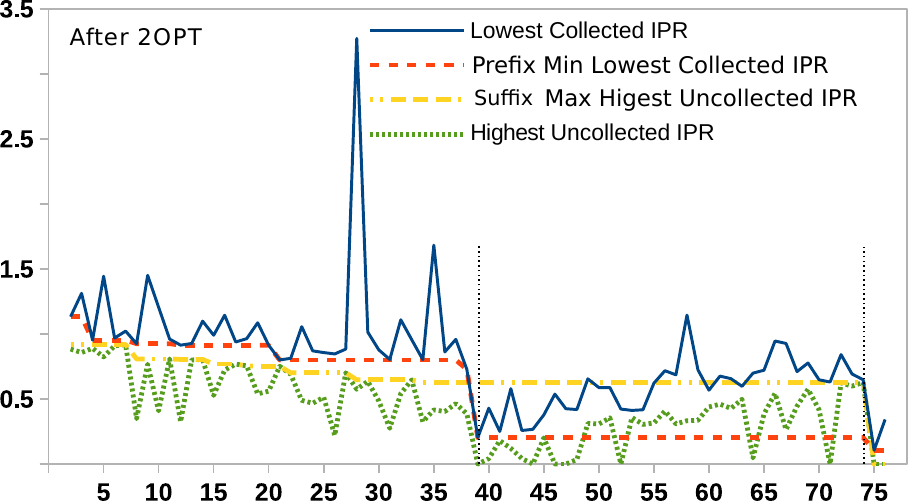}
  \end{minipage}
  \caption
    {City position in a tour (x-axis) vs IPR (y-axis) for (left) a TTP solution with objective value {$77544.88$} as obtained by the PackIterative method for a benchmark instance {\it eil76\_n750\_uncorr\_10.ttp} and for (right)
    the solution with objective value {$72151.46$} as obtained after the application of operator {2OPT} on the PackIterative generated solution for the same TTP instance on the cities between positions 39 and 74, keeping the collection plan unchanged meaning considering no coordination.}
  \label{Eil76_1}
  
\end{figure*}

\paragraph{Disruptions in Trends by {2OPT}} On the TTP solution $\langle t,p\rangle$ shown in \figurename~\ref{Eil76_1} (left), if we apply operator $\Call{2OPT}{t, 39, 74}$ to reverse the cities in the tour segment between positions $39$ and $74$ keeping the collection plan $p$ unchanged, we get a new solution $\langle t',p\rangle$ that is shown in \figurename~\ref{Eil76_1} (right). Notice that the {2OPT} operator affects the prefix minimum sequence $\Pi(L(t',p))$ and the suffix maximum sequence $\Omega(H(t',p))$ in the reversed tour segment $t'$ for $39 \leq k \leq 74$. Further notice that the objective value 72151.46 of the resultant solution $\langle t',p\rangle$ is smaller than the objective value 77544.88 of the solution $\langle t,p\rangle$. This means the resultant solutions in such cases would be mostly rejected by the search algorithm. The degradation of the objective value by the {2OPT} operator is because in the resultant tour, less profitable items are collected in the cities furthest from the end of the tour causing more travelling time. As shown before in \figurename~\ref{Eil76_1} (left), this was not the case in the solution before application of the {2OPT} operator. So this is somewhat clear that the {2OPT} operator results in deviation from the typical trends of $\Pi(L(t,p))$ and $\Omega(H(t,p))$ in TTP.

\subsection{Human Designed Intuitive Coordination}\label{subsec:pgch}

Although the local search based approach mentioned before is a way to obtain an estimate of quality $Q(t')$ for a generated cyclic tour $t'$, invoking the local search method for each generated cyclic tour $t'$ would be costlier.
In this paper, we design a heuristic approach to obtain a modified collection plan $p'$ that is used to get the estimated quality value for the generated cyclic tour $t'$.
We name our proposed approach as Profit Guided Coordination Heuristic (PGCH) for TTP.
The proposed approach aims to fix the disruptions in the trends in the resulting collection plan produced by Operator {2OPT}.

\begin{figure*}[!b]
  \centering
  \begin{minipage}{0.49\textwidth}
    \centering
    \includegraphics[width=\textwidth]{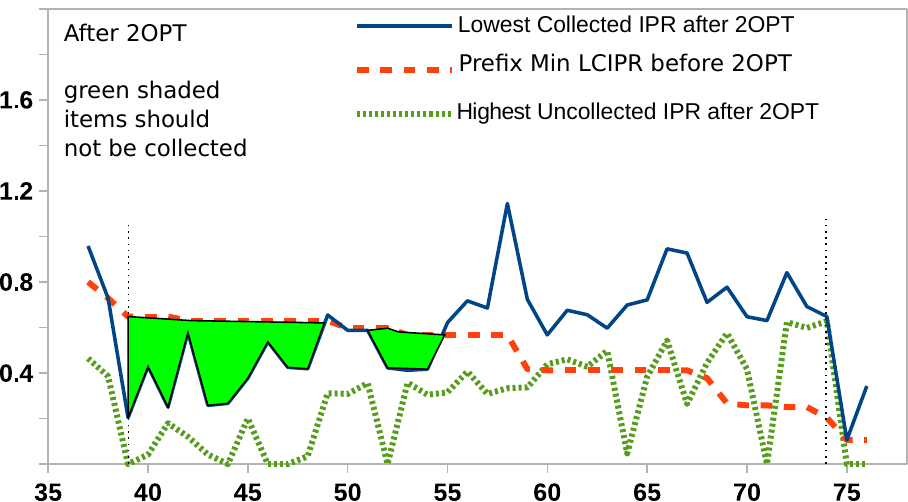}
  \end{minipage}
  \hfill
  \begin{minipage}{0.49\textwidth}
    \centering
    \includegraphics[width=\textwidth]{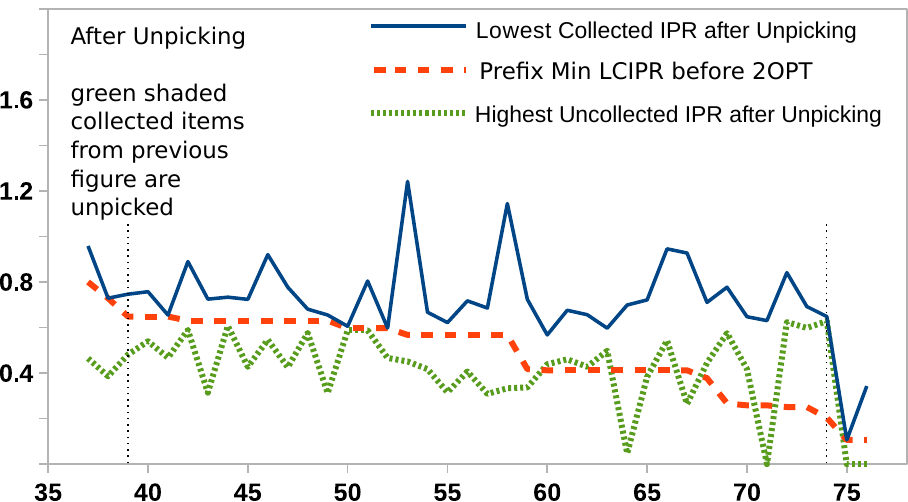}
  \end{minipage}
  \\[1em]
  \begin{minipage}{0.49\textwidth}
    \centering
    \includegraphics[width=\textwidth]{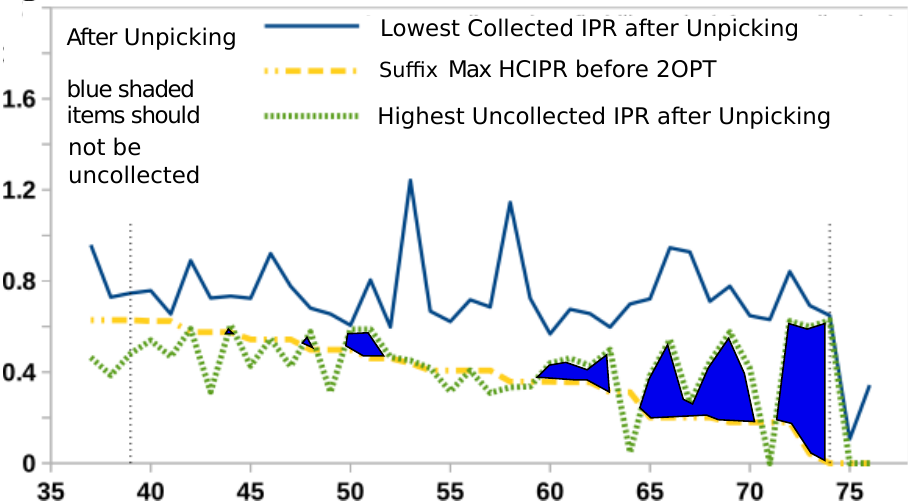}
  \end{minipage}
  \hfill
  \begin{minipage}{0.49\textwidth}
    \centering
    \includegraphics[width=\textwidth]{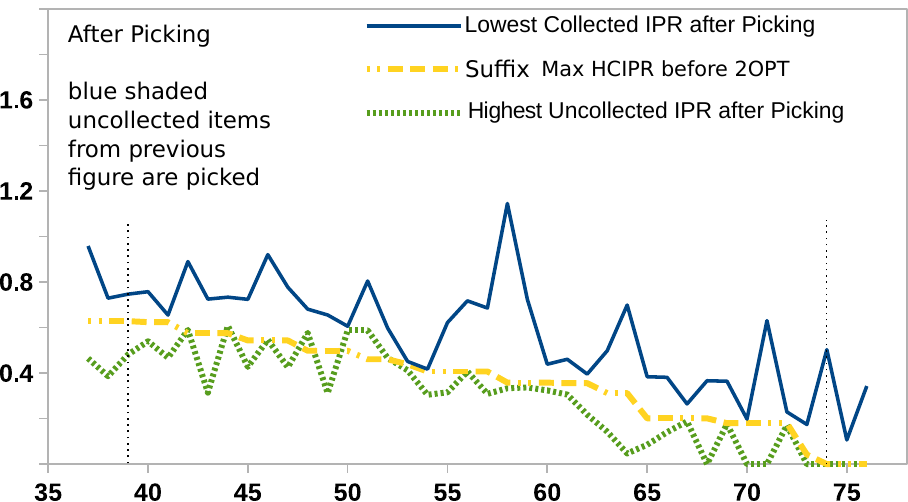}
  \end{minipage}
  \caption
    { City positions in a tour (x-axis) vs IPR (y-axis) when items in the solution shown in \figurename~\ref{Eil76_1} (right) are picked or unpicked. Top-Left: green shaded regions denote collected items that should be unpicked. Top-Right: green shaded collected items in the Top-Left figure are now unpicked. Bottom-Left: blue shaded regions denote uncollected items that should be picked. Bottom-Right: blue shaded uncollected items in the Bottom-Left figure are picked.
    }
  \label{Eil76_2}
  
\end{figure*}

\paragraph{Fixing Trends after {2OPT} using PGCH} After applying {2OPT} as shown in \figurename~\ref{Eil76_1} (right), we fix the deviations of trends in the resultant $\Pi(L(t',p))$ and $\Omega(H(t',p))$ by using $\Pi(L(t,p))$ and $\Omega(H(t,p))$ from \figurename~\ref{Eil76_1} (left) as references. As per PGCH, at any city position $k$ in the reversed tour segment, a collected item $i$ having IPR $r_i$ below $\Pi(L(t,p),k)$ should be unpicked. Such less profitable collected items are shown as green shaded regions in \figurename~\ref{Eil76_2} (Top-Left) and the result of unpicking such less profitable items and so a changed collection plan $\bar{p}$ are shown in \figurename~\ref{Eil76_2} (Top-Right). Further, as per PGCH, at any city at position $k$ in the reversed tour segment, an uncollected item $i$ having IPR $r_i$ above $\Omega(H(t,p),k)$ should be picked. Such more profitable uncollected items in the changed collection plan $\bar{p}$ in \figurename~\ref{Eil76_2} (Top-Right) are shown as blue shaded regions in \figurename~\ref{Eil76_2} (Bottom-Left) and the result of picking such more profitable items and so a further changed collection plan $p'$ are shown in \figurename~\ref{Eil76_2} (Bottom-Right). Nevertheless,  \figurename~\ref{Eil76_2} (Top-Right) and (Bottom-Right) altogether show that such unpicking and picking of the items in the shaded regions would fix the trends of $\Pi(L(t',p))$ and $\Omega(H(t',p))$ by changing the collection plan $p$ to $\bar{p}$ first and then ultimately to $p'$ and thus having $\Pi(L(t',p'))$ and $\Omega(H(t',p'))$. \figurename~\ref{Eil76_3} (left) and (right) respectively show the solutions $\langle t',p\rangle$ and $\langle t',p'\rangle$ with respective objective values $72151.46$ and $78252.18$ while the objective value for $\langle t,p\rangle$ is $77544.88$. So the solution $\langle t',p'\rangle$ could be accepted by the search while $\langle t',p\rangle$ could be rejected.

\paragraph{Overall Comments on PGCH} While the above example shows PGCH helps improve the objective value, this is in general not true. This is because, as mentioned earlier, finding the best collection plan $p'$ to compute the quality value $Q(t')$ of the generated cyclic tour $t'$ is eventually an NP-Hard problem, whereas PGCH is just an approximation heuristic. In fact, PGCH might result into a decrease in the objective value when ({\it i}) distances between cities, not just positions as in PGCH, also affect the objective value, ({\it ii}) all low profitable items in earlier positions might not necessarily be unpicked, ({\it iii}) not enough high profitable items are available in the later positions in the reversed tour segment, or they may not have been picked by PGCH. Regardless of underestimation or overestimation of the objective value, the above positive example is just to make a point that PGCH after {2OPT} helps better evaluate the potential of a changed cyclic tour $t'$ than what just {2OPT} alone does.

\begin{figure*}[!b]
  \centering
  \begin{minipage}{0.49\textwidth}
    \centering
    \includegraphics[width=\textwidth]{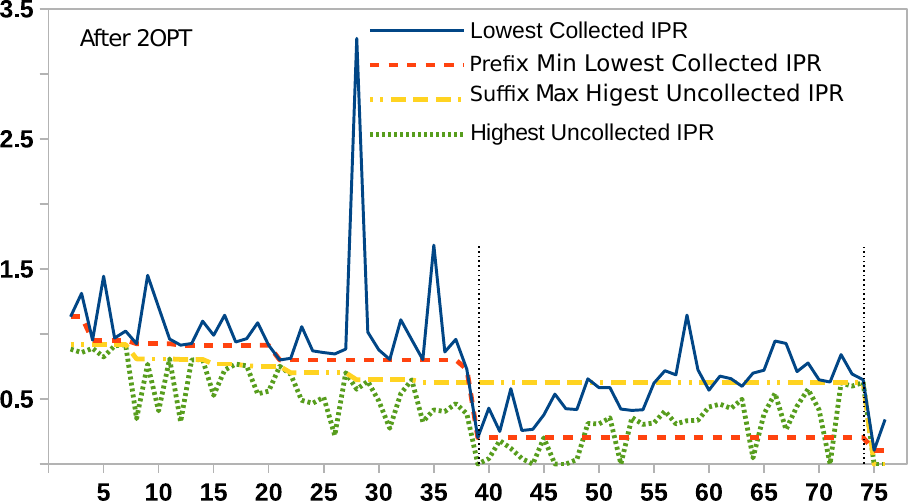}
  \end{minipage}
  \hfill
  \begin{minipage}{0.49\textwidth}
    \centering
    \includegraphics[width=\textwidth]{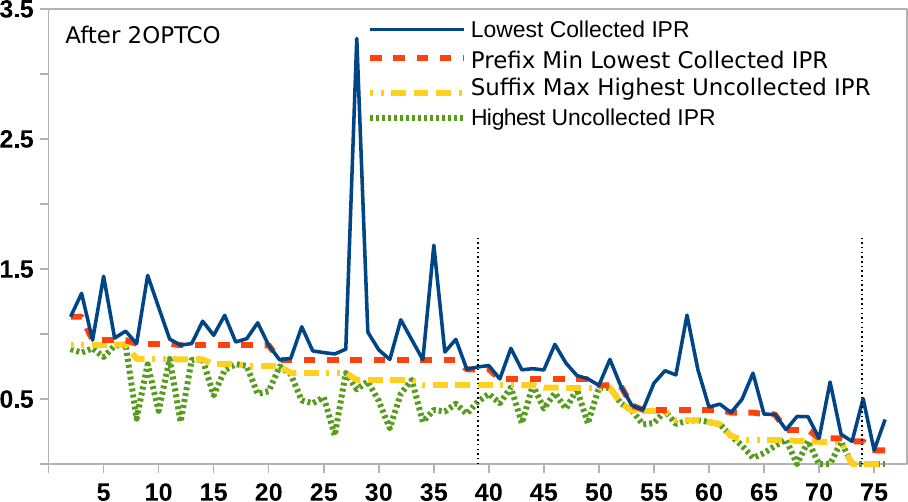}
  \end{minipage}
  \caption
    {
    City positions in a tour (x-axis) vs IPR (y-axis) (left) when  Operator~{2OPT} has just been applied to $t$ keeping $p$ unchanged and (right) when $p$ is changed using PGCH after applying Operator {2OPT}. The Left figure is the same as \figurename~\ref{Eil76_1} (right) and the Right figure is the same as \figurename~\ref{Eil76_2} (Bottom-Right) but cities outside the reversed segment and the prefix minimum of LCIPR are also shown. The objective 0value for the Left solution is $72151.46$ and that for the Right solution is $78252.18$ while that for the solution in \figurename~\ref{Eil76_1} (left) before applying {2OPT} is $77544.88$.
    }
  \label{Eil76_3}
    
\end{figure*}

\paragraph{PGCH Implementation} Algorithm~\ref{algo:PGCH} shows the implementation of our PGCH approach. In the first loop, collected items that have IPR below $\Pi(L(t,p,k))$ are unpicked and in the second loop, uncollected items that have IPR above $\Omega(H(t,p,k))$ are picked.

\begin{algorithm}[!tb]
\caption{Implementing PGCH on the Search Framework}
\label{algo:PGCH}
\begin{algorithmic}
    \Function{ProfitGuidedCoordHeu}{$t,p,t',b,e$}
        \State\LeftComment{defines \Call{CoordHeu}{$t,p,t',b,e$}}
        \State $p' \gets p$
        \For{$b \leq k \leq e$} \LeftComment{unpick forward}
            \For{$i \in I(t'_k)$}
                \If{$p'_i = 1 \land r_i < \Pi(L(t,p,k))$}
                    \State $p'_i \gets 0$
                \EndIf
            \EndFor
        \EndFor
        \For{$e \geq k \geq b$} \LeftComment{pick backward}
            \For{$i \in I(t'_k)$}
                \If{$p'_i = 0 \land r_i > \Omega(H(t,p,k))$}
                    \State $p'_i \gets 1$ {\bf when} $K(p')$ \LeftComment{Knapsack constraint}
                \EndIf
            \EndFor
        \EndFor
        %Compute $\Pi(L(t',p'))$ and $\Omega(H(t',p'))$
        \State \Return $p'$
    \EndFunction
\end{algorithmic}
\end{algorithm}

\begin{lemma}[PGCH]
Given a TTP solution $\langle t', p\rangle$ obtained after applying Operator $\Call{2OPT}{t,b,e}$ on solution $\langle t,p\rangle$, Algorithm~\ref{algo:PGCH} computes $p'$ and so a new solution $\langle t',p'\rangle$. Computing $p'$ and then $N(t',p')$ takes $O(n - b + |I(t'[b,e])|)$ time in total.
\end{lemma}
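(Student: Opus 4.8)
The plan is to split the claimed bound into two parts: the cost of the two loops of Algorithm~\ref{algo:PGCH} that turn $p$ into $p'$, and the cost of re-evaluating $N(t',p')$, exploiting that the changes made by $\textsf{2OPT}$ and PGCH are localised. Throughout I would keep the same bookkeeping convention as in the earlier lemmas ``{2OPT} for TSPC'' and ``{BitFlip} for KPC'', namely that for the current solution $\langle t,p\rangle$ the arrays $w_{t,p}(k)$, $\tau_{t,p}(k)$ together with $P(p)$, $W(p)$ and the reference sequences $\Pi(L(t,p))$ and $\Omega(H(t,p))$ are already available when Function~{CoordHeu} is invoked; since $\langle t,p\rangle$ is held fixed over all $(b,e)$ pairs in the main loop of Function~{TSPS}, these can be produced once per main-loop iteration in $\mathcal{O}(m+n)$ time and are amortised away.

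First I would account for the construction of $p'$. The forward loop visits the $e-b+1$ positions $k\in[b,e]$ of the reversed segment and, for each, scans the items $I(t'_k)$; because $\textsf{2OPT}$ merely permutes the cities of the segment, $\bigcup_{k=b}^{e} I(t'_k)=I(t'[b,e])=I(t[b,e])$, so these scans cost $\mathcal{O}(|I(t'[b,e])|)$ in total, plus $\mathcal{O}(e-b)$ for advancing $k$. Each item is processed in $\mathcal{O}(1)$: the test $r_i<\Pi(L(t,p,k))$ is a lookup-and-compare, and an unpick is one bit change plus $\mathcal{O}(1)$ updates to the running totals $W(p')$ and $P(p')$. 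The backward loop has the same shape and bound, the only extra being the feasibility guard $K(p')$, which is $\mathcal{O}(1)$ because $W(p')$ is kept incrementally. Hence $p'$ is built in $\mathcal{O}((e-b)+|I(t'[b,e])|)$ time, and since $e<n$ gives $e-b<n-b$, this is $\mathcal{O}((n-b)+|I(t'[b,e])|)$.

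Next I would bound the computation of $N(t',p')$. The crucial observation is that $t'$ and $p'$ agree with $t$ and $p$ strictly before position $b$: $\textsf{2OPT}$ leaves the cities at positions $<b$ in place, and PGCH flips only items lying in cities of $t'[b,e]$, so no city at a position $<b$ changes its collected weight; consequently $w_{t',p'}(k)=w_{t,p}(k)$ and $\tau_{t',p'}(k)=\tau_{t,p}(k)$ are inherited for $k<b$. From $k=b$ onwards I would recompute, in order, $w_{t',p'}(k)=w_{t',p'}(k-1)+w_{p'}(t'_k)$, then $s_{t',p'}(k)=s(w_{t',p'}(k))$, then $\tau_{t',p'}(k+1)=\tau_{t',p'}(k)+d(t'_k,t'_{k+1})/s_{t',p'}(k)$, up to $k=n-1$. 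Each of these updates is $\mathcal{O}(1)$, the sole exception being that forming $w_{p'}(t'_k)$ for $k\in[b,e]$ requires a sum over $I(t'_k)$ --- again $\mathcal{O}(|I(t'[b,e])|)$ in total --- whereas for $k>e$ we simply reuse $w_{p'}(t'_k)=w_p(t_k)$. Finally $T(t',p')=\tau_{t',p'}(n)$ and $N(t',p')=P(p')-R\cdot T(t',p')$ cost $\mathcal{O}(1)$, using the $P(p')$ already maintained. This sums to $\mathcal{O}((n-b)+|I(t'[b,e])|)$, and adding the cost of building $p'$ leaves the bound unchanged.

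The main obstacle, I expect, is not the arithmetic but justifying the bookkeeping assumptions rigorously: one must argue that the reference sequences $\Pi(L(t,p))$ and $\Omega(H(t,p))$ --- which are defined from the \emph{old} solution $\langle t,p\rangle$, not from the evolving $p'$ --- are available at unit cost per query, and that even if one chose to rebuild only the slices on $[b,e]$ on the fly, the two boundary values $\Pi(L(t,p),b-1)$ and $\Omega(H(t,p),e+1)$ (which sit outside the disturbed region and would otherwise force a scan of items at positions $<b$ or $>e$) are supplied by the maintained state. Once that is settled, the remainder is a careful tally verifying that every position in $\{b,\dots,n\}$ and every item in $I(t'[b,e])$ is touched only a constant number of times across both loops of Algorithm~\ref{algo:PGCH} and the re-evaluation of $N(t',p')$.
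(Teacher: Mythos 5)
Your proposal is correct and follows essentially the same route as the paper: the two PGCH loops are charged $O(|I(t'[b,e])|)$ (the $O(e-b)$ position overhead being absorbed by $O(n-b)$), and re-evaluating $N(t',p')$ requires recomputing knapsack weight, speed, and time only from position $b$ to the end, costing $O(n-b)$, with the reference sequences $\Pi(L(t,p))$ and $\Omega(H(t,p))$ precomputed once per main-loop iteration of Function {TSPS} exactly as the paper handles separately in its Trend Lines lemma. Your write-up is simply a more detailed accounting of the same argument.
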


\begin{proof} For each application of Algorithm~\ref{algo:PGCH}, the two loops run for $O(|I(t'[b,e])|)$ times. Then, to compute $N(t',p')$, as an approximation of $Q(t')$, for each city from positions $b$ to $n-1$ in $t'$, we need to compute the knapsack weight and the travelling speed, which needs $O(n-b)$ time.
Thus, PGCH and computation of $N(t',p')$ takes $O(n-b + |I(t'[b,e])|)$ time in total.
\end{proof}

Note PGCH implementation requires computation of $\Pi(L(t,p))$ and $\Omega(H(t,p))$ for the current solution $\langle t,p\rangle$ in the beginning of each iteration of the main loop in Function {TSPS}. Below we show the time complexity of this.

\begin{lemma}[Trend Lines] Computing $\Pi(L(t,p))$ and $\Omega(H(t,p))$ for any solution $\langle t,p\rangle$ requires $O(n+m)$ time.
\end{lemma}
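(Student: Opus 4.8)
The plan is to exhibit an $O(n+m)$ algorithm that computes both $\Pi(L(t,p))$ and $\Omega(H(t,p))$ and then argue its running time step by step. The key observation is that the series $L(t,p) = \langle L_1,\ldots,L_{n-1}\rangle$ and $H(t,p) = \langle H_1,\ldots,H_{n-1}\rangle$ are indexed by tour positions, and the only work at position $k$ is a scan over the items $I(t_k)$ available in city $t_k$. Since the cities $\{t_1,\ldots,t_{n-1}\}$ partition the item set $I$ via the map $i \mapsto l_i$ (recall $I(1)=\{\}$ and the $I(c)$ are disjoint with $\bigcup_c I(c) = I$), the total cost of all these per-city scans is $\sum_{k=1}^{n-1} |I(t_k)| = m$, i.e. $O(m)$ overall, not $O(nm)$.

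Concretely I would proceed as follows. First, for each position $k \in [1,n-1]$, scan $I(t_k)$ once to compute $L_k = \min_{i \in I(t_k),\, p_i=1} r_i$ and $H_k = \max_{i \in I(t_k),\, p_i=0} r_i$ (with the conventional $+\infty$ and $-\infty$, or suitable sentinels, when the respective index set is empty, matching how $\Pi$ and $\Omega$ are intended to be used downstream); over all $k$ this is $O(m)$ time, and touching each of the $n-1$ positions to initialise its accumulator is $O(n)$. Second, having the arrays $L(t,p)$ and $H(t,p)$ of length $n-1$ in hand, apply the recurrences from Definition~\ref{def:prefixmin} and Definition~\ref{def:suffixmax}: $\Pi(L(t,p),k) = \min(\Pi(L(t,p),k-1), L_k)$ in one forward sweep and $\Omega(H(t,p),k) = \max(H_k, \Omega(H(t,p),k+1))$ in one backward sweep, each costing $O(n)$ by the $O(n)$ bound already asserted for the prefix-minimum and suffix-maximum sequences in those definitions. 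Adding the two phases gives $O(n) + O(m) = O(n+m)$.

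The only real subtlety — and the step I would be most careful about — is the amortisation claim $\sum_{k=1}^{n-1}|I(t_k)| = m$: it relies on the tour $t$ being a genuine permutation of the cities (so each city, hence each item bucket, is visited exactly once) together with the fact that the $I(c)$ are pairwise disjoint and exhaust $I$, both of which are part of the TTP setup in Section~\ref{sec:TTP}. One also needs to note that the item data $(w_i,\pi_i,l_i)$ and the ratios $r_i$ are available in $O(1)$ per item (IPRs can be precomputed once for the whole instance, outside any single call), so no hidden cost is incurred inside the scans. Everything else is the routine bookkeeping of two linear passes, so I would state it briefly and not belabour it.
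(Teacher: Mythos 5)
Your proof is correct and follows essentially the same argument as the paper's (much terser) proof: the cost is $O(m)$ for considering all items, since the per-city scans of $I(t_k)$ amortise to $m$ over the tour, plus $O(n)$ for the two linear prefix-minimum and suffix-maximum sweeps over the positions. Your version merely spells out the amortisation and sentinel details that the paper leaves implicit.
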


\begin{proof}
Based on definitions~\ref{def:prefixmin} and \ref{def:suffixmax}, computing these sequences for any solution $\langle t,p\rangle$ needs considering all items in $O(m)$ time and considering all cities in $O(n)$ time. So, the total needed time is $O(n+m)$.
\end{proof}

\subsection{Coordination Based Item Selection}
\label{subsec:cish}

In Function {KPS} when called from {TTPS} in Algorithm~\ref{algo:MainSearchFramework}, Function $\Call{SelectItemsSubset}{t,p,b,e}$ is by default defined by Function {SelectTourSegmentItems} that returns all items in $I(t[1,n-1])$, i.e. all items in $I$, in an uncoordinated fashion. As mentioned before, it is called the standard bit-flip search (SBFS). However, SBFS leads to an unguided exploration of the collection plans. In this paper, we present a targeted form of bit-flip search and name it {\em marginal bit-flip search} (MBFS). MBFS restricts the items to be explored using the cyclic tour in a coordinated fashion. We call our proposed approach for selection of the items to be explored as Coordinated Item Selection Heuristic (CISH) and we define $\Call{SelectItemsSubset}{t,p,b,e}$ as such.

Before going into further details, let us define marginally collected and uncollected items in a given tour segment for a given TTP solution. The marginally collected items have the lowest collected IPRs at the cities where the prefix minimum sequence changes as we move from low positions to the high positions in the cyclic tour. Similarly, the marginally uncollected items have the higest uncollected IPRs at the cities where the suffix maximum sequence changes as we move from high positions to low positions.

\begin{definition}[Marginally Collected Item]\label{def:margincolleced}
Given a TTP solution $\langle t,p\rangle$ and a tour segment $t[b,e]$, an item $i \in I(t[b,e])$ is a {\em marginally collected item} if there exists $k: i \in I(t_k)$ such that $r_i = L(t,p,k) = \Pi(L(t,p),k)$ and there exists no $k': b\leq  k'< k$ such that $L(t,p,k') = L(t,p,k)$.
\end{definition}

\begin{definition}[Marginally Uncollected Item]\label{def:marginuncolleced}
Given a TTP solution $\langle t,p\rangle$ and a tour segment $t[b,e]$, an item $i \in I(t[b,e])$ is a {\em marginally uncollected item} if there exists $k: i \in I(t_k)$ such that $r_i = H(t,p,k) = \Omega(H(t,p),k)$ and there exists no $k': k < k' \leq e$ such that $H(t,p,k') = H(t,p,k)$.
\end{definition}

Our CISH approach, in case of using MBFS in Function {KPS} considers unpicking only marginally collected items and picking only marginally uncollected items. Algorithm~\ref{algo:CISH} shows Function~$\Call{SelectMarginalItems}{t,p,b,e}$, that implements the CISH approach. This function returns at most one arbitrarily selected marginally collected item and at most one arbitrarily selected marginally uncollected item from each city in the tour segment $t[b,e]$, even though multiple marginally collected items or multiple marginally uncollected items could exist in a city.

\begin{algorithm}[!tb]
\caption{Implementing CISH for using MBFS in Function KPS}
\label{algo:CISH}
\begin{algorithmic}
    \Function{SelectMarginalItems}{$t,p,b,e$}
        \State\LeftComment{defines \Call{SelectItemsSubset}{$t,p,b,e$}}
        \State $I_\textsf{collected} \gets $ \{ at most one marginally collected item
		\State\qquad\qquad\qquad\qquad for each position $k \in [b,e]$ \}
        \State $I_\textsf{uncollected} \gets $ \{ at most one marginally uncollected item
		\State\qquad\qquad\qquad\qquad for each position $k \in [b,e]$ \}
        \State $I_\textsf{marginal} \gets I_\textsf{collected} \cup I_\textsf{uncollected}$
        \State \Return $I_\textsf{marginal}$
    \EndFunction
\end{algorithmic}
\end{algorithm}

We now analyse the time complexity of applying Operator~{BitFlip} in case of using MBFS in Function {KPS} on a marginally collected or uncollected item and subsequently update the prefix minimum and suffix maximum sequences.

\begin{lemma}
Applying the {BitFlip} operator on a marginally collected or uncollected item requires $O(n+m)$ time to recompute $\Pi(L(t,p))$ and $\Omega(H(t,p))$ and thus update marginally collected and uncollected items.
\end{lemma}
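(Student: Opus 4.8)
The plan is to argue that a single \textsf{BitFlip} perturbs the IPR statistics of only one city, so that the two trend lines and then the marginal-item sets can all be rebuilt by a constant number of linear passes over cities and items, each costing $O(n)$ or $O(m)$.

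First I would establish locality. Suppose \textsf{BitFlip}$(p,i)$ is applied with $i\in I(t_k)$ (an unpick if $i$ is marginally collected, a pick if marginally uncollected). By the definitions of $L(t,p,k')$ and $H(t,p,k')$, these quantities depend only on the items of $I(t_{k'})$ together with their collected/uncollected status, so flipping $p_i$ can alter $L(\cdot)$ and $H(\cdot)$ only at the single position $k$. Recomputing $L(t,p,k)=\min_{i'\in I(t_k)\wedge p_{i'}=1} r_{i'}$ and $H(t,p,k)=\max_{i'\in I(t_k)\wedge p_{i'}=0} r_{i'}$ needs one scan of $I(t_k)$, costing $O(|I(t_k)|)=O(m)$ in the worst case (with the conventions $L=+\infty$, $H=-\infty$ for a city having no collected resp.\ no uncollected item).

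Next I would update the trend lines. With the refreshed series $L(t,p)$ and $H(t,p)$ in hand, the prefix-minimum sequence $\Pi(L(t,p))$ and the suffix-maximum sequence $\Omega(H(t,p))$ are recomputed in $O(n)$ each via Definitions~\ref{def:prefixmin} and~\ref{def:suffixmax}; in fact only entries at positions $\ge k$ of $\Pi(L(t,p))$ and at positions $\le k$ of $\Omega(H(t,p))$ can change, but even a full recomputation stays within $O(n)$. Equivalently, this step is an application of the Trend Lines lemma, which already gives the bound $O(n+m)$. Finally, rebuilding the marginal-item sets: by Definitions~\ref{def:margincolleced} and~\ref{def:marginuncolleced}, the marginally collected items are precisely those that, at their city, realise a value of $L(t,p,\cdot)$ equal to the running prefix minimum at the \emph{first} position where that value occurs, and symmetrically for the marginally uncollected items and the suffix maximum. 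A single left-to-right pass over positions $k$ (maintaining the running prefix minimum and detecting the first occurrence of each new minimal value), together with, at each such position, one scan of $I(t_k)$ to exhibit a witnessing item, suffices; summed over all cities this is $O(m)$ work on items plus an $O(n)$ pass, and the symmetric right-to-left pass handles the uncollected side. Adding the three stages yields the claimed $O(n+m)$.

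The main obstacle is really just a bookkeeping subtlety rather than a deep difficulty: although exactly one entry of $L$ (or $H$) changes, recomputing that entry already costs $O(|I(t_k)|)$, which can be as large as $\Theta(m)$, and the induced change can ripple across $\Theta(n)$ positions of the prefix-minimum sequence, so the bound cannot be sharpened to $O(n)$ or $O(m)$ alone. One must also be careful with the degenerate cities (no collected or no uncollected item) when maintaining the $\min$/$\max$ conventions and when deciding which items qualify as marginal, so that no such city is spuriously counted.
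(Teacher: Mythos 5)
Your proposal is correct and follows essentially the same route as the paper, whose proof is simply a one-line appeal to Definitions~\ref{def:prefixmin}, \ref{def:suffixmax}, \ref{def:margincolleced}, and \ref{def:marginuncolleced}: recompute the affected LCIPR/HUIPR entry, the two trend lines, and the marginal-item witnesses directly from those definitions in linear passes. You merely spell out the locality and bookkeeping details (the single changed position, the $O(m)$ city scan, the $O(n)$ prefix/suffix passes) that the paper leaves implicit.
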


\begin{proof}
The proof is obtained from Definitions \ref{def:prefixmin}, \ref{def:suffixmax}, \ref{def:margincolleced}, and \ref{def:marginuncolleced}.
\end{proof}

Note that Function~{SelectMarginalItems} could not be used when Function~\Call{KPS}{$t',p,b,e$} is called from Function {SearchGuidedCoordHeu} as part of SGCH implementation in Algorithm~\ref{algo:SGCH}. The reason is after applying Operator~{2OPT}, calling Function~{SelectMarginalItems} do not find any marginally collected or uncollected items in $I(t'[b,e])$ since the prefix minimum $\Pi(L(t',p))$ and suffix maximum $\Omega(H(t',p))$ sequences, as shown in \figurename~\ref{Eil76_1} (right), do not exhibit any changes within the reversed tour segment. As such, for SGCH, we could at best use Function~{SelectTourSegmentItems} as is shown in Algorithm~\ref{algo:SGCH}.

\subsection{Machine Learning Based Coordination}\label{subsec:lgch}

As shown in \figurename~\ref{Eil76_1} (left), the prefix minimum and suffix maximum sequences roughly demarcate collected and uncollected items creating non-linear demarcation lines. As such, from a number of generated example TTP solutions for the same given TTP instance, a properly trained non-linear binary classifier (NLBC) could learn classification of an item as collected or uncollected at a given position of its city in a given cyclic tour and the training could even be online and instance specific. After {2OPT} in Function {TSPS} in Algorithm~\ref{algo:AuxSearchFramework}, we then can use the trained NLBC to decide which item in the reversed segment is to be collected and which one is not to be. This essentially replaces the local search based or human designed intuitive coordination with machine learning based coordination. Nevertheless, we name this proposed approach as Learning Guided Coordination Heuristic (LGCH).

Given a typical timeout limit of $10$ minutes to solve each problem instance, as is used as standard in evaluation of TTP methods, it is difficult to perform online training of NLBC models within the timeout limit before using them during search for the rest of the left-out time. We still perform instance specific online training within the timeout limit. However, after running preliminary experiments, we keep the learning effort as low as practical and set required parameter values as deemed appropriate. Below we describe the NLBC models, their training procedures, and their use during search.

\paragraph{Training and Validation Examples}

For a given problem instance, we generate $\dfrac{30}{\max_c|I(c)|}$ solutions to be used in training and half of that number of solutions to be used in validation. The training and validation solutions are generated by using Chained Lin-Kernighan heuristic~\cite{applegate2003chained} for cyclic tours followed by PackIterative~\cite{faulkner2015approximate} and Insertion~\cite{mei2014improving} for collection plans for the cyclic tours. Keeping the generated cyclic tours unchanged, only the generated collection plans are further improved by running our proposed MBFS algorithm and the improved collection plans are actually used in training and validation of the neural network. In this way, our learning model captures the characteristics of the initialisation and improvement of the collection plan by the MBFS algorithm. Then, we use the learning model to define Function {CoordHeu} to be used within Function {TSPS} in Algorithm~\ref{algo:AuxSearchFramework}. Nevertheless, the actual input to the NLBC models are the {\em normalised item profitability ratio} $\textsf{nipr}(i)=\dfrac{r_i}{\max_{i'}r_{i'}}$ for an item $i$ and its {\em normalised position  $\textsf{np}(i)=\dfrac{t(l_i)}{n}$} in a cyclic tour $t$ of a TTP solution $\langle t, p\rangle$. On the other hand the actual output of the NLBC models are $p_i$ denoting whether an item $i$ is collected or not in the collection plan $p$ of the same TTP solution $\langle t,p\rangle$. To be more specific, input features $\textsf{nipr}(i)$ and $\textsf{np}(i)$ of each item $i$ is fed to the NLBC model at a time and $p_i$ is predicted for the same item. So the training examples comprise all items in all training solutions. However, each pair $\langle \textsf{nipr}(i), \textsf{np}(i)\rangle$ can appear in multiple solutions. So we take only unique such pairs from the generated training solutions and use the collection state $p_i$ with the highest frequency over all the corresponding solutions in training the NLBC models.
Conceptually, an NLBC model would make an overall prediction of whether an item should be collected or not when its city is in a certain position in a possible cyclic tour for the given TTP instance.

\paragraph{Neural Networks as NLBCs} We use neural networks to represent NLBCs. For just two inputs $\textsf{nipr}(i)$ and $\textsf{np}(i)$ and one output $p_i$ for any given item $i$, we could think of simpler statistical models. We choose neural networks because the output for a given $i$ not only does explicitly depend on just $\textsf{nipr}(i)$ and $\textsf{np}(i)$ but also implicitly depends on the inputs features for the other $i$ values. In our view, the neural networks through their weights are a promising means to accumulate the implicit dependencies over $i$ values and generalise over problem instances. As for using more input features, we have tried to incorporate distance, but in our preliminary experiments, the city positions appeared to be more promising than the distances of the cities from City 1 in the forward or backward direction. Having discussed this, we acknowledge that further experiments with various machine learning techniques and input features are necessary to make any more meaningful conclusion in this regard. We further emphasise that our main focus in developing LGCH is to show that a machine learning approach could effectively capture the characteristics of our human designed coordination heuristics. Of course better performing machine learning approaches could be developed and we consider that to be out of scope of the paper.

\paragraph{Neural Network Architecture and Training} \figurename~\ref{fig:neuralnet} shows the architecture of the neural network. It has three layers: one input layer, one hidden layer and one output layer. The first two layers have $\ln m$ neurons each, where $m$ is the number of items. The last layer has only one neuron. We use the rectified linear unit (ReLU) as the activation function in the neurons in the first two layers and the sigmoid activation function in the neuron in the last layer.
We use the feed forward neural network architecture in the mlpack C++ library~\cite{curtin2023mlpack} with its default optimiser~\cite{curtin2021ensmallen}.
We train the same neural network architecture $10$ times to get $10$ separately trained models for each TTP instance.
We then take the best trained model in terms of the number of the correctly classified pairs of $\langle \textsf{nipr}(i), \textsf{np}(i)\rangle$ for the validation examples.
Henceforth, we refer to the best trained neural network as the neural network $\mathcal{N}$ and use $\mathcal{N}(\textsf{nipr}(i),\textsf{np}(i)) = p_i$ to denote its prediction $p_i$ made for the pair $\langle \textsf{nipr}(i),\textsf{np}(i)\rangle$.

 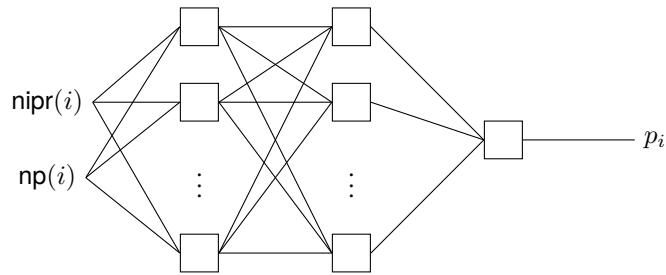
\begin{figure}[!b]
 \centering
 \begin{tikzpicture}[scale=1.0]
  \node (inputr) at (0,0.5) {$\textsf{nipr}(i)$};
  \node (inputrho) at (0,-0.5) {$\textsf{np}(i)$};
  \node[draw,minimum width=0.5cm,minimum height=0.5cm] (node11) at (2,1.5) {};
  \node[draw,minimum width=0.5cm,minimum height=0.5cm] (node12) at (2,0.5) {};
  \node (node13) at (2,-0.5) {\vdots};
  \node[draw,minimum width=0.5cm,minimum height=0.5cm] (node14) at (2,-1.5) {};
  \node[draw,minimum width=0.5cm,minimum height=0.5cm] (node21) at (4,1.5) {};
  \node[draw,minimum width=0.5cm,minimum height=0.5cm] (node22) at (4,0.5) {};
  \node (node23) at (4,-0.5) {\vdots};
  \node[draw,minimum width=0.5cm,minimum height=0.5cm] (node24) at (4,-1.5) {};
  \node[draw,minimum width=0.5cm,minimum height=0.5cm] (node31) at (6,0) {};
  \node (output) at (8,0) {$p_i$};

  \draw[-] (inputr.east) -- (node11.west);
  \draw[-] (inputr.east) -- (node12.west);
  \draw[-] (inputr.east) -- (node14.west);
  \draw[-] (inputrho.east) -- (node11.west);
  \draw[-] (inputrho.east) -- (node12.west);
  \draw[-] (inputrho.east) -- (node14.west);
  \draw[-] (node11.east) -- (node21.west);
  \draw[-] (node11.east) -- (node22.west);
  \draw[-] (node11.east) -- (node24.west);
  \draw[-] (node12.east) -- (node21.west);
  \draw[-] (node12.east) -- (node22.west);
  \draw[-] (node12.east) -- (node24.west);
  \draw[-] (node14.east) -- (node21.west);
  \draw[-] (node14.east) -- (node22.west);
  \draw[-] (node14.east) -- (node24.west);
  \draw[-] (node21.east) -- (node31.west);
  \draw[-] (node22.east) -- (node31.west);
  \draw[-] (node24.east) -- (node31.west);
  \draw[-] (node31.east) -- (output.west);
 \end{tikzpicture}
 \caption{The neural network architecture used in representing NLBCs in our proposed LGCH. The architecture has 3 layers with two input features $\textsf{nipr}(i) = \dfrac{r_i}{\max_{i'}r_{i'}}$ and $\textsf{np}(i) = \dfrac{t(l_i)}{n}$ and one output $p_i$. The first two layers have $\ln(m)$ neurons, where $m$ is the number of items.}
 \label{fig:neuralnet}
 
 \end{figure}

\paragraph{LGCH Implementation Using Neural Network Predictions} For Function~{CoordHeu} in Function~{TSPS} in Algorithm~\ref{algo:AuxSearchFramework}, we define Function $\Call{LearningGuidedCoordHeu}{t,p,t',b,e}$ to be returning $p'$ where $p'_i = \mathcal{N}(\textsf{nipr}(i),\textsf{np}(i))$ for $i \in I(t'[b,e])$ and $p'_i = p_i$ for $i\not\in I(t'[b,e])$. Note that because of the knapsack constraint, the precise implementation, as shown in Algorithm~\ref{algo:LGCH}, needs unpicking of all items followed by picking of the items predicted to be collected in the reversed tour segment.

 \begin{algorithm}[!tb]
\caption{Implementing LGCH on the Search Framework}
\label{algo:LGCH}
\begin{algorithmic}
    \Function{LearningGuidedCoordHeu}{$t,p,t',b,e$}
        \State\LeftComment{defines \Call{CoordHeu}{$t,p,t',b,e$}}
        \State $p' \gets p$
        \For{$i \in I(t'[b,e])$} \LeftComment{any order}
            \If{$p'_i = 1$}
                \State $p'_i \gets 0$
            \EndIf
        \EndFor
        \For{$i \in I(t'[b,e])$} \LeftComment{any order}
            \State $\textsf{NIPR} \gets \dfrac{r_i}{\textsf{max}_{i'}(r_i')}$
            \State $\textsf{NP} \gets \dfrac{t'(l_i)}{n}$
            \If{$\mathcal{N}(\textsf{NIPR}, \textsf{NP}) = 1$}
                \State $p'_i \gets 1$ {\bf when} $K(p')$ \LeftComment{Knapsack constraint}
            \EndIf
        \EndFor
        \State \Return $p'$
    \EndFunction
\end{algorithmic}
\end{algorithm}

\begin{algorithm}[!tb]
\caption{Implementing LGCH Efficiently on the Search Framework}
\label{algo:boundary}
\begin{algorithmic}
    \Function{LearningGuidedCoordHeu}{$t,p,t',b,e$}
        \State\LeftComment{defines \Call{CoordHeu}{$t,p,t',b,e$}}
        \State $p' \gets p$
        \For{$i \in I(t'[b,e])$} \LeftComment{unpick any order}
            \If{$p'_i = 1~\land~r_i < \mathcal{B}[t'(l_i)]$}
                \State $p'_i \gets 0$
            \EndIf
        \EndFor
        \For{$e \geq k \geq b$} \LeftComment{pick backward} 
            \For{$i \in I(t'_k)$}
                \If{$p'_i = 0~\land~r_i \geq \mathcal{B}[k]$}
                    \State $p'_i \gets 1$ {\bf when} $K(p')$
                \EndIf
            \EndFor
        \EndFor
        \State \Return $p'$
    \EndFunction
    \State
    \Function{computeBPRs}{$\mathcal{N}$}
    \State $\mathcal{B}:$ BPR for each position $k$
    \State $\mathcal{R} \gets$ sort all unique $r_i$ in increasing order
    \For{$0 < k < n$}
        \State $\mathcal{B}[k] \gets \Call{computeBPR}{\mathcal{N}, \mathcal{R}, k}$
    \EndFor
    \Return $\mathcal{B}$
    \EndFunction
    \State
    \Function{computeBPR}{$\mathcal{N}, \mathcal{R}, k$}
    \State $\textsf{NP} \gets \dfrac {k}{n}$
    \State ${low} \gets 0$
    \State ${high} \gets |\mathcal{R}| - 1$
    \While{${low} \leq {high}$} \LeftComment{perform binary search}
        \State ${mid} \gets ({low} + {mid})/2$
        \State $\textsf{NIPR} \gets \dfrac{\mathcal{R}[mid]}{\textsf{max}_i(r_i)}$

        \If{\Call{$\mathcal{N}$}{$\textsf{NIPR},\textsf{NP}$} = 1}
            \State ${high} \gets {mid} - 1$
        \Else~ \LeftComment{{\bf if}{~\Call{$\mathcal{N}$}{$\textsf{NIRP},\textsf{NP}$} = 0}}
            \State ${low} \gets {mid} + 1$
        \EndIf
    \EndWhile
    \If{${low} = |\mathcal{R}|$}
        \State \Return $\textsf{max}_i(r_i) + 1$
    \Else
        \State \Return $\mathcal{R}[{low}]$
    \EndIf
    \EndFunction
\end{algorithmic}
\end{algorithm}

\paragraph{Reusing Neural Network Predictions} Since for an item, the neural network $\mathcal{N}$ only needs $\textsf{nipr}(i)$ and $\textsf{np}(i)$, we can actually make predictions for all items and all positions beforehand and store them. This would certainly save the time required to recompute the predictions for the same items and the same positions over and over again. In fact, our preliminary experiment shows that recomputation of the predictions becomes costlier since each call of $\mathcal{N}$ is arguably compute intensive. However, a straightforward approach to store all predictions for all items for all positions require $O(nm)$ memory and more importantly needs $O(nm)$ calls of the costlier computation of $\mathcal{N}$. In this paper, we propose an alternative strategy to store only one profitability ratio for each position and thus taking only $O(n)$ memory and $O(n\log_2 m)$ calls of $\mathcal{N}$. The idea is to store the profitability ratio, called the {\em boundary profitability ratio (BPR)} that approximately demarcates the collected items from the uncollected items in a given position. The idea is again based on the previously mentioned key guideline for TTP that at a given position, more profitable items are more likely to be collected. The notions of lowest collected IPR and highest uncollected IPR are relevant in this context. However, instead of two such IPRs, we rather use one BPR in this case. Nevertheless, Algorithm~\ref{algo:boundary} shows our implementation of computing BPR for each position. In Function~{computeBPRs} in Algorithm~\ref{algo:boundary}, we first sort profitability ratios of all items in a non-decreasing order and store only unique values in increasing order in $\mathcal{R}$. Then, for each position $k$, we store in $\mathcal{B}[k]$ the value returned by Function~\Call{computeBPR}{$\mathcal{N},\mathcal{R},k$} that runs binary search to find the profitability ratio below which items are not collected at position $k$. Next, we redefine Function $\Call{LearningGuidedCoordHeu}{t,p,t',b,e}$ to be returning $p'$ where for $i \in I(t'[b,e])$, $p'_i = 1$ when $r_i \geq \mathcal{B}[t'(l_i)]$ and $p'_i = 0$ when $r_i < \mathcal{B}[t'(l_i)]$, and for $i \not\in I(t'[b,e])$, $p'_i = p_i$. Note that because of the knapsack constraint, the precise implementation, as shown in Algorithm~\ref{algo:boundary}, needs unpicking of all items followed by picking of the items predicted to be collected in the reversed tour segment. Also, note that after computing BPRs in $\mathcal{B}$, we no longer need the neural network $\mathcal{N}$ and BPRs are sufficient for the purpose of our machine learning guided coordination heuristic.

After computing BPRs using the trained neural network only to replace the same neural network with the computed BPRs, the same question could come again whether we could use a simpler machine learning model. Considering the scope of this work, we leave the quest for finding a better machine learning model for future. However, we make a particular note that an explicit metric to determine the BPRs is not known to us and we have just relied on the implicit power of a neural network for this.

\section{Experiments}
\label{sec:experiments}

We describe the benchmark instances that we use in our experiments. We also discuss the experiment settings and evaluation metrics. Then, we compare various versions of our proposed solver. Finally, we compare our proposed solver with existing state-of-the-art TTP solvers.

\subsection{Benchamrk TTP Instances}

TTP solvers are typically evaluated using the benchmark instances introduced in~\cite{polyakovskiy2014comprehensive}. Each TTP benchmark instance has been generated based on the following things:

\begin{itemize}
\itemsep0ex
\item A symmetric TSP instance with 51 to 85900 cities as taken from TSPLIB \cite{reinelt1991tsplib}. While generating the benchmark instances, the number of cities has been used in determining the total number of items.
\item A set $I(c)$ of 1, 3, 5 or 10 items for each city $c$. So each TTP instance has $m = (n-1) \times |I(c)|$ items. Note that $\max_cI(c)$ is used in generating training solutions for LGCH in Section~\ref{subsec:lgch}.
\item Weights and profits of all items are ({\it i}) bounded and strongly correlated, or ({\it ii}) uncorrelated but weights are similar for all items, or ({\it iii}) fully uncorrelated. 

\item A knapsack with a weight capacity indicator ranging from 1 to 10, where larger indicator means larger knapsack capacity (not the knapsack capacity itself) \cite{polyakovskiy2014comprehensive}.
\end{itemize}

Note that the exact TTP instances used in our experiments are downloaded from \url{https://cs.adelaide.edu.au/~optlog/CEC2014COMP_InstancesNew/}. These instances have from $76$ to $33810$ cities and from $75$ to $338090$ items with the knapsack capacity from $5780$ to $153960049$ unit of weight. These instances are divided into $3$ categories \cite{el2018efficiently,el2016population}. Below we briefly describe the three categories.

\begin{itemize}
\itemsep1ex
\item {\bf CatA:} The knapsack weight capacity is relatively small. There is only one item in each city. The weights and profits of the items are bounded and strongly correlated.
\item {\bf CatB:} The knapsack weight capacity is moderate. There are 5 items in each city. The weights and profits of the items are uncorrelated. The weights of all items are similar.
\item {\bf CatC:} The knapsack weight capacity is high. There are 10 items in each city. The weights and profits of the items are uncorrelated.
\end{itemize}

As shown below, there are $20$ TTP instances in each of the above three categories. The TTP instance names in each category are based on the names of the same TSP instances that are used in generating the TTP instances. For each TSP instance, three TTP instances are generated for three categories, just by changing the item distribution as discussed above in the description of the categories. Notice that the numbers of cities appear in the names of the instances. Depending on the categories, the numbers of items are various multiples of the numbers of cities.

\begin{multicols}{5}
\begin{enumerate}
\itemsep0ex
\item {eil76}
\item {kroA100}
\item {ch130}
\item {u159}
\item {a280}
\item {u574}
\item {u724}
\item {dsj1000}
\item {rl1304}
\item {fl1577}
\item {d2103}
\item {pcb3038}
\item {fnl4461}
\item {pla7397}
\item {rl11849}
\item {\sf\small usa13509}
\item {brd14051}
\item {d15112}
\item {d18512}
\item {pla33810}
\end{enumerate}
\end{multicols}

We analyse the performance of the solvers on each category, but for overall analyses, we also use all $60$ instances from the three categories altogether.
In the charts, unless mentioned otherwise, performance on the instances is plotted in the order CatA, CaB, CatC of categories and within each category in the order of the instances as shown above.
Notice that the order of instances in this way within each category is roughly in the order of their sizes.

\subsection{Settings}

We run each solver version on each TTP instance $10$ times, each time with a standard timeout of $10$ minutes.
For each run in all experiments, we ensure a new initial cyclic tour is generated using the Chained Lin-Kernighan heuristic~\cite{applegate2003chained} whenever an initial cyclic tour is needed in each run or in each restart in a run. We run all experiments on the high performance computing cluster Gowonda with a 2~GB memory limit and an Intel Xeon CPU X5650 running at 2.66~GHz on each machine.

To measure performance differences across solvers,
we use the relative deviation index~(RDI)~\cite{kim1996simulated}
for each solver on each TTP instance.
RDI for a given solver on a given TTP instance is defined as $\dfrac{N_\textsf{mean} - N_\textsf{min}}{N_\textsf{max} - N_\textsf{min}}\times 100$ where $N_\textsf{max}$ and $N_\textsf{min}$ are respectively the maximum and minimum $N(t,p)$ over all runs over all solver versions that we run for the respective experiment and $N_\textsf{mean}$ is the mean over all $10$ runs of the same solver. Note that the larger the RDI value of a solver version, the better its performance. While we use RDI values to present our main results, we do include in the appendix $N_\textsf{max}$, $N_\textsf{min}$, and $N_\textsf{mean}$ along with  $N_\textsf{stddev}$, and $N_\textsf{median}$ for each solver for each instance where $N_\textsf{stddev}$ and $N_\textsf{median}$ are the standard deviation and median of $N(t,p)$ values over the $10$ runs of the solver on the instance.

We use Wilcoxon Signed Rank Test with 95\% confidence interval and also 95\% Confidence Interval plots to show the significance of differences in the performances of various solvers and versions.

We use line charts to compare instance specific performances of various solvers. The line charts have the problem instances on the x-axis. The problem instances are sorted on the number of cities within each category. We have noted before that the number of items in each instance depends on the number of cities. Nevertheless, it is in general difficult to find a well-justified order of the instances in terms of hardness even when the numbers of cities and items increase in TTP and as such we do not intend to find any obvious trend.  Given that no trend is intended among the problem instances, one could think of using bar charts in such cases. We do not use bar charts because with large numbers of data points, the bodies of the bars make extracting information from the peaks of the bars difficult by matching the same type bars.

\subsection{Comparison of Proposed Solver Versions}

In Algorithm~\ref{algo:AuxSearchFramework}, in Function {TSPS}, we have four ways to define Function {CoordHeu}:  {NoCoordHeu}, {SearchGuidedCoordHeu}, {ProfitGuidedCoordHeu}, and {LearningGuidedCoordHeu}, which are respectively denoted by NOCH, SGCH, PGCH, and LGCH. Further, in Algorithm~\ref{algo:AuxSearchFramework}, in Function {TSPS}, Function {KPS} can be run in two ways: standard bit-flip search and marginal bit-flip search, which are respectively denoted by SBFS and MBFS. Note that MBFS uses coordinated item selection heuristic (CISH) to limit BitFlip operators only on the marginal items.
So we denote our proposed solver version by $X+Y$, where $X \in \{\textrm{NOCH}, \textrm{SGCH}, \textrm{PGCH}, \textrm{LGCH}\}$ and $Y \in \{\textrm{SBFS}, \textrm{MBFS}\}$.
For example, NOCH+SBFS denotes a solver version having NOCH and SBFS, and is the baseline version as described in Section~\ref{subsec:baseline}.

\subsubsection{Overall Effectiveness of MBFS Approach}

\figurename~\ref{fig:pgch-cish} shows that NOCH+MBFS outperforms NOCH+SBFS and PGCH+MBFS outperforms NOCH+SBFS. The differences are clear in the large instances in all three categories albeit some mixed performances by PGCH+MBFS and PGCH+SBFS in small CatB and CatC instances and in CatA instances that all have comparatively small number of items. Exploring
only marginally collected and uncollected items using CISH inside MBFS approach allows more focused exploration and more efficient utilisation of the limited time budget. In the
instances having fewer items, the restriction however excessively reduces the search space and narrows down the chance of finding better solutions due to lack of diversity. Nevertheless, we compute p-values of Wilcoxon Signed Rank Test on the RDI values of all $60$ instances. The p-value for NOCH+MBFS and NOCH+SBFS is $0.00001$ while that for PGCH+MBFS and PGCH+SBFS is $0.0012$. So at 95\% confidence level, we conclude our MBFS approach statistically significantly improves the performance over SBFS.

\begin{figure}[!b]
    
    \centering
    \includegraphics[width=\textwidth,height=0.33\textwidth]{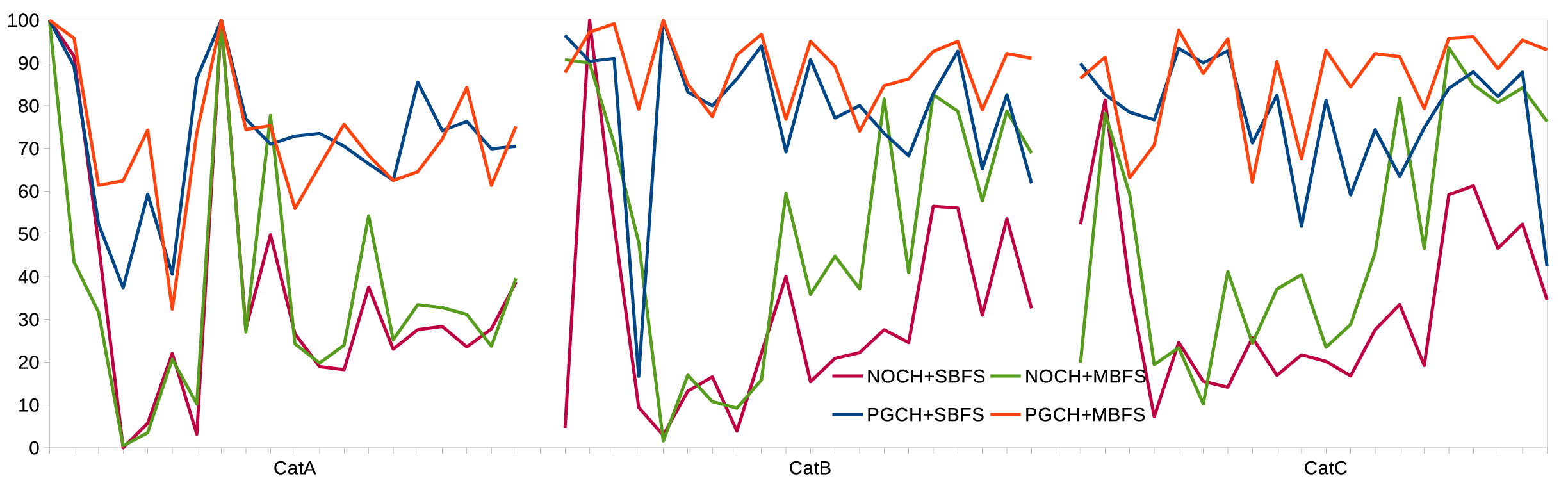}
    \caption{RDI values obtained (y-axis) on problem instances (x-axis) by various versions of our proposed solver to show the effectiveness of MBFS over SBFS and PGCH over NOCH, and also the interaction of MBFS and PGCH} \label{fig:pgch-cish}
    
\end{figure}

\subsubsection{Overall Effectiveness of PGCH Approach}

\figurename~\ref{fig:pgch-cish} shows that overall PGCH+SBFS outperforms NOCH+SBFS and PGCH+MBFS outperforms NOCH+MBFS. The differences are very clear and large in almost all instances in all three categories. We compute p-values of Wilcoxon Signed Rank Test on the RDI values of all $60$ instances. The p-value for PGCH+SBFS and NOCH+SBFS is $0.00001$ and that for PGCH+MBFS and NOCH+MBFS is also $0.00001$. So at 95\% confidence level, we conclude our PGCH approach statistically very significantly improves the performance over the NOCH approach.

\subsubsection{Learning Details of LGCH Approach}

\tablename~\ref{tab:LGCHdetails} shows the learning details of the LGCH approach. The training times in the table include the time spent in generation of training and validation solutions, sorting and selecting unique $\langle \textsf{nipr}(i), \textsf{np}(i) \rangle$ pairs from generated solutions, training $10$ neural networks, and finally computing boundary profitability ratios (BPRs) to be used in Algorithm~\ref{algo:boundary}. Given the timeout of $10$ minutes for each TTP instance, notice that the maximum training time needed is about 4 minutes and is in the largest CatA instance. Within the same category, training time increases with the increase of the problem size. For the same TSP instance, the training time decreases from CatA to CatB to CatC. This is because to keep the number of input $\langle \textsf{nipr}(i), \textsf{np}(i) \rangle$ pairs to the neural network almost the same for all three categories, we generate more training and testing solutions in CatA than in CatB and CatC (30, 6 and 3 solutions for training and 15, 3 and 2 solutions for validation in CatA, CatB, and CatC respectively). In the table, we also show the percentage of unique $\langle \textsf{nipr}(i), \textsf{np}(i) \rangle$ pairs with respect to the total number of pairs found in the example collection plans. As problem size increases, the percentage of unique pairs arguably increases. Nevertheless, the average accuracy values of the neural networks for the training and validation $\langle \textsf{nipr}(i), \textsf{np}(i) \rangle$ pairs are very high (above 95\%). The mean validation accuracy over the $60$ instances is very slightly better than the mean training accuracy. The p-value of the Wilcoxon Signed Rank Test is 0.0455 for the training and the validation accuracy values and so the difference is still statistically significant at 95\% confidence level.

\begin{table}[!tb]
\caption{Training time in seconds, \% of unique pairs among all pairs $\langle \textsf{nipr}(i), \textsf{np}(i) \rangle$ in training and validation solutions, \% average training accuracy, and \% average validation accuracy over the $10$ neural networks trained in LGCH}
\label{tab:LGCHdetails}
\centering
\setlength{\tabcolsep}{1pt}
%\begin{small}
\begin{tabular}{|l|rrrr|rrrr|rrrr|}\hline
TTP         & \multicolumn{4}{c|}{CatA} & \multicolumn{4}{c|}{CatB} & \multicolumn{4}{c|}{CatC}\\\cline{2-13}
Problem      & Train  & Unique  & Train     & Valid. & Train  & Unique  & Train     & Valid. & Train  & Unique  & Train     & Valid.\\
Instance & Time      & Pair \%    & Acc \%  & Acc  \% & Time      & Pair \%    & Acc \%  & Acc \% & Time      & Pair \%    & Acc \%  & Acc \%\\\hline
eil76	&	9.45	&	31.76	&	95.52	&	95.59	&	2.92	&	48.53	&	98.58	&	98.53	&	1.61	&	76.80	&	97.88	&	98.17	\\
kroA100	&	85.53	&	24.47	&	97.65	&	97.54	&	8.51	&	72.88	&	99.04	&	99.31	&	4.34	&	92.59	&	98.92	&	98.89	\\
ch130	&	31.71	&	39.74	&	96.68	&	96.64	&	1.99	&	42.38	&	98.55	&	98.66	&	1.65	&	70.10	&	98.45	&	98.76	\\
u159	&	30.54	&	19.04	&	97.81	&	97.75	&	2.80	&	36.03	&	99.25	&	99.15	&	1.99	&	60.51	&	98.75	&	98.88	\\
a280	&	5.62	&	35.34	&	97.47	&	97.40	&	2.43	&	78.64	&	99.21	&	99.16	&	3.29	&	83.58	&	98.75	&	98.60	\\
u574	&	17.78	&	33.53	&	98.81	&	98.77	&	4.84	&	63.07	&	99.02	&	98.89	&	5.29	&	92.26	&	98.19	&	98.15	\\
u724	&	16.36	&	57.21	&	98.21	&	98.25	&	5.70	&	51.97	&	99.28	&	99.36	&	5.58	&	98.60	&	98.55	&	98.63	\\
dsj1000	&	52.53	&	67.89	&	98.42	&	98.51	&	15.61	&	94.53	&	99.02	&	99.10	&	4.98	&	99.84	&	98.01	&	97.77	\\
rl1304	&	31.88	&	49.38	&	98.40	&	98.58	&	10.80	&	83.32	&	99.04	&	99.10	&	8.19	&	83.96	&	98.19	&	98.14	\\
fl1577	&	53.16	&	70.87	&	98.49	&	98.49	&	16.46	&	90.26	&	99.05	&	99.06	&	12.27	&	90.85	&	97.65	&	97.92	\\
d2103	&	34.86	&	51.49	&	98.89	&	98.83	&	7.57	&	64.17	&	99.41	&	99.34	&	9.69	&	99.39	&	97.30	&	97.44	\\
pcb3038	&	44.65	&	89.79	&	98.76	&	98.83	&	14.26	&	97.18	&	99.41	&	99.46	&	12.82	&	98.94	&	97.74	&	97.76	\\
fnl4461	&	70.63	&	95.01	&	98.81	&	98.81	&	19.75	&	98.98	&	99.18	&	99.36	&	13.32	&	99.75	&	98.51	&	98.40	\\
pla7397	&	83.89	&	94.32	&	97.99	&	98.02	&	27.02	&	98.30	&	97.53	&	97.40	&	31.74	&	99.25	&	97.72	&	98.14	\\
rl11849	&	129.84	&	97.65	&	99.01	&	99.04	&	49.32	&	99.54	&	99.44	&	99.45	&	64.70	&	99.90	&	98.74	&	98.70	\\
usa13509	&	116.14	&	95.87	&	98.26	&	98.26	&	42.18	&	98.60	&	97.81	&	97.91	&	59.85	&	99.23	&	97.50	&	97.76	\\
brd14051	&	126.42	&	97.53	&	98.87	&	98.87	&	49.04	&	99.64	&	98.88	&	99.04	&	46.92	&	99.83	&	98.01	&	98.12	\\
d15112	&	160.17	&	97.81	&	98.90	&	98.87	&	65.32	&	99.72	&	99.13	&	99.15	&	71.33	&	99.94	&	98.18	&	98.15	\\
d18512	&	137.85	&	97.82	&	98.74	&	98.75	&	60.98	&	99.71	&	98.89	&	98.95	&	98.41	&	99.93	&	98.10	&	98.10	\\
pla33810	&	233.37	&	98.12	&	98.79	&	98.81	&	101.91	&	99.77	&	99.18	&	99.29	&	144.13	&	99.96	&	98.65	&	98.62	\\\hline
\end{tabular}
%\end{small}
\end{table}

\subsubsection{Overall Comparison of PGCH, SGCH, and LGCH}

In \figurename~\ref{fig:tsps-mbfs}, we compare RDI values obtained by PGCH, SGCH, LGCH, and NOCH on all $60$ instances from three categories. For these solver versions, we use MBFS since it has already been shown to be better than SBFS.
The solver versions compared are respectively PGCH+MBFS, SGCH+MBFS, LGCH+MBFS, and NOCH+MBFS.
We see that PGCH and LGCH make huge improvement over NOCH. However, SGCH performs worse than NOCH.
The reason is running KPS for every tour segment reversal, even when KPS is restricted only to the reversed segment,
takes huge time and consequently within a given timeout of 10 minutes,
not much of the TTP search space is explored. Note that we include SGCH in this comparison mainly to show that a simple local search based coordination approach does not work well in TTP.
Nevertheless, among other heuristics, PGCH appears to be performing slightly better than LGCH. With the p-value of $0.00782$ of Wilcoxon Signed Rank Test,
the difference in the performances of PGCH and LGCH is also statistically significant at 95\% confidence level.
This result is very interesting as we can see that the machine learning based algorithm LGCH has learnt almost up to the level of the human designed PGCH heuristic.

\begin{figure}[!b]
    \centering
    \includegraphics[width=\textwidth,height=0.33\textwidth]{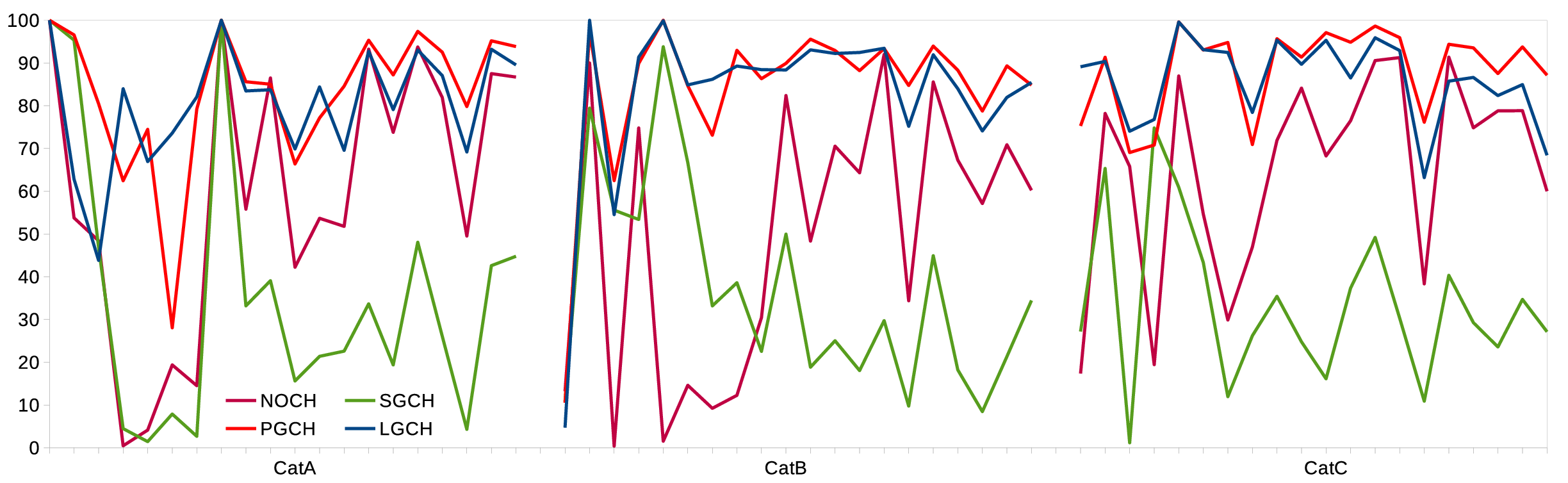}
    \caption{RDI values obtained (y-axis) on problem instances (x-axis) by various versions of our proposed solver to show the comparison of SGCH, PGCH, LGCH, and NOCH when MBFS is used with all of them.} \label{fig:tsps-mbfs}
    
\end{figure}

\figurename~\ref{fig:restarts} shows the numbers of restarts in Function {TTPS} in Algorithm~\ref{algo:MainSearchFramework} in each instance when SGCH, PGCH, LGCH, and NOCH are used along with MBFS. We see that SGCH performs the least numbers of restarts since it spends huge time in running {KPS} for each tour segment reversal in {TSPS}. The low numbers of restarts also indicate low diversity in terms of the search space exploration. Notice that PGCH and LGCH performs very similar numbers of restarts in all instances. The numbers of restarts performed by NOCH are very similar to those performed by PGCH or LGCH in small instances in each category, but is quite larger in large instances. NOCH is arguably faster than PGCH or LGCH and so help explore more of the search space by restarting more number of times. However, with a poor evaluation of the generated cyclic tours, NOCH eventually does not result into better RDI values.

\begin{figure}[!b]
    
    \centering
    \includegraphics[width=\textwidth,height=0.2\textwidth]{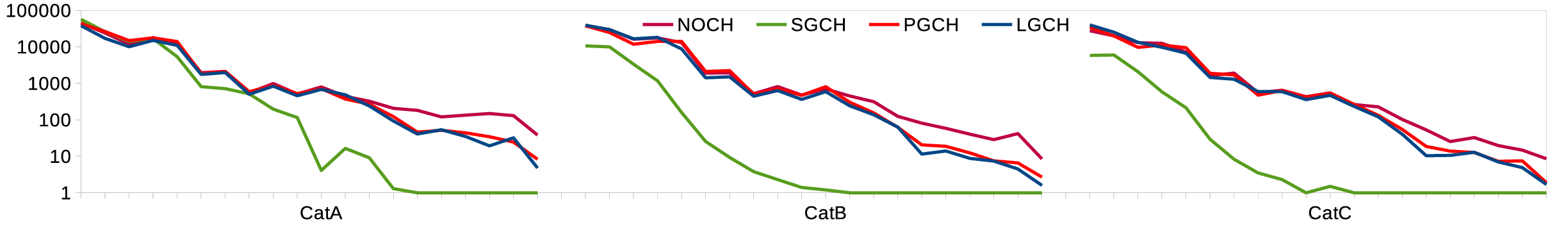}
    \caption{Numbers of restarts (y-axis) in Function {TTPS} in Algorithm~\ref{algo:MainSearchFramework} in problem instances (x-axis) by various versions of our proposed solver to show the comparison of SGCH, PGCH, LGCH, and NOCH when MBFS is used with them.} \label{fig:restarts}
\end{figure}

\newpage
\subsection{Further Analysis of PGCH and LGCH over NOCH}

\begin{figure}[!b]
    \centering
    \includegraphics[width=\textwidth,height=0.2\textwidth]{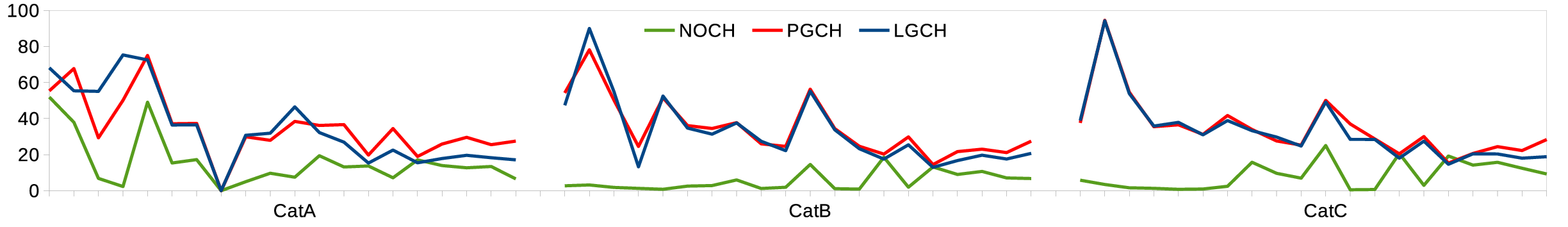}
    \caption{Mean of relative lengths (y-axis) of tour segments reversed by {2OPT} and accepted by the search algorithm in Function~{TSPS} over 10 runs when NOCH, PGCH, and LGCH are used along with MBFS on problem instances (x-axis)} \label{fig:pgch-seg-length}
    
\end{figure}

\begin{figure}[!b]
    \centering
    \includegraphics[width=\textwidth,height=0.2\textwidth]{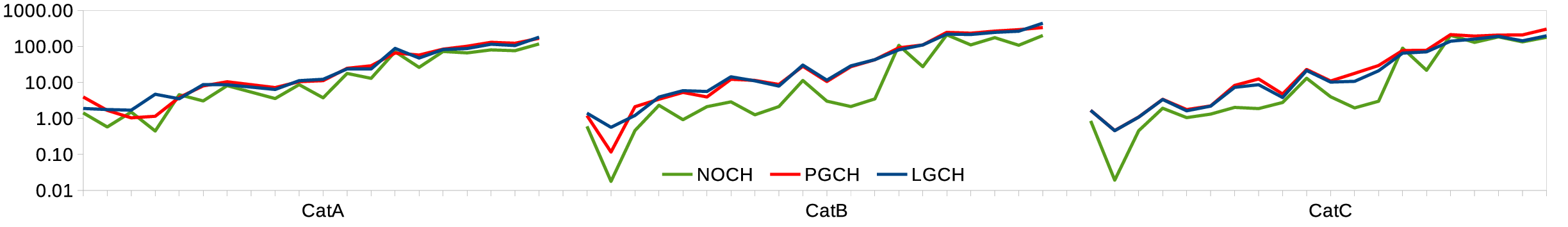}
    \caption{Mean of numbers of accepted application of {2OPT} per restarts in Function {TTPS} over 10 runs when NOCH, PGCH, and LGCH are used along with MBFS on problem instances (x-axis)}
    \label{fig:pgch-seg-accepted}
    
\end{figure}

To investigate the huge difference in the performance of PGCH and LGCH from NOCH, we observe the reverse tour segments generated, evaluated, and accepted during search. \figurename~\ref{fig:pgch-seg-length} shows the mean relative lengths $\dfrac{|t[b,e]|}{n}\times 100$ of the tour segments reversed by {2OPT} and accepted by the search algorithm in Function~{TSPS} over 10 runs when used with NOCH, PGCH and LGCH along with MBFS. Moreover, \figurename~\ref{fig:pgch-seg-accepted} shows the mean numbers of the tour segments of which mean lengths have been shown in \figurename~\ref{fig:pgch-seg-length}. From these two figures, we see that the use of PGCH and LGCH has resulted in the acceptance of notably larger tour segments reversed by {2OPT} operator and also in larger numbers than what NOCH has resulted in. In the absence of a coordination heuristic, as shown in \figurename~\ref{Eil76_3}, the quality values of the cyclic tours
produced by {2OPT} are not estimated properly and thus the reversed tour segments are rejected by the search algorithm. Arguably, this happens even at an worsened level when reversed tour segments are large in sizes and more in numbers. In contrast, when a coordination heuristic such as PGCH or LGCH is used, the quality values of the reversed tour segments are more properly estimated and as we see from \figurename{s}~\ref{fig:pgch-seg-length} and \ref{fig:pgch-seg-accepted}, larger reversed tour segments are accepted in larger numbers and thus we have obtained higher objective values at the end. This explains the advantage of PGCH and LGCH over NOCH. Notice that both in \figurename{s}~\ref{fig:pgch-seg-length} and \ref{fig:pgch-seg-accepted}, PGCH and LGCH are very close in most TTP instances, except in \figurename~\ref{fig:pgch-seg-length} in CatA instances. However, CatA instances have only one item in each city. As such picking or unpicking the only available item in each city by mistake as a classification error of the neural network is harder to compensate than the classification errors in CatB and CatC.

\begin{figure}[!b]
    
    \centering
    \includegraphics[width=\textwidth,height=0.33\textwidth]{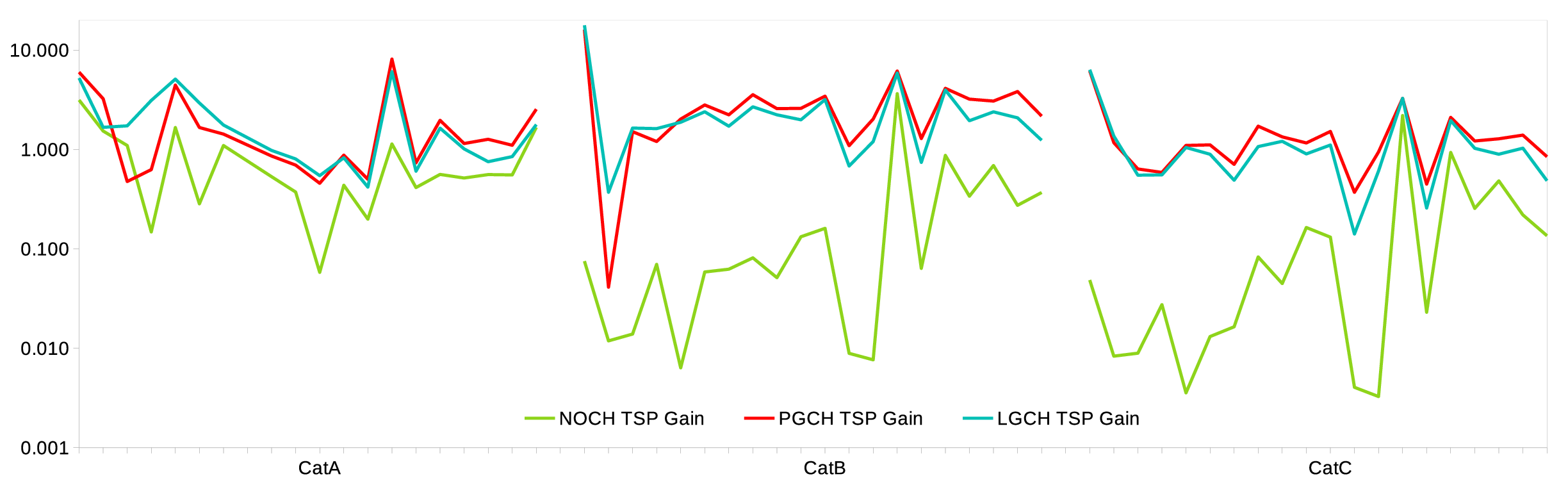}
    \caption{Mean objective gains (y-axis) $G_\textsf{TSP}$ by {TSPS} per restarts in {TTPS} over 10 runs of  Algorithm~\ref{algo:MainSearchFramework}, when NOCH, PGCH, and LGCH are used along with MBFS on problem instances (x-axis)}
    \label{fig:pgch-obj-gains}
\end{figure}

In Algorithm~\ref{algo:MainSearchFramework} in Function {TTPS}, we compute the objective values $N_\textsf{BS}$ and $N_\textsf{TSP}$ respectively before and after running {TSPS}. We then compute means of objective gains $G_\textsf{TSP} = \dfrac{N_\textsf{TSP} - N_\textsf{BS}}{N_\textsf{BS}} \times 100$ over all iterations of the outer loop in {TTPS} over all $10$ runs of Algorithm~\ref{algo:MainSearchFramework} for each instance. The mean objective gains are shown in \figurename~\ref{fig:pgch-obj-gains}. We see that, in $G_\textsf{TSP}$, in most cases, PGCH is better than LGCH, which is better than NOCH. The performance difference of NOCH from that of PGCH or LGCH is arguably huge in CatB and CatC instances. This is explainable as PGCH or LGCH is targeted to improve evaluation of the cyclic tours produced by {2OPT} and the better evaluation results in accepting longer and more tour segment reversals and hence better $G_\textsf{TSP}$ values.

\subsection{Comparison with Existing TTP Solvers}

We compare our proposed MBFS with a simulated annealing method in terms of the performance improvement in the KP component while we use PGCH with both. We then compare our PGCH+MBFS and LGCH+MBFS solvers with other existing state-of-the-art TTP methods.

\subsubsection{Comparison of MBFS with Simulated Annealing Search}

We compare our hill-climbing based MBFS algorithm with a simulated annealing search (SAS) algorithm \cite{el2018efficiently} for the KP component of TTP. The SAS algorithm defines Function \Call{KPS}{$t,p,1,n-1$} to be called in Function {TTPS} in Algorithm~\ref{algo:MainSearchFramework}. For Function {TSPS}, we use PGCH in this case with both MBFS and SAS. We compute means of objective gains $G_\textsf{KP} = \dfrac{N_\textsf{KP} - N_\textsf{TSP}}{N_\textsf{TSP}} \times 100$ over all iterations of the outer loop in {TTPS} over all $10$ runs of Algorithm~\ref{algo:MainSearchFramework} for each instance. \figurename~\ref{fig:mbfs-sas-kps} shows that MBFS is very slightly better than SAS in mean $G_\textsf{KP}$ values. However, the difference is statistically not significant with p-value $0.27572$ of Wilcoxon Signed Rank Test at 95\% confidence level. 

\begin{figure}[!b]
    
    \centering
    \includegraphics[width=\textwidth,height=0.2\textwidth]{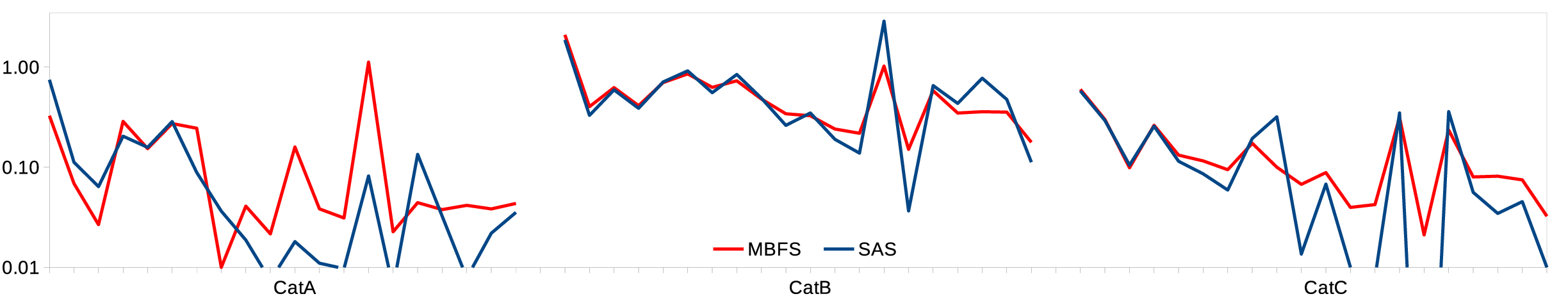}
    \caption{Mean objective gains (y-axis) $G_\textsf{KP}$ over all iterations of the outer loop of {TTPS} over 10 runs of  Algorithm~\ref{algo:MainSearchFramework}, when MBFS and SAS are used along with PGCH on problem instances (x-axis)} \label{fig:mbfs-sas-kps}
    
\end{figure}

\begin{figure}[!b]
    
    \centering
    \includegraphics[width=\textwidth,height=0.2\textwidth]{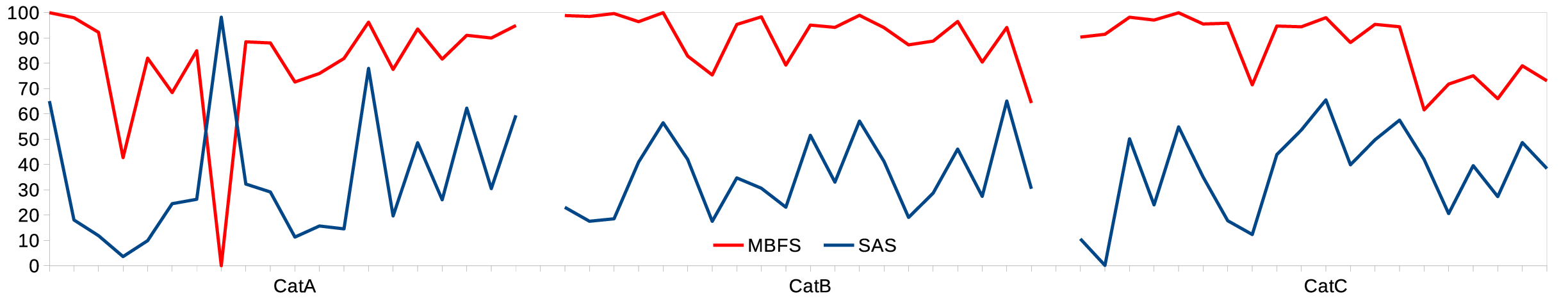}
    \caption{RDI values obtained (y-axis) on problem instances (x-axis) when MBFS and SAS are used along with PGCH } \label{fig:mbfs-sas-rdi}
    
\end{figure}

\begin{figure}[!b]
    
    \centering
    \includegraphics[width=\textwidth,height=0.2\textwidth]{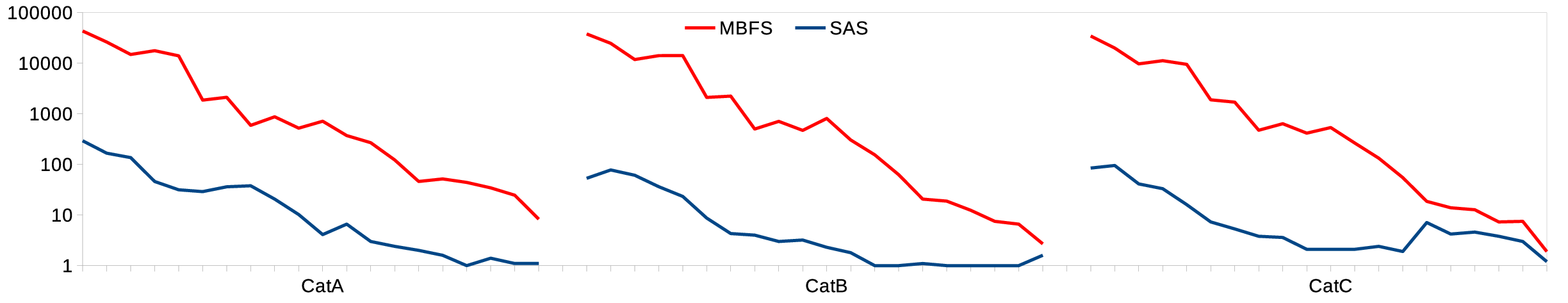}
    \caption{Numbers of restarts (y-axis) on problem instances (x-axis) when MBFS and SAS are used along with PGCH} \label{fig:mbfs-sas-restarts}
    
\end{figure}

Interestingly, as per \figurename~\ref{fig:mbfs-sas-rdi}, RDI values obtained by using MBFS are significantly higher than those obtained by using SAS. The p-value of the Wilcoxon Signed Rank Test is $0.00001$. To understand this apparent anomaly, in \figurename~\ref{fig:mbfs-sas-restarts}, we compare numbers of restarts i.e. the numbers of iterations the outer loop in Function {TTPS} in Algorithm~\ref{algo:MainSearchFramework} runs with
MBFS or SAS, along with PGCH in {TSPS} of course. We see that MBFS leads to a huge numbers of restarts compared to what SAS leads to. This indicates that via more restarts, MBFS leads to greater diversity and eventually better RDI values while SAS spends time in the simulated annealing process and does not get good RDI values. We further reason that with targeted search, MBFS converges quickly to local optima and thus resorts to restarts more often while SAS solely depends on diminishing probabilities of accepting worse solutions and thus get out of local optima. Notice that the numbers of restarts get lower with the increase in the problem size. This is because in large problems, arguably only fewer or even no restarts could take place within a limited timeout of $10$ minutes.

\begin{table}  % \begin{sidewaystable}
\centering
\caption{Comparison of RDI values obtained by the proposed CoCoP and CoCoL solvers and those obtained by MATLS, S5, and CS2SA*. Emboldened values denote the best performers.}
\label{tab:sota-comparison}
\setlength{\tabcolsep}{4.5pt}
\begin{footnotesize}
\begin{tabular}{|l|rrrrr|rrrrr|rrrrr|}\hline
Problem & \multicolumn{5}{|c|}{CatA} & \multicolumn{5}{|c|}{CatB} & \multicolumn{5}{|c|}{CatC}\\\cline{2-16}
Instance	&	MATLS	&	S5	&	CS2SA*	&	CoCoP & CoCoL	&	MATLS	&	S5	&	CS2SA*	&	CoCoP & CoCoL	&	MATLS	&	S5	&	CS2SA*	&	CoCoP & CoCoL	\\\hline

eil76 & 72.2 & \textbf{100.0} & 15.4 & \textbf{100.0} & \textbf{100.0} & 95.4 & 80.3 & 20.1 & 95.7 & \textbf{99.4} & \textbf{95.5} & 86.9 & 74.5 & 91.3 & 90.7 \\ 
kroA100 & 47.1 & 69.9 & 9.9 & \textbf{85.8} & 81.6 & 49.0 & 92.2 & 12.1 & \textbf{97.7} & 95.1 & 67.9 & 71.1 & 33.2 & \textbf{99.8} & 94.6 \\ 
ch130 & 57.6 & 88.2 & 45.7 & \textbf{96.7} & 96.5 & 92.7 & 94.6 & 19.1 & \textbf{98.8} & \textbf{98.8} & 66.5 & 96.1 & 20.4 & 95.6 & \textbf{97.3} \\ 
u159 & 74.9 & 79.0 & 57.1 & 88.5 & \textbf{91.5} & 18.1 & 89.7 & 17.7 & 93.0 & \textbf{95.3} & 16.0 & 55.0 & 50.0 & \textbf{82.2} & 80.7 \\ 
a280 & 42.5 & 88.9 & 52.4 & \textbf{97.4} & 91.2 & 31.9 & 31.3 & 54.2 & \textbf{100.0} & 99.9 & 69.0 & 98.3 & 41.4 & \textbf{100.0} & 99.7 \\ 
u574 & 57.3 & 77.1 & 33.2 & 84.4 & \textbf{93.5} & 53.1 & 43.5 & 22.1 & 92.6 & \textbf{95.7} & 92.0 & 93.8 & 63.8 & \textbf{98.8} & 96.1 \\ 
u724 & 66.4 & 83.5 & 44.2 & 95.4 & \textbf{96.4} & 13.9 & 44.3 & 36.5 & \textbf{97.5} & 91.8 & 56.6 & 58.3 & 25.0 & 85.7 & \textbf{92.9} \\ 
dsj1000 & 85.7 & 3.1 & \textbf{100.0} & \textbf{100.0} & \textbf{100.0} & 66.7 & 69.1 & 34.8 & 96.1 & \textbf{98.2} & 84.5 & 88.3 & 42.3 & 98.3 & \textbf{96.1} \\ 
rl1304 & 23.9 & 90.3 & 16.8 & \textbf{98.5} & 94.6 & 23.7 & 54.9 & 31.4 & \textbf{94.3} & 91.7 & 72.2 & 75.1 & 39.2 & \textbf{99.3} & 96.5 \\ 
fl1577 & 65.4 & 95.1 & 29.7 & 95.0 & \textbf{96.6} & 68.6 & 78.4 & 45.9 & \textbf{96.2} & \textbf{96.2} & 79.2 & 84.7 & 42.4 & \textbf{92.5} & 90.1 \\ 
d2103 & 1.8 & 82.9 & 62.3 & \textbf{93.7} & 92.5 & 39.5 & 67.2 & 53.7 & \textbf{96.6} & 96.3 & 27.9 & 48.3 & 17.9 & \textbf{95.8} & 85.9 \\ 
pcb3038 & 33.0 & 89.6 & 23.8 & 95.2 & \textbf{96.3} & 60.7 & 70.4 & 60.9 & \textbf{97.6} & 95.8 & 69.9 & 79.3 & 71.2 & \textbf{97.4} & 86.6 \\ 
fnl4461 & 32.7 & 87.4 & 5.1 & \textbf{96.7} & 91.4 & 22.1 & 39.4 & 34.4 & \textbf{86.9} & 82.2 & 70.2 & 68.1 & 63.3 & \textbf{96.5} & 91.6 \\ 
pla7397 & 78.2 & 95.6 & 43.0 & \textbf{98.1} & 97.1 & 74.9 & 83.2 & 49.9 & \textbf{97.7} & 93.3 & 66.5 & 76.7 & 49.1 & \textbf{95.2} & 89.3 \\ 
rl11849 & 32.7 & 89.9 & 8.9 & \textbf{98.4} & 94.2 & 20.9 & 25.2 & 32.4 & \textbf{87.4} & 67.5 & 53.5 & 43.4 & 44.9 & 68.8 & \textbf{75.1} \\ 
usa13509 & 57.1 & 94.4 & 23.2 & \textbf{95.5} & 93.9 & 57.0 & 65.4 & 58.4 & \textbf{90.5} & 82.8 & 83.0 & 83.3 & 81.8 & 93.5 & \textbf{95.9} \\ 
brd14051 & 28.1 & 88.8 & 11.3 & 94.1 & \textbf{95.3} & 72.1 & 76.8 & 75.3 & \textbf{93.2} & 92.7 & 47.6 & 49.3 & 54.0 & \textbf{80.6} & 70.3 \\ 
d15112 & 25.2 & 78.3 & 13.2 & 91.7 & \textbf{95.4} & 14.7 & 28.4 & 62.5 & 79.1 & \textbf{80.9} & 11.1 & 27.0 & 63.8 & \textbf{89.3} & 88.8 \\ 
d18512 & 60.7 & 92.7 & 18.7 & \textbf{97.7} & 95.5 & 73.9 & 76.5 & 74.2 & \textbf{92.9} & 89.9 & 34.3 & 31.9 & 55.5 & \textbf{83.9} & 59.3 \\ 
pla33810 & 27.7 & 87.0 & 19.1 & 93.2 & \textbf{94.0} & 70.1 & 77.5 & 42.5 & \textbf{96.8} & 89.9 & 74.3 & 56.2 & 40.0 & \textbf{93.1} & 84.4 \\ 

\hline
\end{tabular}
\end{footnotesize}
\end{table}  % \end{sidewaystable}

\subsubsection{Comparison with MATLS, S5, and CS2SA* Solvers}
\label{sec:sotaComparison}

We name our final TTP solver as Cooperative Coordination (CoCo) and based on the experimental results presented so far, we obtain two CoCo versions. These two versions are PGCH+MBFS and LGCH+MBFS, and for the rest of the paper, we respectively name them as CoCoP and CoCoL.

We compare our CoCoP and CoCoL solvers with three existing state-of-the-art TTP solvers such as MATLS~\cite{mei2014improving}, S5~\cite{faulkner2015approximate} and
CS2SA*~\cite{el2018efficiently}. CS2SA* is selected because our TTP search framework in Algorithms~\ref{algo:MainSearchFramework} and \ref{algo:AuxSearchFramework} is similar to its cooperational coevoluation approach. MATLS and S5 are selected due to their salient performance reported in~\cite{wagner2017case}.
The source code for CS2SA* and MATLS has been obtained from the corresponding authors. We have reconstructed S5 ourselves and S5 does not have any parameters to be tuned.

\paragraph{CS2SA* and Recent Descendants}
After CS2SA*~\cite{el2018efficiently}, two further TTP methods \cite{maity2020efficient,zhang2021solving} have been reported.
Below are several observations about these methods.
% and discuss why and how we compare and not compare our proposed methods with these three methods.

\begin{itemize}
    \item {\bf CS2SA*~\cite{el2018efficiently}:} It is reported in~\cite{wuijts2019investigation} that CS2SA* and its precursors incorrectly present the objective values by taking the rounded values of the distances between cities. This is different from the definition of TTP benchmark instances \cite{polyakovskiy2014comprehensive}. As such this makes CS2SA* incomparable with other TTP methods. \cite{wagner2017case} reports more issues with the precursor of CS2SA*. Further to these, while investigating the source code of CS2SA*, we have observed that it uses the same stored high quality TSP tour in each run and mainly focuses on improving the collection plan. This partially explains why its precursor \cite{el2016population} somewhat misleadingly concludes that the KP component of the TTP is more critical compared to the TSP component for optimisation while our effort in the TSP component shows otherwise. Nevertheless, using the same TSP tour in each run of a TTP method does not conform to the standard practice in empirical evaluation of methods that have stochasticity in decision making. For a fair comparison in this paper, when we run CS2SA* in our experiments, we compute the objective values correctly and also use different TSP tour in each run.
    \item {\bf A CS2SA* descendant \cite{maity2020efficient}:} This method follows the same incomparable empirical evaluation style of CS2SA*~\cite{el2018efficiently}. This method more explicitly shows that it is a fixed tour method. Moreover, its evaluation is based on only 9 benchmark instances. As such, we do not compare our proposed TTP solvers with this method.
    \item {\bf Another CS2SA* descendant \cite{zhang2021solving}:} This method follows the same incomparable experiment setup as CS2SA*~\cite{el2018efficiently} does. Unfortunately, its source code is not available. Moreover, while making an attempt to reconstruct this method and to run as we do with CS2SA*, we could not find necessary details in its corresponding published article.
    The pseudocode is unclear and appears to have issues that include ({\it i}) by definition item scores cannot be negative but pseudocode has conditions on that, ({\it ii}) the loop does not terminate unless the knapsack is full but practically it might be partially filled, and ({\it iii}) items are sorted by their scores but are picked mainly in the order of the cities. As a result of all these, we do not compare our proposed TTP solvers with this method.
\end{itemize}

\tablename~\ref{tab:sota-comparison} shows the RDI values obtained by CoCoP, CoCoL, MATLS, S5, and CS2SA* solvers. From the table, we see that CoCoP performs better than CoCoL. Both CoCoP and CoCoL outperform the other three solvers in almost all problem instances in all three categories. Moreover, S5 performs the third best but with a big difference with CoCoP and CoCoL while CS2SA* is the worst performer. The 95\% confidence interval plots of the RDI values in \figurename~\ref{fig:ConfInterval} also shows the statistical significance of the performance differences. More specifically, the p-value for Wilcoxon Signed Rank Test on the RDI values obtained by CoCoP and S5 is 0.00001 and by CoCoL and S5 is also the same. These indicate very highly significant differences. The overlapping intervals of CoCoP and CoCoL shows that their performance difference is statistically not significant. Nevertheless, \tablename{s}~\ref{tab:sotacata}, \ref{tab:sotacatb}, and \ref{tab:sotacatc} in the appendix provide further details on the objective values obtained by various solvers.

\begin{figure}[!b]
\centering
\includegraphics[width=0.6\textwidth]{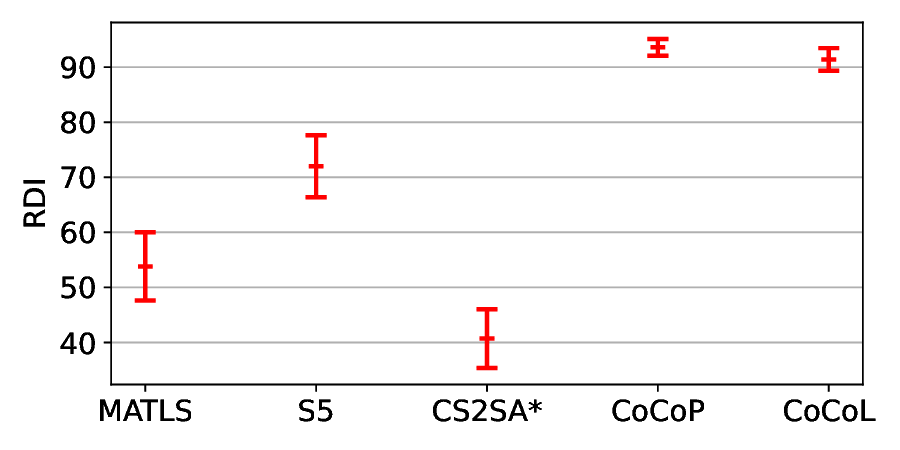}
\caption{95\% confidence intervals for our proposed CoCo solver and existing state-of-the-art TTP solvers such as MATLS, S5, CS2SA*, and CoCo. Overlapping confidence intervals mean the performance differences are not significant.}
\label{fig:ConfInterval}

\end{figure}

\begin{figure}[!b]
\centering
\includegraphics[width=0.6\textwidth]{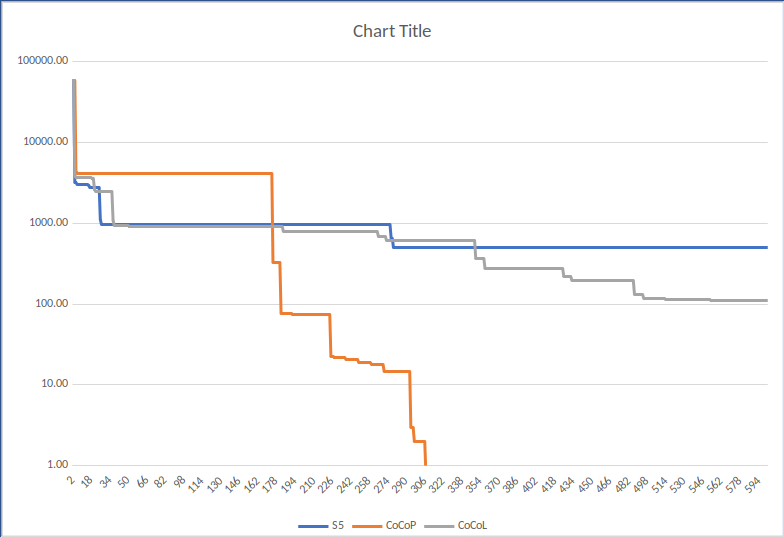}
\caption{Sample changes in best objectives (y-axis)  in each second (x-axis) by the best performing three solvers S5, CoCoP and CoCoL on CatC pla33810 instance. For better visual representation, plotted values are actually the maximum objective value obtained by any of the three solvers minus the objective value obtained by the respective solvers at the respective timepoints. Moreover, the logarithmic scale in the y-axis is used. So the lower the better in the chart although TTP is by definition a maximisation problem.}
\label{fig:BestSolChanges}

\end{figure}

\figurename~\ref{fig:BestSolChanges} shows that in sample runs of S5, CoCoP and CoCoL on CatC pla33810 instance, CoCoP makes good progress before getting into the flat region. CoCoL shows a better trend than S5 but is worse than CoCoP. The difference in CoCoP and CoCoL is that  CoCoL's search relies on the pattern learnt from its training solutions which are arguably not very high quality and so its prediction does not help much when already further better solutions are found over time.

\tablename~\ref{tab:long-comparison} shows the performances of the three best solvers S5, CoCoP, and CoCoL, when the timeout is 1 hour instead of the standard of 10 minutes. We see that the three solvers perform similarly with the longer timeout as they do with the shorter timeout. This shows the consistency of their performance over the time horizon.

\begin{table}[!tbp]
\centering
\caption{Comparison of RDI values obtained by S5, CoCoP, and CoCoL solvers when 1-hour timeout is used instead of standard 10-minute timeout; all other settings remain the same. Emboldened values denote the best performers among four solvers.}
\label{tab:long-comparison}
\setlength{\tabcolsep}{4.5pt}
\begin{small}
\begin{tabular}{|l|rrr|rrr|rrr|}\hline
Problem & \multicolumn{3}{|c|}{CatA} & \multicolumn{3}{|c|}{CatB} & \multicolumn{3}{|c|}{CatC}\\\cline{2-10}
Instance	&	S5	&	CoCoP & CoCoL	&	S5	&	CoCoP & CoCoL	&	S5	&	CoCoP & CoCoL	\\\hline
eil76	&	\bf 100.0	&	\bf 100.0	&	\bf 100.0	&	89.0	&	92.2	&	\bf  96.0	&	84.1	&	\bf 100.0	&	37.0	\\
kroA100	&	44.1	&	95.9	&	\bf 97.3	&	29.6	&	\bf 89.2	&	50.1	&	86.7	&	\bf 99.2	&	97.4	\\
ch130	&	4.0	&	\bf 85.0	&	78.5	&	0.0	&	\bf 90.3	&	86.5	&	62.1	&	73.6	&	\bf 81.1	\\
u159	&	4.4	&	34.0	&	\bf 64.9	&	4.0	&	\bf 100.0	&	96.5	&	32.8	&	\bf 99.9	&	\bf 99.9	\\
a280	&	0.9	&	\bf 48.1	&	46.2	&	1.8	&	\bf 100.0	&	99.9	&	0.0	&	\bf 100.0	&	\bf 100.0	\\
u574	&	9.8	&	30.1	&	\bf 71.8	&	2.2	&	81.4	&	\bf 92.0	&	9.0	&	\bf 71.8	&	52.9	\\
u724	&	7.0	&	76.4	&	\bf 81.3	&	7.3	&	94.5	&	\bf 97.2	&	7.2	&	\bf 87.8	&	78.5	\\
dsj1000	&	5.1	&	\bf 100.0	&	\bf 100.0	&	3.9	&	\bf 88.7	&	84.1	&	26.7	&	\bf 65.6	&	58.3	\\
rl1304	&	19.9	&	\bf 70.3	&	55.7	&	14.6	&	\bf 80.7	&	80.6	&	28.2	&	76.7	&	\bf 81.3	\\
fl1577	&	67.6	&	66.7	&	\bf 70.7	&	13.9	&	\bf 66.7	&	63.4	&	24.3	&	31.2	&	\bf 58.2	\\
d2103	&	12.9	&	68.7	&	\bf 74.6	&	14.6	&	92.1	&	\bf 94.2	&	9.1	&	\bf 88.3	&	83.6	\\
pcb3038	&	9.6	&	55.0	&	\bf 66.4	&	15.0	&	88.3	&	\bf 89.4	&	16.3	&	\bf 81.2	&	71.2	\\
fnl4461	&	12.4	&	\bf 78.6	&	63.1	&	6.8	&	71.4	&	\bf 77.3	&	16.8	&	\bf 83.8	&	70.6	\\
pla7397	&	11.2	&	\bf 76.0	&	66.7	&	26.4	&	\bf 92.0	&	83.6	&	22.9	&	\bf 86.3	&	77.4	\\
rl11849	&	14.5	&	\bf 77.9	&	67.6	&	10.1	&	75.2	&	\bf 82.4	&	13.9	&	\bf 74.2	&	66.6	\\
usa13509	&	30.0	&	\bf 72.2	&	23.6	&	14.5	&	\bf 91.2	&	84.1	&	16.5	&	\bf 92.9	&	76.4	\\
brd14051	&	8.9	&	\bf 84.7	&	62.7	&	12.6	&	\bf 83.7	&	77.4	&	18.4	&	\bf 82.2	&	69.6	\\
d15112	&	13.5	&	\bf 74.5	&	64.1	&	9.5	&	\bf 82.8	&	79.7	&	12.6	&	\bf 85.1	&	77.7	\\
d18512	&	20.0	&	\bf 76.6	&	63.6	&	17.1	&	\bf 84.1	&	81.7	&	18.0	&	\bf 71.4	&	46.8	\\
pla33810	&	15.4	&	\bf 61.8	&	53.7	&	19.8	&	\bf 78.4	&	70.4	&	27.4	&	\bf 79.9	&	54.4	\\

\hline
\end{tabular}
\end{small}
\end{table}

\subsubsection{Comparison with a Recent Solver MEA2P}
\label{sec:mea2pComparison}

We compare our proposed best performing CoCoP solver with a recent TTP solver named MEA2P~\cite{wuijts2019investigation}. MEA2P is a steady state Memetic algorithm with Edge-Assembly~\cite{nagata2006new} and Two-Points crossover (EAX) operators. Like a number of other solvers \cite{wagner2016stealing,mei2015heuristic,martins2017hseda,el2018hyperheuristic}, MEA2P is targeted to solve small TTP instances. For its initial population, MEA2P generates 50 solutions, each with a random cyclic tour and an empty collection plan. Then, in each of its 2500 iterations, MEA2P generate a new solution by combining two randomly selected solutions using the edge-assembly crossover operator~\cite{nagata2006new} on the cyclic tours and the two-point crossover operator on the collection plans. The initial solutions and the subsequently generated combined solutions are improved using a local search method that uses {2OPT}~\cite{croes1958method},
node insertion~\cite{faulkner2015approximate},
bit-flip~\cite{polyakovskiy2014comprehensive,faulkner2015approximate}
and item exchange~\cite{mei2016investigation} moves in an interleaving fashion.

MEA2P demands heavy computation time particularly in large problems. Therefore, for a meaningful comparison, instead of running for $10$ minutes, we run both MEA2P and CoCoP with a termination criterion of 2500 restarts for each TTP instance. Also, we use only the 8 small instances from each of the three categories. For large instances MEA2P takes hours and days.
As we see, this experiment setting is different from the settings in other earlier experiments presented in this paper.

\begin{table}  % \begin{sidewaystable}
\caption{Comparison of average execution times and RDI values of MEA2P and CoCoP on 8 instances in each category.}
\label{tab:mea2p}
\centering
\setlength{\tabcolsep}{2pt}
\begin{small}
\begin{tabular}{|l|r||rr|rr||rr|rr||rr|rr|}\hline
TTP     &  \% Unique     & \multicolumn{4}{c||}{CatA} & \multicolumn{4}{c||}{CatB} & \multicolumn{4}{c|}{CatC}\\\cline{3-14}
Problem            & CLK Init       & \multicolumn{2}{c|}{Avg Time}  & \multicolumn{2}{c||}{RDI}       &  \multicolumn{2}{c|}{Avg Time} & \multicolumn{2}{c||}{RDI} & \multicolumn{2}{c|}{Avg Time} & \multicolumn{2}{c|}{RDI}\\\cline{3-14}
Instance	& Solutions	&	MEA2P	&	CoCoP	&	MEA2P	&	CoCoP	&	MEA2P	&	CoCoP	&	MEA2P	&	CoCoP	&	MEA2P	&	CoCoP	&	MEA2P	&	CoCoP	\\\hline
eil76	&	3.3	&	45s	&	27s	&	43.0	&	72.0	&	2.3m	&	29s	&	\bf 97.0	&	32.0	&	4.7m	&	40s	&	\bf 100.0	&	48.9	\\
kroA100	&	1.2	&	93.6s	&	49s	&	\bf 94.4	&	0.0	&	5m	&	51s	&	\bf 99.4	&	50.4	&	10m	&	1.1m	&	\bf 100.0	&	0.5	\\
ch130	&	4.9	&	3m	&	1.7m	&	\bf 84.1	&	54.3	&	9.8m	&	1.6m	&	\bf 92.9	&	15.8	&	20.3m	&	2.1m	&	\bf 79.7	&	39.6	\\
u159	&	4.4	&	7.2m	&	1.2m	&	\bf 71.2	&	0.0	&	16.8m	&	1.3m	&	\bf 78.0	&	5.8	&	31.1m	&	1.7m	&	\bf 100.0	&	4.8	\\
a280	&	80.3	&	31.8m	&	1.7m	&	\bf 54.4	&	7.0	&	1.3h	&	1.9m	&	56.4	&	\bf 73.9	&	2h	&	2.7m	&	73.4	&	\bf 100.0	\\
u574	&	65.1	&	4.5h	&	10.6m	&	\bf 61.1	&	10.4	&	12.9h	&	10.4m	&	\bf 40.0	&	1.9	&	25.2h	&	12.6m	&	47.6	&	\bf 73.1	\\
u724	&	90.2	&	6.5h	&	9.2m	&	30.9	&	\bf 89.1	&	1.2d	&	10.4m	&	44.7	&	\bf 52.6	&	2.1d	&	13m	&	48.5	&	\bf  78.5	\\
dsj1000	&	87.3	&	5.9h	&	39.3m	&	9.8	&	\bf 100.0	&	3.4d	&	44.3m	&	\bf 40.9	&	30.2	&	6.3d	&	44.1m	&	58.3	&	\bf 81.0	\\\hline
\end{tabular}
\end{small}
\end{table}  % \end{sidewaystable}

\tablename~\ref{tab:mea2p} shows the average execution times and the RDI values obtained by MEA2P and CoCoP on 8 small instances. Moreover, \tablename{s}~\ref{tab:mea2pcata}, \ref{tab:mea2pcatb}, and \ref{tab:mea2pcatc} in the appendix provide further details on the execution times and the objective values obtained by the two solvers. Nevertheless, from these tables, we see that MEA2P runs in the scale of hours and days while CoCoP runs in the scale of seconds and minutes. Overall, MEA2P takes a number of times the execution time of CoCoP. In RDI values, MEA2P achieves very good performance in small instances while CoCoP achieves so in large instances. We further investigate the reasons behind such performance.
MEA2P is a population based algorithm that aims to maintain diversity by starting from random solutions, keeping a number of solutions in its population, and using combination operators.
In small instances, MEA2P affords the time to explore the search space to a large extent and obtains better objective values.
However, CoCoP is a single-solution based search algorithm that depends on Chained Lin-Kernighan (CLK) heuristic~\cite{applegate2003chained} for initial cyclic tours,
and PackIterative~\cite{faulkner2015approximate} and Insertion~\cite{mei2014improving} methods for initial collection plans.
So the greater diversity needs to come from the search restart or from the initial solution generators.
In \tablename~\ref{tab:mea2p} Column 2 (title ``\% Unique CLK Init Solutions''), we show the relative unique initial cyclic tours found by the CLK heuristic.
These numbers essentially help us explain that CoCoP performs better when CLK generates large numbers of unique
initial cyclic tours, which is more usual in large instances than in small ones.

\paragraph{Comments on a Recent Method Presented in \cite{nikfarjam2022use}}

For convenience, we use NNN to refer to the recent TTP method presented in \cite{nikfarjam2022use}. Upon careful consideration, we do not compare the proposed method with NNN. There is considerable overlap between NNN and MEA2P~\cite{wuijts2019investigation}, and as such a comparison against NNN appears redundant. Furthermore, the results obtained by NNN do not appear to have compelling advantages. The detailed reasons are further discussed below.

\begin{enumerate}
\item NNN and MEA2P are both evolutionary algorithms. Both use EAX crossover operators on tours to generate neighbour TTP solutions. The only difference between the two methods is that NNN keeps the current generation in a structured form while MEA2P uses a flat one-dimensional form.

\item NNN neither provides comparisons with the most relevant MEA2P method nor does it cite MEA2P, even though the two methods are ostensibly very similar. Moreover, NNN uses problem instances that are mostly different from what MEA2P uses. Looking at the common instances, MEA2P performs better on a280 instances, while NNN performs better on eil51 instances. Based on the presented results, it is unclear whether NNN is actually better than MEA2P. In this paper, we have already shown MEA2P performs better than the proposed method on small instances, while the proposed method is better on large instances.

\item NNN uses dynamic programming for KP but only for tiny instances with at most 280 cities. In contrast, our benchmark instances have the numbers of cities in the range of 76 to 33810. For larger instances (maximum 4461 cities), NNN uses a bitflip local search method instead of dynamic programming. This indicates that dynamic programming does not scale up in large problem instances. Indeed, the paper on NNN also states that.
\end{enumerate}

\subsubsection{Best Objective Values Obtained}

\tablename~\ref{tab:bests} shows the best objective values obtained by the CoCo variants against those obtained by other existing solvers when running for 10 minutes. The best objective values for other solvers are obtained from the results in Section~\ref{sec:sotaComparison}, from the results reported in~\cite{wuijts2019investigation}, and also the results reported in~\cite{wagner2017case} (excluding the results of CS2SA solver~\cite{el2016population} due to a faulty evaluation in it %in~\cite{wagner2017case})
as reported in~\cite{wuijts2019investigation}). Notice that CoCo variants obtain new best results on most large problem instances.

\begin{table}[!tbp]
\caption
  {Best objective values obtained by CoCo variants and other algorithms, each running for $10$ minutes on each instance in each of the three categories. The new best objective values obtained are in boldface.}
\label{tab:bests}

\small
\centering
\begin{tabular}{|l|r|r||r|r||r|r|}
\hline
\multirow{3}{*}{\bf Instance} & \multicolumn{2}{c||}{\textbf{CatA}} & \multicolumn{2}{c||}{\textbf{CatB}} & \multicolumn{2}{c|}{\textbf{CatC}}\\  \cline{2-7}

& \multicolumn{1}{c|}{\textbf{CoCo}} & \multicolumn{1}{c||}{\textbf{Other}} & \multicolumn{1}{c|}{\textbf{CoCo}} & \multicolumn{1}{c||}{\textbf{Other}} & \multicolumn{1}{c|}{\textbf{CoCo}}&\multicolumn{1}{c|}{\textbf{Other}} \\

& \multicolumn{1}{c|}{\textbf{Variants}} & \multicolumn{1}{c||}{\textbf{Solvers}} & \multicolumn{1}{c|}{\textbf{Variants}} & \multicolumn{1}{c||}{\textbf{Solvers}} & \multicolumn{1}{c|}{\textbf{Variants}}&\multicolumn{1}{c|}{\textbf{Solvers}} \\ \hline \hline

eil76 & \textbf{4109} & \textbf{4109} & 22464 & \textbf{23278} & 88211 & \textbf{88386} \\
kroA100 & 4881 & \textbf{4976} & 45812 & \textbf{46633} & 159112 & \textbf{159135} \\
ch130 & 9632 & \textbf{9682} & 61842 & \textbf{62496} & \textbf{207902} & 207654 \\
u159 & 8979 & \textbf{9064} & \textbf{61077} & 60968 & \textbf{249875} & \textbf{249875} \\ \hline
a280 & \textbf{18702} & 18452 & \textbf{116458} & 115252 & \textbf{429138} & 429082 \\
u574 & \textbf{28282} & 27238 & \textbf{261515} & 257912 & \textbf{970343} & 969247 \\
u724 & \textbf{51427} & 50402 & \textbf{323123} & 313735 & \textbf{1209029} & 1200310 \\
dsj1000 & \textbf{144426} & 144219 & \textbf{372837} & 352185 & \textbf{1496922} & 1483610 \\ \hline
rl1304 & \textbf{81921} & 81376 & \textbf{602276} & 584957 & \textbf{2214091} & 2207470 \\
fl1577 & \textbf{94066} & 93861 & \textbf{639843} & 619577 & \textbf{2500736} & 2496440 \\
d2103 & \textbf{122902} & 121981 & \textbf{927992} & 899581 & \textbf{3501889} & 3453096 \\
pcb3038 & \textbf{162321} & 160733 & \textbf{1205850} & 1190198 & \textbf{4600973} & 4596672 \\ \hline
fnl4461 & \textbf{265322} & 263040 & \textbf{1653828} & 1631325 & \textbf{6575472} & 6563377 \\
pla7397 & \textbf{402199} & 395992 & \textbf{4485629} & 4452480 & \textbf{14572352} & 14304342 \\
rl11849 & \textbf{716458} & 709512 & \textbf{4823625} & 4690137 & \textbf{18569005} & 18394454 \\
usa13509 & \textbf{817069} & 810455 & \textbf{8343799} & 8137189 & \textbf{26728716} & 26626726 \\ \hline
brd14051 & \textbf{887033} & 882244 & \textbf{6854612} & 6844392 & \textbf{24361366} & 24239842 \\
d15112 & \textbf{975930} & 957409 & \textbf{7942036} & 7733280 & \textbf{27665466} & 27340647 \\
d18512 & \textbf{1088840} & 1074510 & \textbf{7582022} & 7515276 & \textbf{27951166} & 27748430 \\
pla33810 & \textbf{1928935} & 1910480 & \textbf{16332634} & 15898501 & \textbf{58900443} & 58292399 \\ \hline % 15898500 58292400
\end{tabular}
\label{}
\end{table}

\section{Conclusion}
\label{sec:conclusion}

A travelling thief problem (TTP) has profitable items scattered over cities and a thief rents a knapsack and performs a cyclic tour to collect some items and thus maximises the profit while minimises the travelling time and so the renting cost of the knapsack. Thus a TTP has two components: one component is like the travelling salesman problem (TSP) and the other component is like the knapsack problem (KP). TTP is computationally NP-Hard since both TSP and KP are NP-Hard. TTP is a proxy to many real-world problems such as waste collection and mail delivery.

TTP research has made significant progress lately. However, most existing TTP methods do not explicitly exploit the mutual dependency of the two components and thus lack proper coordination. In this paper, we show first that a simple local search based coordination approach does not work in TTP. We then propose one coordination heuristic for changing collection plans during cyclic tour exploration and another for explicitly exploiting cyclic tours during collection plan exploration. We further propose a machine learning based coordination heuristic that captures characteristics of the human designed coordination heuristics. Our proposed coordination based approaches help our TTP solver explore better TTP solutions within given timeout limit. Consequently our proposed solver named Cooperation Coordination (CoCo) significantly outperforms existing state-of-the-art TTP solvers on a set of benchmark problems. CoCo is available from \url{https://github.com/majid75/CoCo}.

%\bmhead{Supplementary information}

%If your article has accompanying supplementary file/s please state so here.

\section*{Acknowledgments}

This research has been partly supported by Data61/CSIRO, Australia.
We would like to thank our colleagues Toby Walsh, Phil Kilby, and Regis Riveret
at Data61/CSIRO for discussions leading to the improvement of this article.
We also gratefully acknowledge the support of the Griffith University
eResearch Service \& Specialised Platforms Team and the use of the Gowonda high performance computing cluster.

% \begin{appendices}

\section*{Appendix}

\tablename{s}~\ref{tab:sotacata}, \ref{tab:sotacatb}, and \ref{tab:sotacatc} show various statistics of the objective values obtained by MATLS, S5, CS2SA*, CoCoP, and CoCoL solvers over $10$ runs of each solver on each TTP instance. \tablename{s}~\ref{tab:mea2pcata}, \ref{tab:mea2pcatb}, and \ref{tab:mea2pcatc} show various statistics of the objective values obtained by MEA2P, S5, CS2SA*, and CoCoP solvers over $10$ runs of each solver on each TTP instance. The mean, median, and standard deviations for each instance are computed over the runs of the same solver while the maximum and the minimum for each instance are computed over all the runs of all solvers that are compared together.

\setcounter{table}{0}
\renewcommand{\thetable}{A\arabic{table}}

\begin{table}[htbp]
\caption{The median, the mean, and the standard deviation (StdDev) of the objectives values obtained over $10$ runs of each of MATLS, S5, CS2SA*, CoCoP, and CoCoL solvers on each CatA instance. Emboldened values are the largest median, the largest mean, and the smallest standard deviation of the objective values over the five solvers. Moreover, the maximum and the minimum of the objectives values obtained are over all $50$ runs of all $5$ solvers; these maximums and minimums are used in RDI computation in \tablename~\ref{tab:sota-comparison}.}
\label{tab:sotacata}
\centering
\setlength{\tabcolsep}{2.5pt}
\vspace{-2ex}
\footnotesize
\begin{tabular}{|l|l|r|r|r|r|r|r|r|}
\hline
\bf CatA & \bf Metric & \bf MATLS & \bf S5 & \bf CS2SA* & CoCoP & CoCoL & \bf Min & \bf Max\\ \hline\hline

 & Median & 3655 & \textbf{4109} & 2697 & \textbf{4109} & \textbf{4109} &  &  \\
eil76 & Mean & 3711 & \textbf{4109} & 2900 & \textbf{4109} & \textbf{4109} & \multicolumn{1}{r|}{2679} & \multicolumn{1}{r|}{4109} \\
 & Sdev & 84 & \textbf{0} & 354 & \textbf{0} & \textbf{0} &  &  \\ \hline
 & Median & 4493 & 4699 & 4300 & \textbf{4783} & 4746 &  &  \\ 
kroA100 & Mean & 4540 & 4684 & 4304 & \textbf{4785} & 4758 & \multicolumn{1}{r|}{4241} & \multicolumn{1}{r|}{4875} \\ 
 & Sdev & 100 & 57 & \textbf{56} & 92 & 64 &  &  \\ \hline
 & Median & 8799 & 9404 & 8381 & \textbf{9564} & 9560 &  &  \\
ch130 & Mean & 8799 & 9400 & 8565 & \textbf{9567} & 9564 & \multicolumn{1}{r|}{7668} & \multicolumn{1}{r|}{9632} \\ 
 & Sdev & \textbf{0} & 11 & 568 & 10 & 39 &  &  \\ \hline
 & Median & 8583 & 8634 & 8459 & 8763 & \textbf{8792} &  &  \\ 
u159 & Mean & 8579 & 8634 & 8337 & 8763 & \textbf{8805} & \multicolumn{1}{r|}{7562} & \multicolumn{1}{r|}{8920} \\
 & Sdev & 54 & \textbf{0} & 279 & \textbf{0} & 40 &  &  \\ \hline
 & Median & 17706 & 18418 & 17878 & \textbf{18556} & 18437 &  &  \\ 
a280 & Mean & 17649 & 18420 & 17814 & \textbf{18561} & 18458 & \multicolumn{1}{r|}{16943} & \multicolumn{1}{r|}{18605} \\ 
 & Sdev & 151 & \textbf{13} & 503 & 21 & 34 &  &  \\ \hline
 & Median & 26279 & 27083 & 24886 & 27383 & \textbf{27915} &  &  \\ 
u574 & Mean & 26017 & 27069 & 24735 & 27453 & \textbf{27937} & \multicolumn{1}{r|}{22973} & \multicolumn{1}{r|}{28282} \\ 
 & Sdev & 529 & \textbf{69} & 1397 & 205 & 155 &  &  \\ \hline
 & Median & 49097 & 50346 & 47909 & 51109 & \textbf{51128} &  &  \\
u724 & Mean & 49223 & 50340 & 47774 & 51120 & \textbf{51185} & \multicolumn{1}{r|}{44882} & \multicolumn{1}{r|}{51419} \\
 & Sdev & 818 & \textbf{19} & 1152 & 65 & 166 &  &  \\ \hline
 & Median & 143280 & 137889 & \textbf{144219} & \textbf{144219} & \textbf{144219} &  &  \\ 
dsj1000 & Mean & 143280 & 137866 & \textbf{144219} & \textbf{144219} & \textbf{144219} & \multicolumn{1}{r|}{137661} & \multicolumn{1}{r|}{144219} \\
 & Sdev & \textbf{0} & 134 & \textbf{0} & \textbf{0} & \textbf{0} &  &  \\ \hline
 & Median & 74799 & 81111 & 74465 & \textbf{81764} & 81411 &  &  \\ 
rl1304 & Mean & 75159 & 81018 & 74527 & \textbf{81737} & 81393 & \multicolumn{1}{r|}{73049} & \multicolumn{1}{r|}{81874} \\
 & Sdev & 913 & 443 & 1183 & \textbf{80} & 327 &  &  \\ \hline
 & Median & 88254 & 93337 & 83427 & 93571 & \textbf{93585} &  &  \\ 
fl1577 & Mean & 88376 & 93260 & 82522 & 93236 & \textbf{93511} & \multicolumn{1}{r|}{77636} & \multicolumn{1}{r|}{94066} \\
 & Sdev & \textbf{235} & 641 & 2747 & 623 & 434 &  &  \\ \hline
 & Median & 112959 & 120691 & 119142 & \textbf{121863} & 121777 &  &  \\
d2103 & Mean & 112894 & 120852 & 118834 & \textbf{121914} & 121794 & \multicolumn{1}{r|}{112720} & \multicolumn{1}{r|}{122534} \\
 & Sdev & \textbf{90} & 339 & 2017 & 317 & 196 &  &  \\ \hline
 & Median & 148429 & 160189 & 147283 & 161322 & \textbf{161540} &  &  \\ 
pcb3038 & Mean & 148610 & 160203 & 146741 & 161336 & \textbf{161558} & \multicolumn{1}{r|}{141872} & \multicolumn{1}{r|}{162321} \\
 & Sdev & 1955 & \textbf{235} & 2349 & 411 & 401 &  &  \\ \hline
 & Median & 248404 & 262174 & 240835 & \textbf{264383} & 263047 &  &  \\ 
fnl4461 & Mean & 248003 & 262090 & 240884 & \textbf{264460} & 263102 & \multicolumn{1}{r|}{239569} & \multicolumn{1}{r|}{265322} \\ 
 & Sdev & 1043 & 381 & 957 & \textbf{354} & 402 &  &  \\ \hline
 & Median & 367017 & 395132 & 315595 & \textbf{398131} & 396938 &  &  \\ 
pla7397 & Mean & 369410 & 394809 & 317847 & \textbf{398464} & 397005 & \multicolumn{1}{r|}{254937} & \multicolumn{1}{r|}{401252} \\
 & Sdev & 4543 & \textbf{1117} & 36889 & 1434 & 1957 &  &  \\ \hline
 & Median & 665469 & 707624 & 647432 & \textbf{714405} & 711001 &  &  \\
rl11849 & Mean & 664653 & 707727 & 646646 & \textbf{714188} & 711019 & \multicolumn{1}{r|}{639965} & \multicolumn{1}{r|}{715379} \\ 
 & Sdev & 4205 & 1067 & 3356 & \textbf{1050} & 2053 &  &  \\ \hline
 & Median & 748684 & 808425 & 693102 & \textbf{809100} & 807049 &  &  \\ 
usa13509 & Mean & 748957 & 808104 & 695025 & \textbf{809986} & 807339 & \multicolumn{1}{r|}{658211} & \multicolumn{1}{r|}{817069} \\
 & Sdev & 2706 & \textbf{1546} & 26436 & 3293 & 1588 &  &  \\ \hline
 & Median & 818107 & 874648 & 803894 & 879175 & \textbf{880860} &  &  \\ 
brd14051 & Mean & 817558 & 874722 & 801686 & 879741 & \textbf{880868} & \multicolumn{1}{r|}{791052} & \multicolumn{1}{r|}{885314} \\
 & Sdev & 4943 & 3487 & 5732 & 3055 & \textbf{2193} &  &  \\ \hline
 & Median & 884334 & 946065 & 871629 & 962704 & \textbf{968380} &  &  \\
d15112 & Mean & 886474 & 947268 & 872601 & 962738 & \textbf{966935} & \multicolumn{1}{r|}{857514} & \multicolumn{1}{r|}{972207} \\ 
 & Sdev & 12604 & 5182 & 11641 & 6888 & \textbf{5027} &  &  \\ \hline
 & Median & 997307 & 1071275 & 882375 & \textbf{1082018} & 1076736 &  &  \\
d18512 & Mean & 997293 & 1070819 & 900469 & \textbf{1082451} & 1077209 & \multicolumn{1}{r|}{857514} & \multicolumn{1}{r|}{1087677} \\ 
 & Sdev & 3009 & \textbf{1952} & 42711 & 2751 & 4929 &  &  \\ \hline
 & Median & 1721610 & 1895380 & 1717794 & 1912398 & \textbf{1913334} &  &  \\
pla33810 & Mean & 1730589 & 1893163 & 1707087 & 1910419 & \textbf{1912406} & \multicolumn{1}{r|}{1654707} & \multicolumn{1}{r|}{1928935} \\ 
 & Sdev & 23663 & 15501 & 26741 & \textbf{7496} & 13099 &  &  \\ \hline

\end{tabular}
\end{table}

\begin{table}[htbp]
\caption{The median, the mean, and the standard deviation (StdDev) of the objectives values obtained over $10$ runs of each of MATLS, S5, CS2SA*, CoCoP, and CoCoL solvers on each CatB instance. Emboldened values are the largest median, the largest mean, and the smallest standard deviation of the objective values over the five solvers. Moreover, the maximum and the minimum of the objectives values obtained are over all $50$ runs of all $5$ solvers; these maximums and minimums are used in RDI computation in \tablename~\ref{tab:sota-comparison}.}
\label{tab:sotacatb}
\centering
\vspace{-2ex}
\setlength{\tabcolsep}{2pt}

\footnotesize
\begin{tabular}{|l|l|r|r|r|r|r|r|r|}
\hline
\bf CatB & \bf Metric & \bf MATLS & \bf S5 & \bf CS2SA* & \bf CoCoP & \bf CoCoL & \bf Min & \bf Max\\ \hline\hline

 & Median & 22357 & 21616 & 19209 & 22312 & \textbf{22443} &  &  \\
eil76 & Mean & 22278 & 21669 & 19236 & 22290 & \textbf{22440} & \multicolumn{1}{r|}{18421} & \multicolumn{1}{r|}{22464} \\ 
 & Sdev & 330 & 247 & 581 & 35 & \textbf{7} &  &  \\ \hline
 & Median & 42303 & 45687 & 40585 & \textbf{45812} & \textbf{45812} &  &  \\ 
kroA100 & Mean & 42478 & 45300 & 40059 & \textbf{45662} & 45492 & \multicolumn{1}{r|}{39271} & \multicolumn{1}{r|}{45812} \\ 
 & Sdev & 522 & 952 & 679 & \textbf{240} & 856 &  &  \\ \hline
 & Median & 61053 & 61241 & 51270 & \textbf{61703} & 61698 &  &  \\ 
ch130 & Mean & 61023 & 61241 & 52823 & 61702 & \textbf{61712} & \multicolumn{1}{r|}{50695} & \multicolumn{1}{r|}{61842} \\ 
 & Sdev & 87 & \textbf{0} & 3369 & 1 & 82 &  &  \\ \hline
 & Median & 58000 & 60550 & 58090 & 60718 & \textbf{60920} &  &  \\ 
u159 & Mean & 58105 & 60693 & 58090 & 60814 & \textbf{60899} & \multicolumn{1}{r|}{57450} & \multicolumn{1}{r|}{61067} \\ 
 & Sdev & 793 & 197 & \textbf{0} & 171 & 162 &  &  \\ \hline
 & Median & 108803 & 109925 & 112210 & \textbf{116455} & 116446 &  &  \\ 
a280 & Mean & 109996 & 109938 & 112115 & \textbf{116453} & 116444 & \multicolumn{1}{r|}{106969} & \multicolumn{1}{r|}{116458} \\ 
 & Sdev & 2431 & 24 & 2817 & \textbf{8} & 13 &  &  \\ \hline
 & Median & 253938 & 252504 & 249741 & 259990 & \textbf{260393} &  &  \\ 
u574 & Mean & 253870 & 252378 & 249049 & 260008 & \textbf{260481} & \multicolumn{1}{r|}{245615} & \multicolumn{1}{r|}{261154} \\ 
 & Sdev & 3025 & 387 & 2235 & \textbf{57} & 250 &  &  \\ \hline
 & Median & 300806 & 308549 & 306002 & \textbf{321128} & 319219 &  &  \\ 
u724 & Mean & 300875 & 308106 & 306247 & \textbf{320751} & 319403 & \multicolumn{1}{r|}{297562} & \multicolumn{1}{r|}{321343} \\ 
 & Sdev & 2991 & 910 & 5463 & \textbf{645} & 753 &  &  \\ \hline
 & Median & 341668 & 344626 & 323717 & 369395 & \textbf{370866} &  &  \\ 
dsj1000 & Mean & 342340 & 344559 & 313118 & 369270 & \textbf{371224} & \multicolumn{1}{r|}{281196} & \multicolumn{1}{r|}{372837} \\ 
 & Sdev & 5942 & 1667 & 17236 & \textbf{265} & 689 &  &  \\ \hline 
 & Median & 564558 & 580186 & 569745 & \textbf{599321} & 598310 &  &  \\ 
rl1304 & Mean & 565251 & 580406 & 568978 & \textbf{599535} & 598248 & \multicolumn{1}{r|}{553759} & \multicolumn{1}{r|}{602276} \\ 
 & Sdev & 8069 & 3257 & 7501 & 1496 & \textbf{853} &  &  \\ \hline
 & Median & 602111 & 612594 & 590353 & \textbf{627094} & 626674 &  &  \\ 
fl1577 & Mean & 603575 & 611938 & 584300 & \textbf{627066} & 627032 & \multicolumn{1}{r|}{545224} & \multicolumn{1}{r|}{630291} \\ 
 & Sdev & 8700 & 6433 & 21683 & \textbf{690} & 1312 &  &  \\ \hline 
 & Median & 848987 & 884576 & 879209 & \textbf{923371} & 923098 &  &  \\ 
d2103 & Mean & 850689 & 886123 & 868850 & \textbf{923681} & 923201 & \multicolumn{1}{r|}{800191} & \multicolumn{1}{r|}{927992} \\ 
 & Sdev & 8676 & 9347 & 26974 & \textbf{1703} & 1975 &  &  \\ \hline
 & Median & 1171195 & 1179155 & 1172570 & \textbf{1201762} & 1199959 &  &  \\ 
pcb3038 & Mean & 1170233 & 1178436 & 1170399 & \textbf{1201624} & 1200100 & \multicolumn{1}{r|}{1118448} & \multicolumn{1}{r|}{1203695} \\ 
 & Sdev & 6489 & 3664 & 20839 & \textbf{1444} & 2203 &  &  \\ \hline
 & Median & 1616550 & 1625875 & 1627208 & \textbf{1648718} & 1643725 &  &  \\ 
fnl4461 & Mean & 1617028 & 1625227 & 1622874 & \textbf{1647646} & 1645435 & \multicolumn{1}{r|}{1606609} & \multicolumn{1}{r|}{1653828} \\ 
 & Sdev & 4283 & 2593 & 8776 & \textbf{2401} & 4972 &  &  \\ \hline
 & Median & 4322850 & 4325905 & 4156569 & \textbf{4475452} & 4445034 &  &  \\ 
pla7397 & Mean & 4278065 & 4346919 & 4071181 & \textbf{4466713} & 4430468 & \multicolumn{1}{r|}{3657856} & \multicolumn{1}{r|}{4485629} \\ 
 & Sdev & 138799 & 50782 & 237986 & \textbf{20194} & 29484 &  &  \\ \hline
 & Median & 4610620 & 4613870 & 4631178 & \textbf{4786319} & 4732950 &  &  \\ 
rl11849 & Mean & 4606189 & 4618037 & 4637653 & \textbf{4788987} & 4734139 & \multicolumn{1}{r|}{4548692} & \multicolumn{1}{r|}{4823625} \\ 
 & Sdev & \textbf{18154} & 19855 & 42376 & 29228 & 28800 &  &  \\ \hline
 & Median & 7827255 & 7938200 & 7944378 & \textbf{8226438} & 8142993 &  &  \\ 
usa13509 & Mean & 7827089 & 7927507 & 7844064 & \textbf{8229873} & 8137433 & \multicolumn{1}{r|}{7141090} & \multicolumn{1}{r|}{8343799} \\ 
 & Sdev & 55110 & \textbf{34894} & 327270 & 72267 & 75961 &  &  \\ \hline
 & Median & 6470290 & 6538740 & 6641818 & 6759473 & \textbf{6768365} &  &  \\ 
brd14051 & Mean & 6476988 & 6540131 & 6520074 & \textbf{6762582} & 6756279 & \multicolumn{1}{r|}{5499906} & \multicolumn{1}{r|}{6854612} \\ 
 & Sdev & 77957 & 64430 & 410024 & \textbf{42308} & 53386 &  &  \\ \hline
 & Median & 6922835 & 7091740 & 7564499 & 7710754 & \textbf{7732018} &  &  \\ 
d15112 & Mean & 6962344 & 7119176 & 7510784 & 7701890 & \textbf{7723037} & \multicolumn{1}{r|}{6792890} & \multicolumn{1}{r|}{7942036} \\ 
 & Sdev & 137967 & 94229 & 241942 & 149233 & \textbf{43069} &  &  \\ \hline
 & Median & 7101430 & 7182725 & 7357957 & \textbf{7490813} & 7394360 &  &  \\ 
d18512 & Mean & 7128864 & 7174375 & 7133843 & \textbf{7457973} & 7407453 & \multicolumn{1}{r|}{5845239} & \multicolumn{1}{r|}{7582022} \\ 
 & Sdev & 139510 & \textbf{54109} & 555368 & 67658 & 79192 &  &  \\ \hline
 & Median & 15350400 & 15588600 & 14344461 & \textbf{16230647} & 16034864 &  &  \\ 
pla33810 & Mean & 15386770 & 15622400 & 14515200 & \textbf{16231654} & 16013497 & \multicolumn{1}{r|}{13170032} & \multicolumn{1}{r|}{16332634} \\ 
 & Sdev & 199854 & 114891 & 806290 & \textbf{58750} & 161777 &  &  \\ \hline

\end{tabular}
\end{table}

\begin{table}[htbp]
\caption{The median, the mean, and the standard deviation (StdDev) of the objectives values obtained over $10$ runs of each of MATLS, S5, CS2SA*, CoCoP, and CoCoL solvers on each CatC instance. Emboldened values are the largest median, the largest mean, and the smallest standard deviation of the objective values over the five solvers. Moreover, the maximum and the minimum of the objectives values obtained are over all $50$ runs of all $5$ solvers; these maximums and minimums are used in RDI computation in \tablename~\ref{tab:sota-comparison}.}
\label{tab:sotacatc}
\centering
\vspace{-2ex}
\setlength{\tabcolsep}{2pt}
\footnotesize
\begin{tabular}{|l|l|r|r|r|r|r|r|r|}
\hline
\bf CatC & \bf Metric & \bf MATLS & \bf S5 & \bf CS2SA* & \bf CoCoP & \bf CoCoL & \bf Min & \bf Max\\ \hline\hline

 & Median & \textbf{87997} & 87455 & 87577 & 87806 & 87629 &  &  \\ 
eil76 & Mean & \textbf{87932} & 87392 & 86611 & 87668 & 87629 & \multicolumn{1}{r|}{81940} & \multicolumn{1}{r|}{88211} \\ 
 & Sdev & 298 & 634 & 1846 & \textbf{193} & 492 &  &  \\ \hline
 & Median & 155466 & 155582 & 149656 & \textbf{158777} & 158279 &  &  \\ 
kroA100 & Mean & 155478 & 155801 & 151911 & \textbf{158758} & 158224 & \multicolumn{1}{r|}{148491} & \multicolumn{1}{r|}{158777} \\ 
 & Sdev & \textbf{22} & 693 & 3182 & 59 & 535 &  &  \\ \hline
 & Median & 206855 & 207142 & 197555 & \textbf{207159} & \textbf{207159} &  &  \\ 
ch130 & Mean & 203149 & 207142 & 196913 & 207081 & \textbf{207313} & \multicolumn{1}{r|}{194155} & \multicolumn{1}{r|}{207671} \\ 
 & Sdev & 4929 & \textbf{0} & 1181 & 246 & 248 &  &  \\ \hline
 & Median & 243122 & 246472 & 246038 & \textbf{248508} & 248351 &  &  \\ 
u159 & Mean & 243426 & 246419 & 246038 & \textbf{248508} & 248395 & \multicolumn{1}{r|}{242201} & \multicolumn{1}{r|}{249875} \\ 
 & Sdev & 758 & 158 & 4045 & \textbf{9} & 75 &  &  \\ \hline
 & Median & 426891 & 429018 & 425076 & \textbf{429138} & 429099 &  &  \\ 
a280 & Mean & 426853 & 429015 & 424822 & \textbf{429137} & 429113 & \multicolumn{1}{r|}{421778} & \multicolumn{1}{r|}{429138} \\ 
 & Sdev & 1034 & 5 & 1747 & \textbf{2} & 19 &  &  \\ \hline
 & Median & 966046 & 966903 & 955741 & \textbf{969705} & 968371 &  &  \\ 
u574 & Mean & 966064 & 967017 & 950906 & \textbf{969708} & 968225 & \multicolumn{1}{r|}{916712} & \multicolumn{1}{r|}{970343} \\ 
 & Sdev & 2437 & 691 & 13981 & 455 & \textbf{419} &  &  \\ \hline
 & Median & 1188545 & 1190005 & 1173360 & 1203516 & \textbf{1206366} &  &  \\ 
u724 & Mean & 1189346 & 1190109 & 1175045 & 1202541 & \textbf{1205830} & \multicolumn{1}{r|}{1163702} & \multicolumn{1}{r|}{1209029} \\ 
 & Sdev & 4382 & \textbf{995} & 8920 & 2120 & 2282 &  &  \\ \hline
 & Median & 1477500 & 1479750 & 1424561 & \textbf{1495041} & 1492045 &  &  \\ 
dsj1000 & Mean & 1474439 & 1479970 & 1413535 & \textbf{1494506} & 1491253 & \multicolumn{1}{r|}{1352328} & \multicolumn{1}{r|}{1496922} \\ 
 & Sdev & 8935 & \textbf{1403} & 37247 & 2103 & 2708 &  &  \\ \hline
 & Median & 2183330 & 2189320 & 2154384 & \textbf{2213935} & 2210598 &  &  \\ 
rl1304 & Mean & 2187460 & 2190216 & 2155826 & \textbf{2213447} & 2210761 & \multicolumn{1}{r|}{2118284} & \multicolumn{1}{r|}{2214091} \\ 
 & Sdev & 11071 & 4980 & 19125 & \textbf{781} & 2306 &  &  \\ \hline
 & Median & 2458630 & 2471315 & 2412735 & \textbf{2485371} & 2483177 &  &  \\ 
fl1577 & Mean & 2463556 & 2473376 & 2397668 & \textbf{2487396} & 2482978 & \multicolumn{1}{r|}{2321699} & \multicolumn{1}{r|}{2500736} \\ 
 & Sdev & 13175 & 12833 & 53022 & 6146 & \textbf{4040} &  &  \\ \hline
 & Median & 3392755 & 3430185 & 3366324 & \textbf{3496222} & 3482613 &  &  \\ 
d2103 & Mean & 3401609 & 3429943 & 3387613 & \textbf{3496012} & 3482325 & \multicolumn{1}{r|}{3362735} & \multicolumn{1}{r|}{3501889} \\ 
 & Sdev & 21813 & 8662 & 32549 & \textbf{2766} & 12532 &  &  \\ \hline
 & Median & 4559385 & 4568970 & 4571545 & \textbf{4597719} & 4583180 &  &  \\ 
pcb3038 & Mean & 4558555 & 4571752 & 4560311 & \textbf{4597236} & 4582078 & \multicolumn{1}{r|}{4459818} & \multicolumn{1}{r|}{4600973} \\ 
 & Sdev & 4161 & 5776 & 38211 & \textbf{3205} & 5396 &  &  \\ \hline
 & Median & 6544210 & 6546530 & 6545047 & \textbf{6572527} & 6568717 &  &  \\ 
fnl4461 & Mean & 6547657 & 6545619 & 6541215 & \textbf{6572238} & 6567656 & \multicolumn{1}{r|}{6482021} & \multicolumn{1}{r|}{6575472} \\ 
 & Sdev & 8640 & 4487 & 23454 & \textbf{2840} & 3978 &  &  \\ \hline
 & Median & 13983950 & 14112500 & 13676547 & \textbf{14520941} & 14382473 &  &  \\ 
pla7397 & Mean & 13934720 & 14129270 & 13604651 & \textbf{14480550} & 14368424 & \multicolumn{1}{r|}{12669917} & \multicolumn{1}{r|}{14572352} \\ 
 & Sdev & 186508 & 78387 & 544124 & 81468 & \textbf{66425} &  &  \\ \hline
 & Median & 18275600 & 18231100 & 18246266 & 18355212 & \textbf{18415752} &  &  \\ 
rl11849 & Mean & 18289600 & 18228430 & 18237586 & 18381502 & \textbf{18419397} & \multicolumn{1}{r|}{17967813} & \multicolumn{1}{r|}{18569005} \\ 
 & Sdev & 34140 & \textbf{32019} & 129381 & 87469 & 69735 &  &  \\ \hline
 & Median & 25920700 & 25867750 & 26371355 & 26375964 & \textbf{26537543} &  &  \\ 
usa13509 & Mean & 25877870 & 25890400 & 25814704 & 26404068 & \textbf{26524737} & \multicolumn{1}{r|}{21710095} & \multicolumn{1}{r|}{26728716} \\ 
 & Sdev & 196513 & \textbf{92477} & 1494909 & 104279 & 154910 &  &  \\ \hline
 & Median & 23868750 & 23808650 & 24000036 & \textbf{24196359} & 24073026 &  &  \\ 
brd14051 & Mean & 23797310 & 23815520 & 23866512 & \textbf{24152244} & 24041815 & \multicolumn{1}{r|}{23284712} & \multicolumn{1}{r|}{24361366} \\ 
 & Sdev & 176543 & 114067 & 361304 & 137805 & \textbf{113208} &  &  \\ \hline
 & Median & 26004950 & 26254100 & 26907395 & 27424573 & \textbf{27461281} &  &  \\ 
d15112 & Mean & 25972990 & 26275900 & 26975427 & \textbf{27462464} & 27451255 & \multicolumn{1}{r|}{25760900} & \multicolumn{1}{r|}{27665466} \\ 
 & Sdev & \textbf{113162} & 206275 & 307541 & 136052 & 176104 &  &  \\ \hline
 & Median & 27112750 & 27245850 & 27523202 & \textbf{27833906} & 27524914 &  &  \\ 
d18512 & Mean & 27196570 & 27168630 & 27439923 & \textbf{27766028} & 27483840 & \multicolumn{1}{r|}{26802200} & \multicolumn{1}{r|}{27951166} \\ 
 & Sdev & 256922 & 196514 & 255881 & \textbf{171997} & 216132 &  &  \\ \hline 
 & Median & 58146450 & 57576050 & 57120599 & \textbf{58689650} & 58418340 &  &  \\ 
pla33810 & Mean & 58080700 & 57505880 & 56988987 & \textbf{58680600} & 58401815 & \multicolumn{1}{r|}{55713691} & \multicolumn{1}{r|}{58900443} \\ 
 & Sdev & 206467 & 259242 & 820900 & 155349 & \textbf{120691} &  &  \\ \hline

\end{tabular}
\end{table}

\begin{table}[htbp]
\caption{The median, mean, and standard deviation (StdDev) of execution times and objectives values obtained by $10$ runs of each of MEA2P and CoCoP solvers on each of the 8 small CatA instances. Emboldened values are the largest median, the largest mean, and the smallest standard deviation of the execution times and the objective values over the two solvers. The maximum and the minimum of objectives values are over all $20$ runs of both solvers; these maximums and minimums are used in RDI computation in \tablename~\ref{tab:mea2p}.}
\label{tab:mea2pcata}
\footnotesize
\centering
\begin{tabular}{|l|l|r|r||r|r|r|r|}
\hline
\multicolumn{2}{|c||}{\bf CatA} & \multicolumn{2}{c||}{\bf Exec Time (Sec.)} & \multicolumn{4}{c|}{\bf Objective Value}\\\hline
\bf Instance & \bf Metric & \bf MEA2P &
    \bf CoCoP & \bf MEA2P & \bf CoCoP & Min & Max \\ \hline \hline
 & Median & 45 & \textbf{25} & 3992 & \textbf{4103} &  &  \\
eil76 & Mean & 45 & \textbf{27} & 4017 & \textbf{4061} & 3952 & 4103 \\
 & StdDev & \textbf{2} & 5 & \textbf{56} & 68 &  &  \\ \hline
 & Median & 94 & \textbf{45} & \textbf{4976} & 4628 &  &  \\
kroA100 & Mean & 94 & \textbf{49} & \textbf{4956} & 4628 & 4628 & 4976 \\
 & StdDev & \textbf{2} & 8 & 55 & \textbf{0} &  &  \\ \hline
 & Median & 176 & \textbf{92} & \textbf{9638} & 9564 &  &  \\
ch130 & Mean & 178 & \textbf{101} & \textbf{9635} & 9547 & 9386 & 9682 \\
 & StdDev & \textbf{8} & 20 & \textbf{42} & 57 &  &  \\ \hline
 & Median & 429 & \textbf{62} & \textbf{8956} & 8763 &  &  \\
u159 & Mean & 431 & \textbf{69} & \textbf{8977} & 8763 & 8763 & 9064 \\
 & StdDev & 17 & \textbf{13} & 63 & \textbf{0} &  &  \\ \hline
 & Median & 1901 & \textbf{88} & \textbf{18937} & 18487 &  &  \\
a280 & Mean & 1909 & \textbf{99} & \textbf{18961} & 18481 & 18411 & 19422 \\
 & StdDev & 51 & \textbf{21} & 330 & \textbf{49} &  &  \\ \hline
 & Median & 14191 & \textbf{565} & \textbf{28735} & 27359 &  &  \\
u574 & Mean & 16174 & \textbf{638} & \textbf{28560} & 27391 & 27152 & 29457 \\
 & StdDev & 3470 & \textbf{126} & 672 & \textbf{121} &  &  \\ \hline
 & Median & 23401 & \textbf{530} & 50128 & \textbf{51363} &  &  \\
u724 & Mean & 23544 & \textbf{553} & 49866 & \textbf{51366} & 49067 & 51647 \\
 & StdDev & 1192 & \textbf{73} & 531 & \textbf{202} &  &  \\ \hline
 & Median & 19984 & \textbf{2106} & 134821 & \textbf{144219} &  &  \\
dsj1000 & Mean & 21384 & \textbf{2360} & 134750 & \textbf{144219} & 133724 & 144219 \\
 & StdDev & 2701 & \textbf{397} & 646 & \textbf{0} &  &  \\ \hline
\end{tabular}
\end{table}

\begin{table}[htbp]
\caption{The median, mean, and standard deviation (StdDev) of execution times and objectives values obtained by $10$ runs of each of MEA2P and CoCoP solvers on each of the 8 small CatB instances. Emboldened values are the largest median, the largest mean, and the smallest standard deviation of the execution times and the objective values over the two solvers. The maximum and the minimum of objectives values are over all $20$ runs of both solvers; these maximums and minimums are used in RDI computation in \tablename~\ref{tab:mea2p}.}
\label{tab:mea2pcatb}
\footnotesize
\centering
\begin{tabular}{|l|l|r|r||r|r|r|r|}
\hline
\multicolumn{2}{|c||}{\bf CatB} & \multicolumn{2}{c||}{\bf Exec Time (Sec.)} & \multicolumn{4}{c|}{\bf Objective Value}\\\hline
\bf Instance & \bf Metric &\bf MEA2P &
    \bf CoCoP & \bf MEA2P & \bf CoCoP & Min & Max \\ \hline \hline
 & Median & 132 & \textbf{29} & \textbf{23278} & 22177 &  &  \\
eil76 & Mean & 135 & \textbf{29} & \textbf{23228} & 22124 & 21581 & 23278 \\
 & StdDev & 7 & \textbf{1} & \textbf{158} & 201 &  &  \\ \hline
 & Median & 298 & \textbf{51} & \textbf{46633} & 45326 &  &  \\
kroA100 & Mean & 299 & \textbf{51} & \textbf{46605} & 44380 & 42095 & 46633 \\
 & StdDev & 6 & \textbf{2} & \textbf{46} & 1500 &  &  \\ \hline
 & Median & 590 & \textbf{95} & \textbf{62496} & 61290 &  &  \\
ch130 & Mean & 591 & \textbf{96} & \textbf{62403} & 61391 & 61184 & 62496 \\
 & StdDev & 28 & \textbf{3} & \textbf{150} & 209 &  &  \\ \hline
 & Median & 1030 & \textbf{77} & \textbf{61882} & 60656 &  &  \\
u159 & Mean & 1007 & \textbf{77} & \textbf{61820} & 60714 & 60626 & 62157 \\
 & StdDev & 54 & \textbf{2} & 417 & \textbf{143} &  &  \\ \hline
 & Median & 4591 & \textbf{112} & 116120 & \textbf{116379} &  &  \\
a280 & Mean & 4596 & \textbf{113} & 116084 & \textbf{116377} & 115145 & 116810 \\
 & StdDev & 145 & \textbf{5} & 540 & \textbf{52} &  &  \\ \hline
 & Median & 45807 & \textbf{618} & \textbf{263054} & 259990 &  &  \\
u574 & Mean & 46311 & \textbf{622} & \textbf{263011} & 260025 & 259874 & 267719 \\
 & StdDev & 2373 & \textbf{20} & 2384 & \textbf{111} &  &  \\ \hline
 & Median & 100049 & \textbf{626} & 320644 & \textbf{321323} &  &  \\
u724 & Mean & 101820 & \textbf{622} & 320703 & \textbf{321476} & 316293 & 326151 \\
 & StdDev & 13357 & \textbf{15} & 2692 & \textbf{1274} &  &  \\ \hline
 & Median & 293615 & \textbf{2398} & \textbf{371169} & 370511 &  &  \\
dsj1000 & Mean & 295897 & \textbf{2657} & \textbf{372407} & 370392 & 364690 & 383577 \\
 & StdDev & 46304 & \textbf{440} & 6899 & \textbf{935} &  &  \\ \hline
\end{tabular}
\end{table}

\begin{table}[htbp]
\caption{The median, mean, and standard deviation (StdDev) of execution times and objectives values obtained by $10$ runs of each of MEA2P and CoCoP solvers on each of the 8 small CatC instances. Emboldened values are the largest median, the largest mean, and the smallest standard deviation of the execution times and the objective values over the two solvers. The maximum and the minimum of objectives values are over all $20$ runs of both solvers; these maximums and minimums are used in RDI computation in \tablename~\ref{tab:mea2p}.}
\label{tab:mea2pcatc}
\centering
\footnotesize
\begin{tabular}{|l|l||r|r||r|r|r|r|}
\hline
\multicolumn{2}{|c||}{\bf CatC} & \multicolumn{2}{c||}{\bf Exec Time (Sec.)} & \multicolumn{4}{c|}{\bf Objective Value}\\\hline
\bf Instance & \bf Metric &\bf MEA2P &
    \bf CoCoP & \bf MEA2P & \bf CoCoP & Min & Max \\ \hline \hline
 & Median & 282 & \textbf{40} & 88386 & 87161 &  &  \\
eil76 & Mean & 281 & \textbf{40} & 88386 & 87054 & 85783 & 88386 \\
 & StdDev & 7 & \textbf{1} & 0 & 696 &  &  \\ \hline
 & Median & 599 & \textbf{65} & 159135 & 155585 &  &  \\
kroA100 & Mean & 601 & \textbf{65} & 159135 & 155603 & 155585 & 159135 \\
 & StdDev & 28 & \textbf{1} & 0 & 23 &  &  \\ \hline
 & Median & 1197 & \textbf{115} & 207907 & 207159 &  &  \\
ch130 & Mean & 1218 & \textbf{123} & 207569 & 206902 & 206242 & 207907 \\
 & StdDev & 87 & \textbf{24} & 556 & 367 &  &  \\ \hline
 & Median & 1870 & \textbf{101} & 252667 & 248498 &  &  \\
u159 & Mean & 1867 & \textbf{101} & 252667 & 248352 & 248133 & 252667 \\
 & StdDev & 44 & \textbf{1} & 0 & 188 &  &  \\ \hline
 & Median & 6946 & \textbf{150} & 427716 & 429135 &  &  \\
a280 & Mean & 7174 & \textbf{159} & 427369 & 429135 & 422492 & 429138 \\
 & StdDev & 490 & \textbf{29} & 2124 & 3 &  &  \\ \hline
 & Median & 88122 & \textbf{712} & 966125 & 970049 &  &  \\
u574 & Mean & 90718 & \textbf{754} & 964333 & 970025 & 953724 & 976021 \\
 & StdDev & 17801 & \textbf{134} & 7082 & 486 &  &  \\ \hline
 & Median & 167109 & \textbf{773} & 1198353 & 1205500 &  &  \\
u724 & Mean & 180938 & \textbf{778} & 1197832 & 1205628 & 1185219 & 1211213 \\
 & StdDev & 44145 & \textbf{30} & 6418 & 1086 &  &  \\ \hline
 & Median & 537341 & \textbf{2557} & 1495453 & 1499405 &  &  \\
dsj1000 & Mean & 541756 & \textbf{2645} & 1493313 & 1500609 & 1474506 & 1506740 \\
 & StdDev & 118628 & \textbf{332} & 8938 & 4849 &  &  \\ \hline
\end{tabular}
\end{table}

% \end{appendices}

\small
\clearpage

\bibliographystyle{ieee}
\bibliography{references}

\end{document}